\definecolor{bg}{rgb}{0.95,0.95,0.95}
\newtheorem{theorem}{Theorem}[section]
\newtheorem*{theorem*}{Theorem}
\newtheorem*{observation*}{Observation}
\newtheorem{lemma}[theorem]{Lemma}
\newtheorem{corollary}[theorem]{Corollary}
\newtheorem{proposition}[theorem]{Proposition}
\theoremstyle{definition}
\newtheorem{definition}[theorem]{Definition}
\newtheorem{remark}[theorem]{Remark}
\newtheorem{example}[theorem]{Example}
\newcommand{\mbb}[1]{\mathbb{#1}}
\newcommand{\mbf}[1]{\mathbf{#1}}
\newcommand{\mca}[1]{\mathcal{#1}}
\newcommand{\op}{\mathrm{op}}
\newcommand{\id}{\mathrm{id}}
\DeclareMathOperator{\Span}{span}
\DeclareMathOperator{\Tot}{Tot}
\DeclareMathOperator{\Tar}{Tar}
\DeclareMathOperator{\Nullity}{null}
\DeclareMathOperator{\ran}{img}
\newcommand{\R}{\mathbb{R}}
\newcommand{\F}{\mathbb{K}}
\newcommand{\cF}{\mathscr{F}}
\newcommand{\cV}{\mathbf{V}}
\newcommand{\cB}{\mathscr{B}}
\newcommand{\Gr}{\text{\bf Gr}}
\newcommand{\cU}{\mathscr{U}}
\newcommand{\bA}{\mathbf{A}}
\newcommand{\wbA}{\widehat{\mathbf{A}}}
\newcommand{\cA}{\mathscr{A}}
\newcommand{\bS}{\mathbf{S}}
\newcommand{\bT}{\mathbf{T}}
\newcommand{\bC}{\mathbf{C}}
\newcommand{\bF}{\mathbf{F}}
\newcommand{\cI}{\mathscr{I}}
\newcommand{\fdHilb}{\text{\bf FdHilb}}
\newcommand{\Comb}{\text{\bf N}}
\newcommand{\MixedComb}{\text{\bf M}}
\newcommand*\diff{\mathop{}\!\mathrm{d}}
\renewcommand{\phi}{\varphi}
\newcommand{\ip}[1]{\langle{#1}\rangle}
\newcommand{\set}[1]{\left\{{#1}\right\}}
\newcommand\blfootnote[1]{%
  \begingroup
  \renewcommand\thefootnote{}\footnote{#1}%
  \addtocounter{footnote}{-1}%
  \endgroup
}
\title{
Quiver Laplacians and Feature Selection
}
\author{Otto Sumray\(^{1, 2}\), Heather A. Harrington\(^{2, 3, 4, 5}\), Vidit Nanda\(^2\)}
\begin{document}
\begin{abstract}
  The challenge of selecting the most relevant features of a given dataset arises ubiquitously in data analysis and dimensionality reduction. However, features found to be of high importance for the entire dataset may not be relevant to subsets of interest, and vice versa. Given a feature selector and a fixed decomposition of the data into subsets, we describe a method for identifying selected features which are compatible with the decomposition into subsets. We achieve this by re-framing the problem of finding compatible features to one of finding sections of a suitable quiver representation. In order to approximate such sections, we then introduce a Laplacian operator for quiver representations valued in Hilbert spaces. We provide explicit bounds on how the spectrum of a quiver Laplacian changes when the representation and the underlying quiver are modified in certain natural ways. Finally, we apply this machinery to the study of peak-calling algorithms which measure chromatin accessibility in single-cell data. We demonstrate that eigenvectors of the associated quiver Laplacian yield locally and globally compatible features.
\end{abstract}

\maketitle
\blfootnote{Email of corresponding author: \url{otto.sumray@maths.ox.ac.uk}}
\blfootnote{\(^1\)\textsc{Ludwig Cancer Research, University of Oxford}}
\blfootnote{\(^2\)\textsc{Mathematical Institute, University of Oxford}}
\blfootnote{\(^3\)\textsc{Max Planck Institute 
of Molecular Cell Biology and Genetics}}
\blfootnote{\(^4\)\textsc{Centre for Systems Biology Dresden}}
\blfootnote{\(^5\)\textsc{Faculty of Mathematics, Technische Universit\"at Dresden}}

\markleft{\MakeUppercase{Otto Sumray, Heather A. Harrington, Vidit Nanda}}

\vspace{-.4in}

\section{Introduction}

When working with large quantities of unstructured data, it is often convenient -- and on occasion, indispensable -- to employ a family of real-valued {\em features} defined on the dataset. It has, for instance, become standard practice within natural language processing to embed words into Euclidean space based on their relative proximity to each other in a substantial corpus of text \cite{mikolov2013word2vec}. In this case, the coordinates serve as features. Regardless of how such an embedding has been obtained, one immediately seeks to simplify matters by identifying those features which are most relevant to the task at hand. There are several ways of quantifying the relevance of a given feature --- one may use domain-specific context, optimise a statistical quantity such as variance, or exploit the presence of labelled training data, etc. This process of {\em feature selection} forms a cornerstone of data analysis. Examples of well-known feature selectors include principal components~\cite{pearsonPCA}, LASSO~\cite{tibshirani1996LASSO} and recursive feature elimination~\cite{guyon2002RFE}, to name but a few.

We focus here on feature selection in the presence of an auxiliary decomposition of the given dataset. As a toy example of the situation which we have in mind, let us consider the plots in Figure \ref{fig:new-example} of two features defined on datasets which have been decomposed into two overlapping subsets $A$ and $B$.

\begin{figure}[ht]
    \centering
    \includegraphics[width=\textwidth]{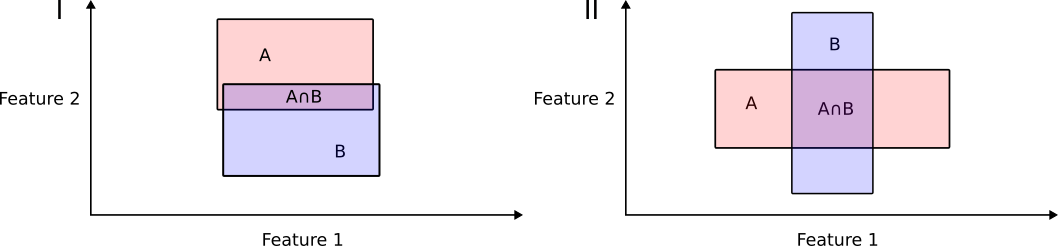}
    \caption{Two features defined on a dataset equipped with a decomposition into two overlapping subsets.}
        \label{fig:new-example}
\end{figure}
\noindent Suppose we rank the relevance of features by their variance. In Figure~\ref{fig:new-example}.I, feature 1 is most relevant for subsets \(A\), \(B\), and \(A \cap B\), but would not be more relevant than feature 2 on their union \(A \cup B\). On the other hand in Figure~\ref{fig:new-example}.II,
feature 1 is most relevant for \(A\), and feature 2 for \(B\), but both are equally relevant for \(A \cap B\) and \(A \cup B\). Such unfortunate inconsistencies arise quite frequently, especially in biological contexts --- we are often interested in features defined on datasets that are naturally organised by types, sub-types and so forth, such as disease or cell types. For such data, any feature selection process (which remains agnostic to the decomposition) is liable to select features that are irrelevant to certain subsets while ignoring features that are relevant to certain subsets.

\subsection*{This Paper} Here we present a principled framework for selecting features on a finite set $X$ that are consistent with respect to a decomposition of $X$ into a {\em cover} $\cU$. Crucially, the subsets of $X$ which comprise $\cU$ are allowed to admit arbitrary overlaps. Given the immense diversity of available feature selectors and their inner workings, we adopt an abstract approach. Let $\F$ denote either the field $\R$ of real numbers or the field $\mathbb{C}$ of complex numbers. For our purposes, a feature selector $\bS$ is any deterministic process that
\begin{enumerate}
    \item accepts as input some finite collection $\cF$ of functions $X \to \F$, and
    \item outputs a subspace $\bS^\cF_X$ of the $\F$-linear span $\cV(\cF)$ of $\cF$.
\end{enumerate} 
We assume an inner product structure on $\cV(\cF)$, but impose no further constraints on $\bS$; as such, the results and constructions of this paper may be applied verbatim to any of the standard feature selectors. Similarly, there are no constraints on the cover $\cU$ besides the requirement that $X = \bigcup_{U \in \cU} U$. 

For the purposes of these introductory remarks, let us call a nonempty subset $\sigma \subset \cU$ {\em admissible} whenever the intersection $|\sigma| := \bigcap_{U \in \sigma} U$ is nonempty. Our quest here is to isolate the largest subspace of $\bS_X^\cF$ that is compatible with respect to the decomposition of $X$ into admissible subsets. Two forms of compatibility are studied here: {\em local} and {\em global}. Given an admissible subset $\sigma \subset \cU$, a feature $v:|\sigma| \to \R$ in $\bS^\cF_{|\sigma|}$ is locally compatible with $\cU$ if for every admissible $\tau \subset \cU$ with $|\tau| \supset |\sigma|$, there is some feature $v^+:|\tau| \to \R$ in $\bS^\cF_{|\tau|}$ whose restriction to $|\sigma|$ coincides with $v$. In other words, a locally compatible feature is one which continues to be selected as we pass to larger admissible subsets. Conversely, consider a collection of features $\set{v_\sigma}$, one for each admissible $\sigma$. Such a collection is globally compatible with $\cU$ if for each pair $|\sigma| \subset |\tau|$ of admissible subsets, the orthogonal projection $\pi^\cF_\tau:\cV(\cF) \twoheadrightarrow \bS^\cF_{|\tau|}$ sends $v_\sigma$ to $v_\tau$. Thus, every globally compatible feature is generated by functions $\set{v_U:U \to \R \mid U \in \cU}$ such that the orthogonality relation \[\left(v_U - v_{U'}\right) \perp \bS^\cF_{U \cap U'}\] holds in $\cV(\cF)$ for every $U,U' \in \cU$.
 
 The first question addressed in our work here is: 
 \begin{quote}
     {\em Fix a feature selector $\bS$ and a cover $\cU$ of $X$. Given an input $\cF$ family of functions $X \to \R$, to what extent can we efficiently discover all of the locally and globally compatible features output by $\bS$?}
 \end{quote}
We approach this problem by recasting it in the language of quiver representations. The underlying quiver $Q^\cU$ is a directed graph whose vertices are admissible sets, with a single edge $\sigma \to \tau$ whenever $|\sigma| \subset |\tau|$. The output of $\bS$ with input $\cF$ induces a {representation} $\bA_\bullet = \bA^{\bS,\cF}_\bullet$ of $Q$ valued in the category of finite dimensional Hilbert spaces. Explicitly, each vertex $\sigma$ is assigned the Hilbert space  $\bA_\sigma := \bS^\cF_{|\sigma|}$, while each edge $\sigma \to \tau$ is assigned the linear map $\bA_{\sigma \to \tau}:\bA_\sigma \to \bA_\tau$ obtained by restricting $\pi^\cF_\tau$ to $\bS^\cF_{|\sigma|}$. We show that the {\bf sections} of this representation (as introduced in  \cite{seigalPrincipalComponentsQuiver2022} and generalised to the multilinear setting in \cite{muller}), recover all the globally compatible features. Similarly, sections of this representation over certain augmented sub-quivers yield all the locally compatible features of $\bS$. 

The presence of noise severely impairs the compatibility framework described above. The main difficulty is that one never obtains an exact equality between (projections of) features selected over an admissible set $|\tau|$ and features selected over an admissible subset $|\sigma|$. The best-case scenario is that features in $\bA_{\sigma}$ lie {\em near}\footnote{Here proximity is determined by the inner product metric on $\cV(\cF)$.} the restriction of some feature in $\bA_{\tau}$. In fact, exact equality may not even be desirable --- often, features are highly correlated, and collapsing together such features will further reduce the dimension of the space of selected features. Thus, there are many compelling reasons to seek a less rigid compatibility framework built around approximate sections of quiver representations. With this consideration in mind, the second question addressed in this paper is:
\begin{quote}
   {\em To what extent can we define and efficiently compute the approximate sections of a quiver representation?}
\end{quote}
We address this challenge by introducing a new Laplacian operator.

The {\bf quiver Laplacian} $L$ introduced here is a (Hermitian, positive semidefinite) endomorphism of the {\em total space} $\Tot(\bA_\bullet) := \prod_\sigma \bA_{\sigma}$. In the special case where every $\bA_{\sigma}$ is one-dimensional and all orthogonal projection maps are identities, $L$ coincides with the ordinary graph Laplacian of $Q^\cU$. We note that Laplacians have been defined for several discrete algebraic/combinatorial objects: from the original case of graphs \cite{chung1997spectral}, to simplicial complexes \cite{goldbergCombinatorialLaplaciansSimplicial2002, hirani2003discrete}, hypergraphs \cite{chung1992laplacian} and principal bundles on graphs \cite{gao2021geometry}. Most closely related to our quiver Laplacian are the analogous operators defined for cellular sheaves in \cite{hansenSpectralTheoryCellular2019} (but see also \cite{ghrist2020cellular, parada2020quiver}). The space of sections of a quiver representation coincides precisely with the kernel of the affiliated quiver Laplacian. Armed with the quiver Laplacian, we are able to define, for every $\epsilon > 0$, the {\em $\epsilon$-approximate sections} of a quiver representation as those normalised vectors $x \in \Tot(\bA^{\bS,\cF}_\bullet)$ which satisfy $\ip{x,Lx} \leq \epsilon$. Thus, the spans of eigenvectors of $L$ associated to small nonzero eigenvalues of $L$ delineate special families of approximate sections. Motivated by these considerations, we establish a host of fundamental properties for quiver Laplacians and their spectra. 

Finally, we use the spectrum of an appropriate quiver Laplacian to study feature selection (as given by a {\em peak-calling} algorithm) for chromatin accessibility\footnote{Chromatin accessibility measures the extent to which regions within the genome (among a specific family of cells) are reachable by cellular machinery, such as transcription factors.} data from \cite{satpathyMassivelyParallelSinglecell2019}. Our dataset comes from single-cell sequencing. This type of data has several properties which make the Laplacian-driven approach particularly appropriate:
\begin{enumerate}
\item The dimensionality is
massive --- a na\"ive estimate
assumes that every genomic position might be accessible, giving around 3 billion features.
Therefore, feature selection (via peak-calling)  becomes a crucial intermediate step towards making the analysis tractable.
\item Different cell types have different relevant features, and it is important to make the feature selection process compatible with the types of cells under consideration. This makes it necessary to consider peaks not only for the entire dataset, but also for certain distinguished subsets.
\item In many relevant cases where the data arise from diseases such as cancer or from the immune system (e.g. T cells), the cells might exhibit plasticity and their type is therefore not well-defined. In such cases, our distinguished subsets might overlap and should be modelled by a cover rather than a partition.
\item There is some stochasticity in the output of peak-calling algorithms --- in particular, the accessible genomic positions are only known approximately. Thus, the accessibility status of nearby positions is not independent; we capture this phenomenon via the inner product structure on feature space.
\end{enumerate}

\subsection*{Summary of Results}

Let $Q$ be a finite quiver with vertex set $Q_0$, edge set $Q_1$ and maps $s,t:Q_1 \to Q_0$ which specify the source and target vertices of each edge. We consider representations of $Q$ --- each such representation $\bA_\bullet$ assigns a finite-dimensional (real or complex) Hilbert space $\bA_v$ to every vertex $v \in Q_0$ and a linear map $\bA_e:\bA_{s(e)} \to \bA_{t(e)}$ to every edge $e \in Q_1$. We write $L_{\bA}$ for the Laplacian of $\bA_\bullet$ (see Definition \ref{def:quivlap} below for details), and arrange its eigenvalues as
\[
0 \leq \lambda_1(\bA) \leq \lambda_2(\bA) \leq \cdots \leq \lambda_n(\bA),
\]
where $n := \dim \Tot(\bA_\bullet)$ is the total dimension of $\bA_\bullet$. 

Here is a simplified version of our first main result, which provides upper bounds on Laplacian eigenvalues of a representation $\bA'_\bullet$ in terms of eigenvalues of $\bA_\bullet$ whenever there exists a family of vertex-indexed linear maps $\bA_v \to \bA'_v$.
\begin{theorem*} [I] Let $\bA_\bullet$ and $\bA'_\bullet$ be two representations of $Q$, and consider any vertex-indexed collection $\tau = \set{\tau_v:\bA_v \to \bA'_v \mid v \in Q_0}$ of linear maps. Write $q$ for the dimension of $\ker \tau$ viewed as a (block diagonal) map $\Tot(\bA_\bullet) \to \Tot(\bA'_\bullet)$. There exist positive constants $\alpha, \beta, \gamma$ -- each dependent on $\tau$ -- such that the inequality
\[
\lambda_k(\bA') \leq \alpha \cdot \lambda_{k+q}(\bA) + \beta \cdot \sqrt{\lambda_{k+q}(\bA)} + \gamma
\]
holds for every $k$ in $\set{1,2,\ldots,n-q}$.
\end{theorem*}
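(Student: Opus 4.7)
My plan is to invoke the Courant--Fischer--Weyl min-max characterisation for the Hermitian positive semidefinite operator $L_{\bA'}$,
\[
\lambda_k(\bA') \;=\; \min_{\substack{W \subset \Tot(\bA'_\bullet)\\ \dim W = k}}\;\max_{\substack{y \in W\\ \|y\|=1}}\; \langle y, L_{\bA'} y\rangle,
\]
and to produce a judicious $k$-dimensional test subspace $W$ built from $\tau$ and the low-frequency part of $L_\bA$. Specifically, let $V \subset \Tot(\bA_\bullet)$ be the span of eigenvectors of $L_\bA$ attaining the smallest $k+q$ eigenvalues, and set $W := \tau\bigl(V \cap (\ker\tau)^\perp\bigr)$. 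The Grassmann formula yields $\dim (V \cap (\ker\tau)^\perp) \geq (k+q) + (n-q) - n = k$, and since $\tau$ is injective on $(\ker\tau)^\perp$, the image $W$ has dimension at least $k$ (restrict to a $k$-dimensional subspace if necessary).

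Estimating the Rayleigh quotient $\langle y, L_{\bA'} y\rangle$ for a unit $y = \tau x \in W$ rests on two preliminary bounds and one algebraic identity. The inclusion $x \in (\ker\tau)^\perp$ gives the norm bound $\|x\| \leq \sigma_*^{-1}$, where $\sigma_*$ is the smallest nonzero singular value of $\tau$ (viewed as the block-diagonal map on totals); the inclusion $x \in V$ gives the spectral bound $\langle x, L_\bA x\rangle \leq \lambda_{k+q}(\bA)\|x\|^2$. The algebraic identity is
\[
\bA'_e\,\tau_{s(e)} x_{s(e)} - \tau_{t(e)} x_{t(e)} \;=\; D_e\, x_{s(e)} + \tau_{t(e)}\bigl(\bA_e x_{s(e)} - x_{t(e)}\bigr),
\]
where $D_e := \bA'_e \tau_{s(e)} - \tau_{t(e)}\bA_e$ quantifies the failure of $\tau$ to be a morphism of representations at the edge $e$. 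Expanding the Laplacian quadratic form edge by edge, bounding each summand via $\|a+b\|^2 \leq \|a\|^2 + 2\|a\|\|b\| + \|b\|^2$, and applying Cauchy--Schwarz to the cross term summed over $Q_1$ will produce an estimate of the form
\[
\langle y, L_{\bA'} y\rangle \;\leq\; C_0\|x\|^2 + C_1\|x\|\sqrt{\langle x, L_\bA x\rangle} + C_2\,\langle x, L_\bA x\rangle,
\]
with $C_0, C_1, C_2$ depending only on $\max_e\|D_e\|$, $\max_v\|\tau_v\|$, and the maximum out-degree of $Q$.

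Inserting the two preliminary bounds then delivers the target inequality, with $\alpha, \beta, \gamma$ each inheriting a factor of $\sigma_*^{-2}$ together with appropriate combinations of $\max_e\|D_e\|$ and $\max_v\|\tau_v\|$. The main technical obstacle I anticipate is the simultaneous control of the ratio $\|x\|/\|y\|$ and of the Rayleigh quotient of $x$: a naive choice of $k$-dimensional subspace of $V$ could intersect $\ker\tau$ nontrivially, in which case $\|x\|$ becomes unbounded and the resulting constants blow up with $k$. Intersecting $V$ with $(\ker\tau)^\perp$ satisfies both requirements at the cost of the $\sigma_*^{-2}$ prefactor, and identifying this as the correct test subspace is, in my view, the key insight of the argument.
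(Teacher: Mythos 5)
Your proposal is correct and follows essentially the same route as the paper's proof of Theorem \ref{thm:trans-quiver-eigen}: both push the span of the first $k+q$ eigenvectors of $L_{\bA}$ through $\tau$, absorb $\ker\tau$ via the same dimension count, control $\|x\|/\|\tau x\|$ on $(\ker\tau)^\perp$ by the smallest nonzero singular value (the paper's Lemma \ref{lem:lower-bound-singular}), and bound the Rayleigh quotient edgewise via the defect decomposition $\bA'_e\tau_{s(e)} - \tau_{t(e)}\bA_e$ with the triangle and Cauchy--Schwarz inequalities. The only cosmetic difference is that you invoke Courant--Fischer with the test subspace $\tau(V\cap(\ker\tau)^\perp)$ where the paper intersects $\tau(X_{k+q})$ with the span of the top eigenvectors of $L_{\bA'}$ --- two standard, equivalent packagings of the same comparison.
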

\noindent The full statement of this result, along with explicit formulas for $\alpha, \beta$ and $\gamma$, is recorded in Theorem \ref{thm:trans-quiver-eigen}. If $\tau$ is a morphism of quiver representations $\bA_\bullet \to \bA'_\bullet$, then both $\beta$ and $\gamma$ vanish and we are left with a much simpler bound $\lambda_k(\bA') \leq \alpha \cdot \lambda_{k+q}(\bA)$. 

Our next main result takes the form of a {\em spectral stability} result for feature selectors. The main challenge here is that the total spaces of two representations $\bA^{\bS,\cF}_\bullet$ and $\bA^{\bT,\cF}_\bullet$ may have different dimensions, so even the number of eigenvalues of the corresponding Laplacians could be different. Nevertheless, let $\mu_{\bS}$ and $\mu_{\bT}$ be the probability measures on $\R$ given by taking the average of Dirac measures concentrated at the eigenvalues $\set{\lambda_i(\bA^{\bS,\cF})}$ and $\set{\lambda_j(\bA^{\bT,\cF})}$, respectively. We obtain, in Corollary \ref{cor:specstab}, the following bound on the 1-Wasserstein distance $W_1(\mu_{\bS},\mu_{\bT})$ between these two measures.
\begin{theorem*}[II] Let $\bS$ and $\bT$ be two feature selectors defined on a finite set $X$ which is equipped with a cover $\cU$. Fix a finite collection $\cF$ of functions $X \to \F$. For each admissible set $\sigma$, define 
\[
c_\sigma := \left|\dim \bS^\cF_{|\sigma|} - \dim \bT^\cF_{|\sigma|}\right| \quad \text{ and } \quad \epsilon_\sigma := \text{\rm dist}_{\Gr}\left(\bS^\cF_{|\sigma|}, \bT^\cF_{|\sigma|}\right),  
\]
where the latter expression invokes the  Grassmannian distance between subspaces of $\cV(\cF)$. Define $c := \sum_\sigma c_\sigma$ and $\epsilon := \max_\sigma \epsilon_\sigma$. There exist trigonometric polynomials $f$ and $g$ such that
\[
W_1\left(\mu_{\bS},\mu_{\bT}\right) \leq \frac{|Q^\cU_1|}{m} \Big[f(\epsilon)\cdot c + g(\epsilon) \cdot (m-c)\Big],
\]
where $|Q_1^\cU|$ is the number of edges in $Q^\cU$ and $m = \max\{\dim\Tot(\bA^{\bS,\cF}_\bullet),\dim\Tot(\bA^{\bT,\cF}_\bullet)\}$.
\end{theorem*}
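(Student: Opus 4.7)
The plan is to leverage the eigenvalue interlacing provided by Theorem \ref{thm:trans-quiver-eigen} (the full form of Theorem I) by constructing vertex-wise linear comparison maps in both directions between $\bA^{\bS,\cF}_\bullet$ and $\bA^{\bT,\cF}_\bullet$, and then aggregating the resulting estimates into a Wasserstein bound via the quantile representation of $W_1$.

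\emph{Step 1: Vertex-wise comparison maps.} For each admissible $\sigma$, let $\tau_\sigma:\bS^\cF_{|\sigma|}\to\bT^\cF_{|\sigma|}$ be the restriction of the orthogonal projection $\cV(\cF)\twoheadrightarrow\bT^\cF_{|\sigma|}$ to $\bS^\cF_{|\sigma|}$, and let $\tau'_\sigma$ be defined symmetrically. The singular values of $\tau_\sigma$ are the cosines of the principal angles between $\bS^\cF_{|\sigma|}$ and $\bT^\cF_{|\sigma|}$; since the Grassmannian distance $\epsilon_\sigma$ controls these principal angles, the quantities $\|\tau_\sigma\|$, $\|\id-\tau_\sigma^*\tau_\sigma\|^{1/2}$, and more generally the constants $\alpha_\sigma,\beta_\sigma,\gamma_\sigma$ supplied by Theorem \ref{thm:trans-quiver-eigen} for this choice of $\tau_\sigma$, are trigonometric polynomials in $\epsilon_\sigma$ and hence uniformly dominated by trigonometric polynomials in $\epsilon$. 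Assembling into block-diagonal maps $\tau,\tau'$, one checks that $\dim\ker\tau\le c$ and $\dim\ker\tau'\le c$, since the kernel contribution at $\sigma$ is at most $\max\{0,\dim\bS^\cF_{|\sigma|}-\dim\bT^\cF_{|\sigma|}\}\le c_\sigma$ (and symmetrically for $\tau'$).

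\emph{Step 2: Paired interlacing.} Applying Theorem \ref{thm:trans-quiver-eigen} to $\tau$ and $\tau'$ produces, across the common range of indices,
\begin{equation*}
\lambda_k(\bA^{\bT,\cF})\le\alpha\,\lambda_{k+c}(\bA^{\bS,\cF})+\beta\sqrt{\lambda_{k+c}(\bA^{\bS,\cF})}+\gamma \quad\text{and}\quad \lambda_k(\bA^{\bS,\cF})\le\alpha'\,\lambda_{k+c}(\bA^{\bT,\cF})+\beta'\sqrt{\lambda_{k+c}(\bA^{\bT,\cF})}+\gamma',
\end{equation*}
with $\alpha,\alpha',\beta,\beta',\gamma,\gamma'$ trigonometric polynomials in $\epsilon$ that collapse to $\alpha=\alpha'=1$ and $\beta=\beta'=\gamma=\gamma'=0$ at $\epsilon=0$. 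Because every edge map of $\bA^{\bS,\cF}_\bullet$ and $\bA^{\bT,\cF}_\bullet$ is a contractive orthogonal projection, the Laplacian satisfies an \emph{a priori} bound $\lambda_{\max}(L_{\bA^{\bullet,\cF}})\le C\cdot|Q^\cU_1|$, with $C$ an absolute constant depending only on the shape of Definition \ref{def:quivlap}. Feeding this bound into the displayed inequalities converts them into two-sided absolute estimates $|\lambda_k(\bA^{\bS,\cF})-\lambda_{k\pm c}(\bA^{\bT,\cF})|\le|Q^\cU_1|\cdot g(\epsilon)$ on the matched index range, for a new trigonometric polynomial $g$ vanishing at $\epsilon=0$.

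\emph{Step 3: Wasserstein assembly and the main obstacle.} Using the quantile representation $W_1(\mu_\bS,\mu_\bT)=\int_0^1|F_\bS^{-1}(u)-F_\bT^{-1}(u)|\diff u$, partition $[0,1]$ into a subset of total length $(m-c)/m$ on which both quantile functions are governed by matched eigenvalue pairs bounded via Step 2 (contributing the $g(\epsilon)(m-c)$ term) and its complement of length $c/m$ on which unmatched eigenvalues are controlled only by the \emph{a priori} bound $\lambda_{\max}\le C|Q^\cU_1|$ (contributing the $f(\epsilon)\,c$ term). Pulling out the common factor $|Q^\cU_1|/m$ recovers the stated inequality. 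The main technical hurdle sits in Step 2: one must verify that the constants $\alpha,\beta,\gamma$ of Theorem \ref{thm:trans-quiver-eigen}, when specialised to our orthogonal-projection $\tau$, are genuine trigonometric polynomials in $\epsilon$ which degenerate to the identity comparison at $\epsilon=0$. This requires carefully expanding $\|\tau\|$, $\|\id-\tau^*\tau\|^{1/2}$ and the related norms in terms of principal-angle singular-value decompositions. A secondary bookkeeping subtlety in Step 3 is that $n_\bS$ and $n_\bT$ may differ, so the naive "sorted-pair" formula for $W_1$ fails and the quantile-based integral formulation is essential for justifying the $(m-c)/m$ versus $c/m$ partition.
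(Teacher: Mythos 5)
Your proposal follows essentially the same route as the paper: you build the cross-projection transformation $\eta_\sigma = \pi^{\bT}_\sigma\iota^{\bS}_\sigma$ vertex-by-vertex, control $\|\eta\|$, $\|\eta^+\|$ and the defect via principal angles, apply Theorem~\ref{thm:trans-quiver-eigen} to $\eta$ and its adjoint to get two-sided matched-index bounds, and then split the transport plan into a matched part of mass $(m-c)/m$ and an unmatched part of mass $c/m$ controlled by the a priori bound $|Q_1|(r+1)^2$ --- exactly the coupling constructed in Theorem~\ref{thm:spectral-distance}, merely phrased via the quantile representation of $W_1$ rather than explicit weights $w_{ij}$. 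The argument is correct.
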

\noindent We note that $c = 0$ if and only if the two feature selectors assign equidimensional spaces to each admissible set. Explicit formulas for $f$ and $g$ are provided in Section \ref{sec:stability}. Crucially, we have $g(0) = 0$; in the equidimensional case, the first term in our bound vanishes and the second term approaches $0$ as $\epsilon \to 0$.

Our next main result provides several flavours of eigenvalue interlacing inequalities for quiver Laplacians which hold when quivers (and the overlaid representations) are modified in certain natural ways.
\begin{theorem*}[III]
Let $\bA_\bullet$ be a representation of $Q$. The following inequalities hold for appropriate integers \(i\). The constants \(k, r, w_1, w_2\) which appear below depend on \(\bA_\bullet\) as well as the modification performed to create a new representation $\bA_\bullet'$ of a new quiver $Q'$.
\begin{enumerate}
    \item If \(Q^\prime\) is obtained by removing edges from \(Q\)
    and \(\bA^\prime_\bullet\) is the restriction of \(\bA_\bullet\)
    to \(Q^\prime\), then
    \[\lambda_{k+i}(\bA_\bullet) 
    \leq \lambda_{k+i}(\bA^\prime_\bullet)
    \leq \lambda_{k+r+i}(\bA_\bullet).\]
    \item If \(Q^\prime\) is obtained by removing vertices (and incident edges) from \(Q\)
    and \(\bA^\prime_\bullet\) is the restriction of \(\bA_\bullet\)
    to \(Q^\prime\) then
        \[\lambda_{i}(\bA_\bullet) - w_1
    \leq \lambda_{i}(\bA^\prime_\bullet)
    \leq \lambda_{k+i}(\bA_\bullet) - w_2.\]
    \item If \(Q^\prime\) and \(\bA^\prime_\bullet\)
    are the result of performing an admissible homotopy
    of \(Q\), then
        \[\varphi^{-2}\lambda_{i}(\bA_\bullet) 
    \leq \lambda_{i}(\bA^\prime_\bullet)
    \leq \varphi^2 \lambda_{i}(\bA_\bullet)\]
    where \(\varphi\) is the golden ratio.
    \item If \(Q^\prime\) and \(\bA^\prime_\bullet\)
    are the result of performing an admissible Kron
    reduction
    of \(\bA_\bullet\), then
        \[\lambda_{i}(\bA_\bullet) 
    \leq \lambda_{i}(\bA^\prime_\bullet)
    \leq \lambda_{i + r}(\bA_\bullet).\]
\end{enumerate}
\end{theorem*}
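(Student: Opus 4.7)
The plan is to prove each of the four parts separately by identifying the explicit algebraic relation between $L_\bA$ and $L_{\bA'}$ induced by the modification, and then invoking the appropriate eigenvalue comparison theorem (Weyl, Cauchy, or a multiplicative conjugation bound).

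For part (1), I would exploit the expected edge-wise decomposition of the quiver Laplacian, $L_\bA = \sum_{e \in Q_1} L_{\bA,e}$, as a sum of positive semidefinite edge-supported summands. Removing a subset of edges then corresponds to deleting the associated summands, a positive semidefinite perturbation of rank at most $r$ (the total rank of the deleted edge contributions). Both inequalities follow from the classical rank-$r$ eigenvalue interlacing theorem: the spectrum of $L_{\bA'}$ is sandwiched between the spectrum of $L_\bA$ and its $r$-fold index shift. Part (2) then reduces to combining part (1) with Cauchy's interlacing theorem. After first deleting all edges incident to the removed vertices (handled by part (1)), the remaining step is a compression of the Laplacian to the principal submatrix indexed by the retained vertices. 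The index shift $k$ is the dimension of the discarded block, and the additive constants $w_1, w_2$ arise from bounding the diagonal edge-local contributions that get absorbed by the compression, in terms of the operator norms of the affected edge maps.

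Part (3) is the step I expect to be the main technical obstacle. The plan is to realize an admissible homotopy as a conjugation $L_{\bA'} = \Phi^* L_\bA \Phi$ by an invertible block-diagonal operator $\Phi$ on the total space, whose singular values are sandwiched between $\varphi^{-1}$ and $\varphi$. The multiplicative bound $\varphi^{-2}\lambda_i(\bA) \leq \lambda_i(\bA') \leq \varphi^2 \lambda_i(\bA)$ would then follow from the Courant-Fischer min-max principle applied to the conjugated operator. The golden ratio is expected to arise as the extremal root of $x^2 - x - 1 = 0$, which is the quadratic that governs the worst-case distortion when an edge is replaced by a composed or contracted alternative in the admissible homotopy. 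Verifying that $\varphi$ is the sharp constant — rather than merely a polynomial upper bound — and clearly articulating which homotopies count as \emph{admissible} for this bound to hold will be the real technical work.

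For part (4), I would identify the Kron-reduced Laplacian with the Schur complement of $L_\bA$ with respect to the block of eliminated vertices; admissibility should guarantee the invertibility of that block so the Schur complement is well-defined. The eigenvalue interlacing theorem for Schur complements of positive semidefinite Hermitian matrices then yields $\lambda_i(\bA) \leq \lambda_i(\bA') \leq \lambda_{i+r}(\bA)$, where $r$ is the total dimension of the eliminated block. In summary, three of the four parts reduce cleanly to standard interlacing facts once the correct operator-level description of the modification is pinned down; the genuine difficulty lies in part (3), where the extraction of the golden-ratio constant demands a careful choice of the admissibility condition.
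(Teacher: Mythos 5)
Parts (2) and (4) of your plan track the paper's arguments closely. For vertex removal the paper writes $B_\cA$ in block form, identifies the block $B_{11}^*B_{11}+B_{21}^*B_{21}$ of $L_\cA$ indexed by the retained vertices as a principal submatrix (Cauchy interlacing), and then peels off the block-diagonal correction $B_{21}^*B_{21}$ with Weyl's inequalities --- precisely your ``diagonal edge-local contributions,'' with $w_1,w_2$ coming from the extreme singular values of the deleted edge maps. For the Kron reduction, the admissibility hypothesis ($\cA_{v\to e}^*\cA_{v\to e}=\alpha_v\id_v$) is exactly what makes the eliminated block $D$ invertible, and the eigenvalue bound is Schur-complement interlacing, as you propose; the bulk of the paper's proof is devoted to constructing a sheaf on the reduced quiver whose Laplacian \emph{equals} $L_\cA/D$, which is needed to iterate the operation but not for the inequality itself.

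The genuine gaps are in parts (1) and (3). For (1), your Weyl argument on the rank-$\le r$ positive semidefinite perturbation $P=L_\bA-L_{\bA'}$ yields $\lambda_{i-r}(\bA_\bullet)\le\lambda_i(\bA'_\bullet)\le\lambda_i(\bA_\bullet)$: eigenvalues can only decrease when edges are removed. That is a valid interlacing but not the stated one --- the claimed lower bound $\lambda_{k+i}(\bA_\bullet)\le\lambda_{k+i}(\bA'_\bullet)$ bounds the \emph{new} eigenvalues from below by the old ones at the same index past the kernel, which points in the opposite direction from the monotonicity $\mca{E}_{\cA'}\le\mca{E}_{\cA}$ underlying your argument and cannot be extracted from it. The paper instead passes to the edge-indexed Gram matrix $L_{\cA,\op}=B_\cA B_\cA^*$, where deleting edges \emph{is} a principal-submatrix operation, applies Cauchy interlacing there, and transfers back to $B_\cA^*B_\cA$ via the equality of nonzero spectra; the shift $k$ is the kernel dimension absorbed in that transfer. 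For (3), the admissible homotopy (an edge swap $u\to w$ replaced by $v\to w$) changes the underlying quiver; the two Laplacians live on the same total space but are not congruent, and no block-diagonal $\Phi$ with $L_{\bA'}=\Phi^*L_\bA\Phi$ exists in general, so the conjugation route does not get off the ground. The actual mechanism is a two-sided comparison of Dirichlet energies: for each swapped pair one shows $\|X-Z\|^2+\|Z-Y\|^2\le\varphi^2\left(\|X-Y\|^2+\|Y-Z\|^2\right)$ using only the triangle inequality, with $\varphi^2$ arising as the maximum of $(2u^2+2u+1)/(u^2+1)$, attained at $u=\varphi$ for collinear points, and then Courant--Fischer converts the quadratic-form inequality into the eigenvalue bound. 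Your instincts that $\varphi$ is the extremal root of a quadratic and that Courant--Fischer is the transfer device are correct; the missing idea is that the comparison happens at the level of quadratic forms, not via similarity or congruence of operators.
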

\noindent Detailed versions of these results, including explicit formulas for various constants as well as precise definitions of admissible homotopy and Kron reduction, are located in Section \ref{sec:eigenvalues}. In order to state and prove these results in an appropriately general setting, we define {\em sheaves} on quivers --- these are almost identical to cellular sheaves \cite{curry2014sheaves, cgn, hansenSpectralTheoryCellular2019} on graphs, except that our graphs are allowed to have self-loops.

As mentioned above, much of our work here has been motivated by the desire to better understand and improve peak-calling algorithms on datasets $X$ which come naturally equipped with a decomposition $\cU$ into subsets. For this purpose, it is convenient to combine the quiver representations whose sections give locally and globally compatible features of a given feature selector $\bS$ into a single representation of a larger quiver. This {\em combined representation}, denoted $\Comb^{\bS,\cF}_\bullet$, is described in Section \ref{sec:global-sections}. By design, its sections correspond to features which are both locally and globally compatible with $\cU$ along $\cF$, and we naturally seek its approximate sections. Unfortunately, extracting spectral data for its Laplacian presents a serious computational challenge --- both the underlying quiver and the total dimension are typically much larger than $Q^\cU$ and $\bA^{\bS,\cF}_\bullet$. The third main result of this paper is a remedy which takes the form of a {\em reduction theorem}.

\begin{theorem*}[IV] Let $\bS$ be a feature selector on a set $X$ equipped with a cover $\cU$ such that \(Q^\cU\) is weakly connected. Let $\Comb^{\bS,\cF}_\bullet$ be the combined representation associated to a finite collection $\cF$ of functions $X \to \F$. Fix a vertex $\sigma \in Q^\cU_0$. There exists a new quiver $Q'$ with a single vertex and $|Q^\cU_0|$ edges along with a representation $\bA'_\bullet$ of $Q'$ satisfying two properties:
\begin{enumerate}
\item For each vertex $\tau$ of $Q^\cU$, let $\id_{\tau}$ be the identity map on $\bS^\cF_{|\tau|}$ and let $\iota^\cF_\tau:\bS^\cF_{|\tau|} \hookrightarrow \cV(\cF)$ denote the inclusion map (which is adjoint to $\pi^\cF_\tau)$. Then, the Laplacian of $\bA'_\bullet$ is 
\[
L_{\bA'} = \sum_{\tau \in Q^\cU_0} \left[\id_\sigma - \pi^\cF_\sigma\iota^\cF_\tau\pi^\cF_\tau\iota^\cF_\sigma\right]
\]
\item Let $n$ be the total dimension of $\Comb^{\bS,\cF}_\bullet$. There exist positive constants $C_1$ and $C_2$, which depend only on $\bA^{\bS,\cF}_\bullet$, such that the inequalities
\[
C_1 \cdot \lambda_i(\Comb^{\bS,\cF}_\bullet) \leq \lambda_i(\bA'_\bullet) \leq C_2 \cdot \lambda_{i+k}(\Comb^{\bS,\cF}_\bullet)
\]
hold for every $i$ in $\set{1,2,\ldots,n}$, with $k := n - \dim \bS^{\cF}_{|\sigma|}$.
\end{enumerate}
\end{theorem*}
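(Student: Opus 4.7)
The plan is to prove the theorem in two stages: first construct $Q'$ and $\bA'_\bullet$ so that part~(1) holds by direct computation of the sheaf Laplacian, and then establish part~(2) via a Poincar\'e-interlacing argument applied to an isometric embedding of $\Tot(\bA'_\bullet) = \bS^\cF_{|\sigma|}$ into $\Tot(\Comb^{\bS,\cF}_\bullet)$.

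For part~(1), I would take $Q'$ to be the one-vertex quiver with $|Q^\cU_0|$ self-loops $\{e_\tau\}_{\tau \in Q^\cU_0}$, assigning $\bS^\cF_{|\sigma|}$ to the unique vertex. Using the sheaves-on-quivers framework introduced earlier in the paper (which is explicitly designed to accommodate self-loops), each $e_\tau$ can be decorated with an edge space $\bS^\cF_{|\tau|}$ and restriction data chosen so that the sheaf-Laplacian contribution of $e_\tau$ equals exactly $\id_\sigma - \pi^\cF_\sigma\iota^\cF_\tau\pi^\cF_\tau\iota^\cF_\sigma$. A natural choice places $\pi^\cF_\tau\iota^\cF_\sigma$ as one restriction and calibrates the other so that the paper's sheaf-Laplacian convention yields this exact expression. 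Summing loop contributions produces the formula in~(1).

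For part~(2), I would build an isometric embedding $\Phi:\bS^\cF_{|\sigma|} \hookrightarrow \Tot(\Comb^{\bS,\cF}_\bullet)$ whose $\sigma$-block is the identity and whose block at any other vertex $\tau$ is defined via a composition of restriction/inclusion maps along a fixed path from $\sigma$ to $\tau$ in $Q^\cU$ (such paths exist since $Q^\cU$ is weakly connected). The central analytic claim is a two-sided operator sandwich
\[
C_1 \cdot L_{\bA'} \leq \Phi^*L_{\Comb^{\bS,\cF}_\bullet}\Phi \leq C_2 \cdot L_{\bA'},
\]
with $C_1, C_2 > 0$ depending only on operator norms of the restrictions $\pi^\cF_{\tau'}\iota^\cF_\tau$ appearing in $\bA^{\bS,\cF}_\bullet$. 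Combining this with Poincar\'e interlacing for the isometry $\Phi$, namely $\lambda_i(L_\Comb) \leq \lambda_i(\Phi^*L_\Comb\Phi) \leq \lambda_{i+k}(L_\Comb)$ for $k = n - \dim\bS^\cF_{|\sigma|}$, and rearranging, yields the inequalities claimed in~(2).

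The main obstacle is the lower bound in the operator sandwich. The upper bound is comparatively routine, following from edge-wise triangle inequalities applied to $L_\Comb$ together with operator-norm estimates on the compositions of restrictions defining $\Phi$. The lower bound is more delicate: one must show that the edge-energy $\langle L_\Comb \Phi x,\Phi x\rangle$ accumulated across the combined quiver controls a positive multiple of the local-defect sum $\sum_\tau(\|x\|^2 - \|\pi^\cF_\tau\iota^\cF_\sigma x\|^2)$ appearing in $\langle L_{\bA'}x,x\rangle$. Weak connectivity of $Q^\cU$ is decisive here, as it permits one to telescope each local defect $\id_\sigma - \pi^\cF_\sigma\iota^\cF_\tau\pi^\cF_\tau\iota^\cF_\sigma$ at a vertex $\tau$ through a path in $Q^\Comb$ back to $\sigma$, converting pointwise compatibility failures into cumulative edge-energies in $L_\Comb$ that can be bounded from below in terms of structural constants of $\bA^{\bS,\cF}_\bullet$.
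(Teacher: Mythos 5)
Your route is genuinely different from the paper's. The paper proves this theorem by a chain of explicit, section-preserving surgeries on the merged quiver underlying $\Comb^{\bS,\cF}_\bullet$, viewed as a sheaf: identity self-loops are deleted (Proposition~\ref{prop:remove-identities}), parallel edges are duplicated and then removed (Proposition~\ref{prop:remove-double-edges}), edges are slid around the florets using the golden-ratio homotopy bound (Proposition~\ref{prop:edge-swap}), and finally every vertex other than $\sigma$ is eliminated by the Kron/Schur reduction (Proposition~\ref{prop:schur-complement}); each step carries its own two-sided interlacing inequality, and $C_1$, $C_2$ and the shift $k$ are obtained by composing these. You instead propose a single global comparison: embed $\bS^\cF_{|\sigma|}$ into $\Tot(\Comb^{\bS,\cF}_\bullet)$ by path-transport and sandwich the compressed Laplacian between multiples of $L_{\bA'}$. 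This is more direct and avoids building the surgery toolkit (which the paper wants anyway for Theorem~\ref{thm:computing-dual-lim-e}), at the price of constants that degrade with path lengths in $Q^\cU$ rather than with the number of surgery steps. Your construction for part~(1) is essentially the endpoint of the paper's reduction and is fine, modulo checking positive semidefiniteness of $\id_\sigma - \pi^\cF_\sigma\iota^\cF_\tau\pi^\cF_\tau\iota^\cF_\sigma$ so that a square root exists.

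There are, however, two concrete problems in part~(2). First, the map $\Phi$ you describe cannot be an isometry: its $\sigma$-block is the identity and its remaining blocks are nonzero, so $\Phi^*\Phi = \id + (\text{positive terms}) \succ \id$. Cauchy/Poincar\'e interlacing therefore does not apply to $\Phi^*L\Phi$ as written; you must either replace $\Phi$ by $\Phi(\Phi^*\Phi)^{-1/2}$ and absorb the resulting congruence factors (bounded by the extreme eigenvalues of $\Phi^*\Phi$) into $C_1, C_2$, or phrase everything as a generalized eigenvalue problem. Second, the lower bound of the operator sandwich --- which you rightly flag as the crux --- is not actually proved, and it is more delicate than the telescoping slogan suggests: $L_{\bA'}$ sums the defect over \emph{all} $\tau \in Q^\cU_0$, including vertices not adjacent to $\sigma$, whereas the Dirichlet energy of $\Comb^{\bS,\cF}_\bullet$ only records defects along genuine edges and floret edges. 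To dominate $d(\iota^\cF_\sigma x, \bS^\cF_{|\tau|})^2$ one must transport along a path $\sigma = \sigma_0, \dots, \sigma_p = \tau$, and the intermediate floret energies measure distances of $\iota^\cF_{\sigma_j}(\Phi x)_{\sigma_j}$ to subspaces, not of $\iota^\cF_\sigma x$; so one must additionally control the drift $\|\iota^\cF_\sigma x - \iota^\cF_{\sigma_j}(\Phi x)_{\sigma_j}\|$, which itself has to be fed back through the edge energies, yielding constants involving the path lengths and the number of chosen paths through each edge. The analogous bookkeeping is also needed for the upper bound of the sandwich on edges $\sigma' \to \tau'$ with $\sigma', \tau' \neq \sigma$, so that direction is not routine either. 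I believe both bounds can be completed along these lines, but as written the proposal records the strategy for its hardest step rather than a proof of it.
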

\noindent The full statement of this result is recorded in Theorem \ref{thm:computing-lim-e}.

Finally, we examined a single-cell chromatin accessibility (ATAC-seq) dataset from \cite{satpathyMassivelyParallelSinglecell2019}; here $X$ is a dataset
consisting of around \(3 \times 10^4\) tumour-infiltrating T cells, while the input $\cF$ consists of 20,000 binary-valued features $X \to \R$. To build a cover $\cU$ of $X$, we used a standard notion of genomic proximity between cells, constructed a $15$-nearest neighbour graph, and expanded the communities of that graph by one hop. This produces a quiver $Q^\cU$ with 87 vertices. We then ran the MACS2 peak-calling algorithm \cite{zhang2008model} to find the 1,000 most accessible regions for each admissible set and built a quiver Laplacian for a variant of the combined representation of $\bS$ (as described in Theorem~\ref{thm:computing-dual-lim-e}). 

\begin{observation*}[V]
The first 3,000 eigenvectors of the quiver Laplacian described above had the following properties:
\begin{enumerate}
\item The majority of eigenvectors displayed a tight grouping of genomic positions (around 150 consecutive base-pairs out of approximately 3 billion) across the vertices.
\item Most eigenvectors were supported in most of the vector spaces assigned to vertices of the quiver. In particular, one such eigenvector that was supported on every vertex,
lay near the gene {\it FAM72D}, which is involved in the cell cycle and hence active for every type of T cell.
\item Conversely, the support of a handful of eigenvectors was localised to a very small subset of vertices. One such eigenvector lay near the gene {\it STAT3}, which regulates Th1, Th17 and Treg cells; this eigenvector was supported on just 3 vertices, each of which corresponds to admissible subpopulations particularly rich in these three types of T cells.
\end{enumerate}
\end{observation*}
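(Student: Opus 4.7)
The plan is to treat Observation V as an empirical/computational claim whose verification is essentially a description of a pipeline applied to the dataset of \cite{satpathyMassivelyParallelSinglecell2019}, followed by numerical inspection of the resulting spectrum. First I would fix the notation: $X$ is the set of $\approx 3 \times 10^4$ T cells, $\cF$ is the collection of 20{,}000 binary-valued accessibility features returned by MACS2 on the entire dataset, and $\bS$ is the peak-calling selector. The cover $\cU$ of $X$ is constructed from the $15$-nearest-neighbour graph in genomic feature space: one extracts graph communities by a standard algorithm (e.g.\ Leiden), one-hop expands each community to obtain overlapping subsets, and keeps the resulting $87$ vertices for $Q^\cU$. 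For each admissible $\sigma$, one reruns MACS2 on the cells of $|\sigma|$ to obtain $\bS^\cF_{|\sigma|}$, the span of the top-$1000$ peaks.

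Next I would compute the quiver Laplacian. Working directly with $\Comb^{\bS,\cF}_\bullet$ is infeasible because its total dimension scales with $\sum_\sigma \dim \bS^\cF_{|\sigma|}$ across all admissible $\sigma$, which is much larger than the space associated to any single vertex. The key step is therefore to apply Theorem IV (i.e.\ Theorem \ref{thm:computing-lim-e}, in the dual form mentioned as Theorem \ref{thm:computing-dual-lim-e}) with a convenient choice of basepoint $\sigma \in Q^\cU_0$: this collapses the computation to assembling the single endomorphism $\sum_\tau [\id_\sigma - \pi^\cF_\sigma \iota^\cF_\tau \pi^\cF_\tau \iota^\cF_\sigma]$ on the finite-dimensional Hilbert space $\bS^\cF_{|\sigma|}$. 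The eigenvalue interlacing in part (2) of Theorem IV guarantees that the small-eigenvalue eigenvectors computed from $\bA'_\bullet$ detect the same approximate sections (up to a known multiplicative distortion) as those of $\Comb^{\bS,\cF}_\bullet$, so all downstream claims can be read off the reduced operator.

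With the spectrum in hand, the three claims (1)--(3) become statements about the support patterns of the first $3{,}000$ eigenvectors of $L_{\bA'}$. For each such eigenvector $x \in \bS^\cF_{|\sigma|}$, the reduction theorem gives an explicit recipe to propagate $x$ to a tuple $(x_\tau)_{\tau \in Q^\cU_0}$ living in $\Tot(\Comb^{\bS,\cF}_\bullet)$ via the projections $\pi^\cF_\tau \iota^\cF_\sigma$; the support of $x_\tau$ in the original $20{,}000$-feature basis is then compared against the known physical locations of the MACS2 peaks along the genome. Claim (1) is verified by measuring, for each eigenvector, the base-pair diameter of the union of its dominant peaks across vertices and checking this is consistently $\lesssim 150$. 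Claims (2) and (3) are verified by counting the number of vertices $\tau$ on which $\|x_\tau\|$ exceeds a fixed threshold; the biological annotations (FAM72D, STAT3) are obtained by intersecting the genomic coordinates of the dominant peaks with standard gene-body annotations (e.g.\ from Ensembl), and the vertex support of the STAT3-localised eigenvector is cross-referenced with the Th1/Th17/Treg enrichment of the corresponding communities.

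The main obstacle is not mathematical but computational and interpretive. On the computational side, building $L_{\bA'}$ requires orthonormal bases of every $\bS^\cF_{|\tau|}$ in a common ambient inner product on $\cV(\cF)$ and the assembly of $|Q^\cU_0|=87$ composed projections; memory and numerical conditioning of these near-projection matrices are the real bottleneck, and this is precisely what Theorem IV is designed to tame. On the interpretive side, the claim that a specific eigenvector \emph{lies near} a named gene is inherently soft: one must fix a window size, a threshold for peak dominance, and a matching between genomic coordinates and gene annotations, then verify robustness of the FAM72D and STAT3 localisations to reasonable perturbations of these choices. Once the pipeline is in place and these calibration choices are fixed, Observation V reduces to reporting the outcome of a finite, deterministic computation on the published dataset.
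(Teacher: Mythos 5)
Your overall framing is right: Observation V is an empirical claim, and the paper's ``proof'' is exactly the pipeline you describe in \S\ref{sec:sc-atac} (preprocessing, $15$-NN graph, Leiden communities with one-hop expansion giving $87$ vertices, MACS2 per vertex with the top $1{,}000$ summits, a Gaussian-kernel inner product on $\cV(\cF)$, and inspection of the first $3{,}000$ eigenvectors). However, there is one substantive mismatch in the key computational step. The paper does \emph{not} use the single-vertex reduction of Theorem~\ref{thm:computing-lim-e} applied to $\Comb^{\bS,\cF}_\bullet$; it uses the \emph{mixed} representation $\MixedComb^{\bS,\cF}_\bullet$ of Definition~\ref{def:dual-combinedrep} together with Theorem~\ref{thm:computing-dual-lim-e}, whose reduced quiver retains \emph{all} $87$ vertices $v_\sigma$ (only the floret midpoints $v_{\sigma\tau}$ are Kron-reduced away). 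The eigenvectors therefore live in the $87{,}000$-dimensional space $\prod_\sigma \bS^\cF_{|\sigma|}$, and the vertex-support patterns in claims (2) and (3) are read off directly from the blocks of each eigenvector. This distinction is not cosmetic: as the paper explains in \S\ref{sec:dual-reps}, sections (and hence approximate sections) of $\Comb^{\bS,\cF}_\bullet$ correspond only to features relevant to \emph{every} $U\in\cU$, whereas the mixed representation also admits sections supported on proper subsets of vertices. The STAT3 eigenvector of claim (3), supported on just $3$ of $87$ vertices, is precisely such a locally supported approximate section; it would not appear among the small eigenvalues of your collapsed operator $\sum_\tau[\id_{\sigma_0}-\pi^\cF_{\sigma_0}\iota^\cF_\tau\pi^\cF_\tau\iota^\cF_{\sigma_0}]$, and your proposed ``propagation'' of a single-vertex eigenvector through the maps $\pi^\cF_\tau\iota^\cF_{\sigma_0}$ would generically produce nonzero components at essentially all vertices, making eigenvectors 11 and 2702 indistinguishable by vertex support.

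Two smaller corrections. First, $\cF$ consists of the top $20{,}000$ binarised $2{,}500$-base-pair tiling windows of the count matrix, not MACS2 output on the whole dataset; MACS2 is the selector $\bS$ applied per vertex. Second, you state that building the Laplacian ``requires orthonormal bases of every $\bS^\cF_{|\tau|}$'' --- the paper's remarks on computation in \S\ref{sec:reduction} explicitly avoid this by keeping the summit basis, assembling the Gram matrices $M^{\bS,\cF}_{\sigma\tau}$ of the Gaussian kernel, and solving the generalised eigenproblem $Lx=\lambda M^{\bS,\cF}_{\cU}x$ via Cholesky factorisations. With the reduction target switched to Theorem~\ref{thm:computing-dual-lim-e} and the generalised eigenproblem in place, the rest of your verification plan (diameter of genomic support for claim (1), thresholded vertex counts and gene annotation for claims (2)--(3)) matches the paper's.
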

\noindent Thus, eigenvectors of the quiver Laplacian were able to isolate relevant and consistent meta-features across different scales within the given dataset.

\subsection*{Outline} The paper is structured as follows:
\S\ref{sec:feature-selection} describes how a feature selector together with a cover of the data yields a quiver representation. In \S\ref{sec:compatsec} we describe both local and global compatibility of features with respect to the cover. We also describe how these features can be computed simultaneously as sections of a larger quiver representation.
\S\ref{sec:laplacian} introduces the quiver Laplacian 
and how its eigenvectors may be used to approximate the space of sections.
\S\ref{sec:stability} studies how a transformation of quiver representations relates
the spectra of the respective representations and bounds their spectral distance.
This then gives a stability of the spectrum of the representation associated to a feature selector.
\S\ref{sec:eigenvalues} bounds the changes in the spectrum of a quiver Laplacian when the underlying quiver
undergoes various operations.
\S\ref{sec:reduction} then uses these operations to simplify the 
computation of approximate sections of a feature selector.
Finally, in \S\ref{sec:sc-atac} we use quiver Laplacians to extract (approximately) compatible peaks in single-cell chromatin accessibility data.

\section*{Acknowledgements}
OS is supported by Ludwig Cancer Research. VN and HAH are grateful for the support provided by the UK Centre for Topological Data Analysis EPSRC grant EP/R018472/1. HAH gratefully acknowledges funding from the Royal Society. VN is partially supported by US AFOSR grant FA9550-22-1-0462. OS would like to thank Renee Hoekzema, Ka Man (Ambrose) Yim, Phil Xie, and Xin Lu for many helpful conversations. For the purpose of Open Access, the authors have applied a CC BY public copyright licence to any Author Accepted Manuscript (AAM) version arising from this submission.

\section{From Feature Selection to Quiver Representations}
\label{sec:feature-selection}
Given a finite set $X$ and a field $\F$, let $X^*$ be the $\F$-vector space which consists of all functions $X \to \F$. Denote by $\text{Sub}(X^*)$ the collection of all subsets of $X^*$ (not necessarily subspaces), and for each such subset $\cF$ write $\cV(\cF)$ to indicate the subspace of $X^*$ given by the $\F$-linear span of $\cF$. Let $\Gr_k(X^*)$ be the  Grassmannian of $k$-dimensional subspaces of $X^*$ for each $k \geq 0$, and consider the disjoint union
\[
\Gr_\infty(X^*) := \coprod_{k \geq 0} \Gr_k(X^*).
\] 
Here is the main object of study in this work.
\begin{definition}\label{def:fsp}
	A {\bf feature selector} on $X$ is any map $\bS:\text{Sub}(X^*) \to \Gr_\infty(X^*)$, such that for each input $\cF \subset X^*$, the corresponding output $\bS_X^\cF$ is a subspace of $\cV(\cF)$. 
\end{definition}
\noindent 
We now fix a feature selector $\bS$ on $X$; readers may wish to keep the following concrete example in mind throughout the remainder of \S\ref{sec:feature-selection}.  
\begin{example} Among the most well-known and ubiquitous feature selectors is {\em principal components analysis} \cite{pcabook} --- here,
	\begin{itemize}
		\item $X$ is a finite origin-centered subset of some real Euclidean space $\R^n$,
		\item $\F$ is the field $\R$ of real numbers, and
		\item $\cF$ is the set $\set{x \mapsto \ip{x,p}}$ of maps given by taking inner products with the unit vectors $p$.
	\end{itemize} 
For a fixed $k \ll n$, the output $\bS^\cF_X$ is the vector space spanned by the $k$ inner product functions (or equivalently, the $k$ unit vectors) along which the variance in $X$ is maximised.
\end{example}

 Our goal here is to find a principled framework for relating the vector subspaces $\bS^\cF_Y$ to each other for a fixed $\cF$ as $Y$ ranges over subsets of $X$. Already for principal components analysis, it is clear that there need not be any coherent relationship between $\bS^\cF_X$ and $\bS^\cF_Y$ for arbitrary $Y \subset X$ --- for instance, neither one is guaranteed to be a subspace of the other within $\cV(\cF)$. It therefore becomes necessary to constrain the family of subsets under consideration. Here we will describe how to relate the subspaces $\bS^\cF_Y$ for $Y$ ranging over the lattice of subsets generated from a chosen {\em cover} $\cU$ of $X$. We recall that any such $\cU$ is a finite collection of nonempty subsets satisfying $X = \bigcup_{U \in \cU} U$. 

\begin{definition} \label{def:coverquiver}
	The {\bf quiver associated to a cover $\cU$} of $X$ is the directed graph $Q^\cU$ whose
	\begin{enumerate}
		\item vertices are subsets $\sigma \subset \cU$ whose {\em support} $|\sigma| := \bigcap_{U \in \sigma} U$ is nonempty; and,
		\item there is a unique directed edge $\sigma \to \tau$ whenever $\sigma$ properly contains $\tau$ (and hence \(|\sigma| \subset |\tau|\)).
	\end{enumerate}
\end{definition}

We now re-assemble the algebraic data of $\bS$ to produce a {\em representation} of $Q^\cU$, i.e., an assignment of vector spaces to vertices and linear maps to edges \cite{schiffler}. Implicit in the construction below is the choice of an inner product structure on $X^*$, which allows us to define adjoints of linear maps, and hence, orthogonal projections onto subspaces. For each vertex $\sigma$ of $Q^\cU$, 
let $\iota^\cF_\sigma:\bS^\cF_{|\sigma|} \hookrightarrow \cV(\cF)$
denote the inclusion map and
let $\pi^\cF_\sigma:\cV(\cF) \twoheadrightarrow \bS^\cF_{|\sigma|}$ denote its adjoint, i.e., the orthogonal projection in $X^*$ with respect to our chosen inner product.

\begin{definition} \label{def:srep}
For each subset $\cF\subset X^*$, the {\bf $\bS$-representation} of $Q^\cU$ along $\cF$ consists of the following assignments $\bA_\bullet = \bA^{\bS,\cF}_\bullet$:
\begin{enumerate}
	\item every vertex $\sigma$ of $Q^\cU$ is assigned the vector space $\bA_\sigma := \bS^{\cF}_{|\sigma|}$, which is a subspace of $\cV(\cF|_{|\sigma|})$ and hence\footnote{Here we adopt the convention that any function $f:|\sigma| \to \F$ extends to all of $X$ by setting $f = 0$ on $X \setminus |\sigma|$. Conversely, any function $X \to \F$ automatically restricts to a function on $|\sigma|$.} of $\cV(\cF)$; moreover,
	\item every edge $\sigma \to \tau$ of $Q^\cU$ is assigned the $\F$-linear map $\bA_{\sigma \to \tau}:\bA_\sigma \to \bA_\tau$
          given by \(\pi^\cF_\tau \circ \iota^\cF_\sigma\).
\end{enumerate}
\end{definition}

It may be worth noting that the vertices of $Q^\cU$ are simplices in the {\em nerve} of the cover $\cU$ \cite{eilenberg2015foundations}, and every edge $\sigma \to \tau$ in $Q^\cU$ corresponds to a face relation (where the simplex $\tau$ lies in the boundary of the simplex $\sigma$). In particular, it follows from the definition of $Q^\cU$ above that the existence of adjacent edges $\sigma \to \tau \to \gamma$ forces the existence of the edge $\sigma \to \gamma$. We  call $\bA_\bullet$ a {\bf sheaf} on the nerve of $\cU$ if it satisfies the following {\em associativity criterion}: for every such pair of adjacent edges, the map $\bA_{\sigma \to \gamma}$ must equal the composite $\bA_{\tau \to \gamma} \circ \bA_{\sigma \to \tau} $. In general, we do not expect this associative property to hold when $\bA_\bullet$ arises from a feature selector. The next section contains our attempt to rectify this shortcoming.

\section{Compatibility and Sections}\label{sec:compatsec}

Here we use the $\bS$-representation to define and study two notions of compatibility for a feature selector $\bS$ on $X$ against a cover \(\cU\) when invoked with input $\cF \subset X^*$:
\begin{enumerate}
	\item Fix a vertex $\sigma \in Q^\cU_0$. An element \(v_\sigma \in \bA^{\bS, \cF}_{\sigma}\) is {\em locally compatible} with $\cU$ if
\[
  \iota^{\cF}_\sigma(v_\sigma) = \iota^{\cF}_{\tau} \circ \bA_{\sigma \to \tau} (v_\sigma)
\]
holds for every edge \(\sigma \to \tau\) in $Q_1$. \\
\item A collection \(\set{v_\sigma \in \bA_{\sigma} \mid \sigma \in Q^{\cU}_0}\)
is {\em globally compatible} with $\cU$ if the equality \[\bA_{\sigma \to \tau} (v_\sigma) = v_\tau\] holds for every edge \(\sigma \to \tau\) in $Q^\cU$.
\end{enumerate}

This notion of local compatibility checks whether or not (the orthogonal projections of) a given feature $v_\sigma$ which has been selected by $\bS$ over $|\sigma|$ continue to be selected over all greater subsets $|\tau| \supset |\sigma|$ generated by the cover $\cU$. In this sense, local compatibility tests the robustness of selected features as we decrease the depth of the cover.
Already for principal component analysis, it is clear that not every feature will be locally compatible, i.e., in general  \(\bA_{\sigma \to \tau}\) has a non-trivial kernel. 
Global compatibility, on the other hand, tests features horizontally across the quiver --- to be globally compatible, the vectors of a family $\set{v_\sigma}$ must map coherently onto each other under the linear maps of $\bA_\bullet$ as we  traverse zigzag paths in $Q^\cU$ of the form
\[
\sigma_1 \to \tau_1 \gets \sigma_2 \to \tau_2 \gets \cdots \gets \sigma_k \to \tau_k.
\] 

\begin{figure}[h!]
    \centering
    \includegraphics[width=.85\textwidth]{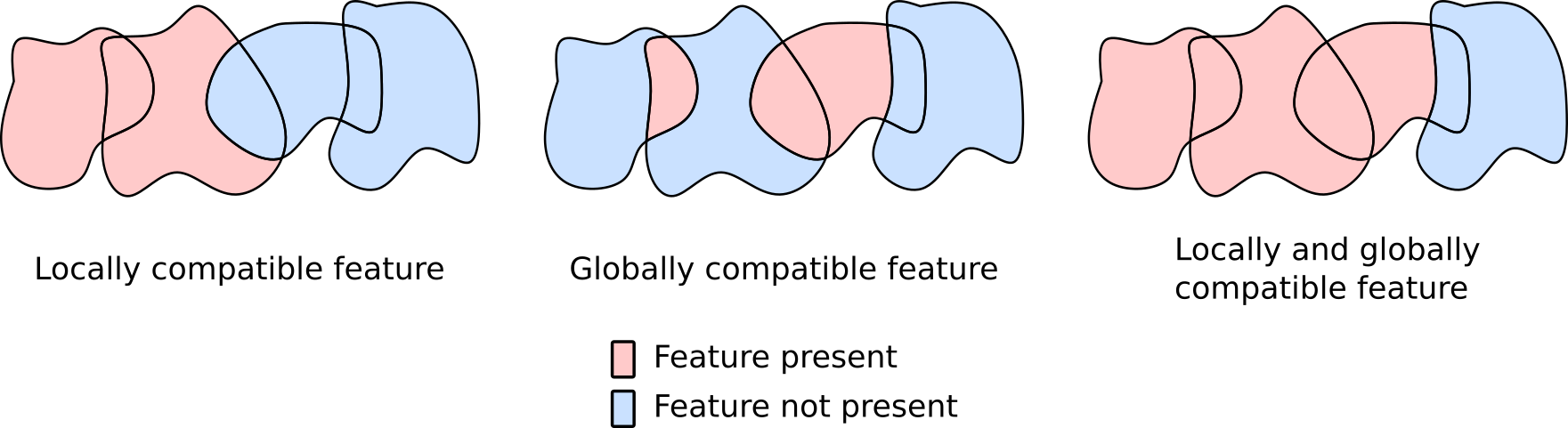}
    \caption{Example of different compatibility conditions.}
\end{figure}

We defer the study of local compatibility for now, and will focus on global compatibility.

\subsection{Global Compatibility and Sections}
\label{sec:global-sections}

Let $Q = (s,t:Q_1 \to Q_0)$ be a finite quiver and let \(\bA_\bullet\) be a representation of $Q$ valued in the category $\fdHilb(\F)$ of finite-dimensional Hilbert spaces over the field $\F \in \set{\R,\mathbb{C}}$. The {\em total space} of $\bA_\bullet$ is defined as the product
\begin{align}\label{eq:totspc}
\Tot(\bA_\bullet) := \prod_{i \in Q_0} \bA_i.
\end{align}
Since $Q$ is finite, $\Tot(\bA_\bullet)$ inherits a finite-dimensional Hilbert space structure from its $\bA_i$ factors. Motivated by the global compatibility criterion from Definition \ref{def:compat}, we highlight a relevant subspace of the total space below.

\begin{definition}
  \label{def:section}
  A tuple $\gamma := (\gamma_i \in \bA_i \mid i \in Q_0)$ in $\Tot(\bA_\bullet)$ is called a {\bf section} of $\bA_\bullet$ if for every edge $e$ in $Q_1$ we have 
  \(
    \bA_{e}(\gamma_{s(e)}) = \gamma_{t(e)}.
   \)
\end{definition}
\noindent Sections of quiver representations were introduced in \cite{seigalPrincipalComponentsQuiver2022} along with an effective algorithm for their computation --- see \cite[Sec 5.2]{seigalPrincipalComponentsQuiver2022}. Sections are closely related to the compatibility criteria from Definition \ref{def:compat}. In particular, if $\bS$ is a feature selector on a set $X$, then the sections of \(\bA^{\bS, \cF}_\bullet\) are exactly globally compatible features of
\(\bS\) against the cover \(\cU\). 
Sections can also be used to compute locally compatible features,
as we will now describe.

\subsection{Local Compatibility}

\begin{definition} \label{def:compat}
  A feature selector $\bS$ on $X$ is {\bf locally compatible} with $\cU$ on $\cF \subset X^*$ if the following triangle of vector spaces commutes for each edge $\sigma \to \tau$ of $Q^\cU$:
\[
\begin{tikzcd}
\bA^{\bS,\cF}_\sigma \arrow[rr, "\bA^{\bS,\cF}_{\sigma \to \tau}"] \arrow[dr, "\iota^\cF_\sigma", swap]
& &
\bA^{\bS,\cF}_\tau \arrow[dl, "\iota^\cF_\tau"] \\
& \cV(\cF) &
\end{tikzcd}
\]
Equivalently, the identity $\iota_\sigma^\cF = \iota_\tau^\cF \circ \pi^\cF_\tau \circ \iota^\cF_\sigma$ holds for every edge $\sigma \to \tau$ of $Q^\cU$.
\end{definition} 

It will be useful to rephrase this compatibility condition entirely within the realm of quiver representations by blowing up the above triangle to a square. To this end, we introduce another representation $\bC_\bullet = \bC^{\cF}_\bullet$ of $Q^\cU$ which may be  associated to every $\cF\subset X^*$ (and which does not depend on $\bS$). This {\em constant representation} assigns $\bC_\sigma := \cV(\cF)$ to every vertex $\sigma$ and the identity map to each edge $\sigma \to \tau$. For each vertex $\sigma$, there is an evident inclusion map
\[
  \iota^\cF_\sigma: \bA^{\bS,\cF}_\sigma \hookrightarrow \bC^\cF_\sigma,
\]
since the codomain is $\cV(\cF)$ and the domain is its subspace $\bS^\cF_{|\sigma|}$. Now, for every edge $\sigma \to \tau$ we have a diagram of four linear maps:

\begin{equation}\label{eq:comp-qrep}
\begin{tikzcd}
  \bA^{\bS,\cF}_\sigma
  \arrow[d,  "\iota^\cF_\sigma", swap]
  \arrow[rr, "\bA^{\bS,\cF}_{\sigma \to \tau}"]
  &&
  \bA^{\bS,\cF}_\tau   
  \arrow[d, "\iota^\cF_\tau"]
  \\
  \bC^\cF_\tau
  \arrow[rr, "\bC^\cF_{\sigma \to \tau}" , swap]
  &&
  \bC^\cF_\sigma
\end{tikzcd}
\end{equation}

The vertical maps $\iota^\cF$ prescribe a {\em morphism of quiver representations} from  $\bA^{\bS,\cF}_\bullet$ to  $\bC^\cF_\bullet$ if and only if the above diagram commutes for every edge $\sigma \to \tau$ in $Q^\cU$.
By definition of $\bC^\cF_\bullet$, the bottom edge of this diagram is precisely $\cV(\cF) \stackrel{=}{\longrightarrow} \cV(\cF)$, so this square is an elementary modification of the triangle from Definition \ref{def:compat}. Thus, we have arrived at the following straightforward result.

\begin{proposition}\label{prop:comp=quivmor}
The feature selector $\bS$ is locally compatible with $\cU$ on $\cF \subset X^*$ if and only if the collection of maps 
\[
\set{\iota^\cF_\sigma: \bA^{\bS,\cF}_\sigma \hookrightarrow \bC^\cF_\sigma}
\] indexed by vertices $\sigma$ of $Q^\cU$ prescribe a morphism of quiver representations $\bA^{\bS,\cF}_\bullet \to \bC^\cF_\bullet$.
\end{proposition}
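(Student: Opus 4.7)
The plan is to prove this proposition by directly unwrapping the two definitions and observing they coincide edge-by-edge. There is really no hidden obstacle here --- the proposition is an instance of repackaging a triangle into a square via the constant representation.

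First, I would recall what it means for the vertex-indexed family $\{\iota^\cF_\sigma\}$ to constitute a morphism of quiver representations $\bA^{\bS,\cF}_\bullet \to \bC^\cF_\bullet$: namely, for every edge $\sigma \to \tau$ in $Q^\cU$ the naturality square (\ref{eq:comp-qrep}) must commute, i.e.
\[
\iota^\cF_\tau \circ \bA^{\bS,\cF}_{\sigma \to \tau} \;=\; \bC^\cF_{\sigma \to \tau} \circ \iota^\cF_\sigma.
\]

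Next I would use the explicit description of the constant representation: by construction $\bC^\cF_\sigma = \cV(\cF) = \bC^\cF_\tau$ and $\bC^\cF_{\sigma \to \tau} = \id_{\cV(\cF)}$. Substituting into the naturality square above collapses it to the identity
\[
\iota^\cF_\tau \circ \bA^{\bS,\cF}_{\sigma \to \tau} \;=\; \iota^\cF_\sigma,
\]
which is precisely the equivalent reformulation of local compatibility recorded in Definition~\ref{def:compat} (recalling $\bA^{\bS,\cF}_{\sigma \to \tau} = \pi^\cF_\tau \circ \iota^\cF_\sigma$). Since the equivalence holds edge-by-edge for $\sigma \to \tau$ in $Q^\cU_1$, the \textbf{for all edges} quantifiers line up, and one direction of the claimed iff follows from the other trivially.

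Finally I would remark that both directions of the equivalence are now immediate: a morphism of representations yields the commuting triangle at each edge, and conversely the commuting triangles at every edge assemble to a morphism. The step worth being careful about is simply checking that the type of codomain on the bottom row truly is $\cV(\cF)$ on both sides (so that $\bC^\cF_{\sigma\to\tau}$ really is the identity, not merely some inclusion), but this is built into Definition~\ref{def:srep} and the setup preceding equation~(\ref{eq:comp-qrep}).
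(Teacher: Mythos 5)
Your proposal is correct and follows exactly the route the paper takes: the paper's (essentially inline) argument preceding the proposition likewise observes that $\bC^\cF_{\sigma\to\tau}$ is the identity on $\cV(\cF)$, so the naturality square (\ref{eq:comp-qrep}) collapses to the triangle of Definition~\ref{def:compat}, and the edge-by-edge quantifiers match. Nothing is missing.
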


As described earlier, a morphism \(\tau\) between
quiver representations \(\bA_\bullet\) and \(\bA^\prime_\bullet\)
is a collection
\[
\tau = \set{\tau_v: \bA_v \to \bA^\prime_v \mid v \in Q_0}
\]
of linear maps \(\tau_v\) 
such that
\begin{equation}
\label{eqn:morphism}
    \bA^\prime_e \circ \tau_{s(e)} = \tau_{t(e)} \circ \bA_{t(e)}
\end{equation}
for each edge \(e \in Q_1\).
The commutivity condition in (\ref{eqn:morphism}) can be too strong
for quiver representations obtained from data,
so we will define a {\em transformation} \(\tau\) between
quiver representations as a collection of linear maps 
\[
\tau = \set{\tau_v: \bA_v \to \bA^\prime_v \mid v \in Q_0}
\]
without the commutivity condition.

\begin{definition}
  Let \(Q\) be a quiver.
  For each vertex \(u \in Q_0\)
  define the {\bf floret} of based at \(u\) 
  to be the following quiver \(F^{Q, u}\):
  \begin{enumerate}
      \item the set of vertices is the disjoint union
  \[
  F^{Q, u}_0 = \set{u} \sqcup \set{v_e \mid (e : u \to v) \in Q_1}.
  \]
  In other words, there is a distinguished copy of $u$, which we label \(u_0\) henceforth, and an additional object denoted $v_e$ for every edge in \(Q_1\) from \(u\) to \(v\).
  
\item Define the set of edges of \(F^{Q, u}\) as 
\[
F^{Q, u}_1 = \set{e_{LD}, e_{DL} : u_0 \to v_e \mid (e : u \to v) \in Q_1}.
\]
In other words, for each edge \(e: u \to v\) in \(Q\) 
there are two distinct edges \(e_{LD}\) and \(e_{DL}\)
from \(u_0\) to \(v_e\) in \(F^{Q, u}\).
  \end{enumerate}
  Now suppose \(\bA_\bullet\) and \(\bA^\prime_\bullet\)
  are representations of the quiver \(Q\)
  with a transformation
  \(\tau: \bA_\bullet \to \bA^\prime_\bullet\).
  Define the following representation \(\bF^{\tau, u}_\bullet\)
  of \(F^{Q, u}\):
  \begin{align*}
  \bF^{\tau, u}_{u_0} &= \bA_u &
  \bF^{\tau, u}_{v_e} &= \bA^\prime_v \\
  \bF^{\tau, u}_{e_{LD}} &= \bA^\prime_e \circ \tau_u &
  \bF^{\tau, u}_{e_{DL}} &= \tau_{t(e)} \circ \bA_e.
  \end{align*}
  See Figure~\ref{fig:floret} for an illustration.

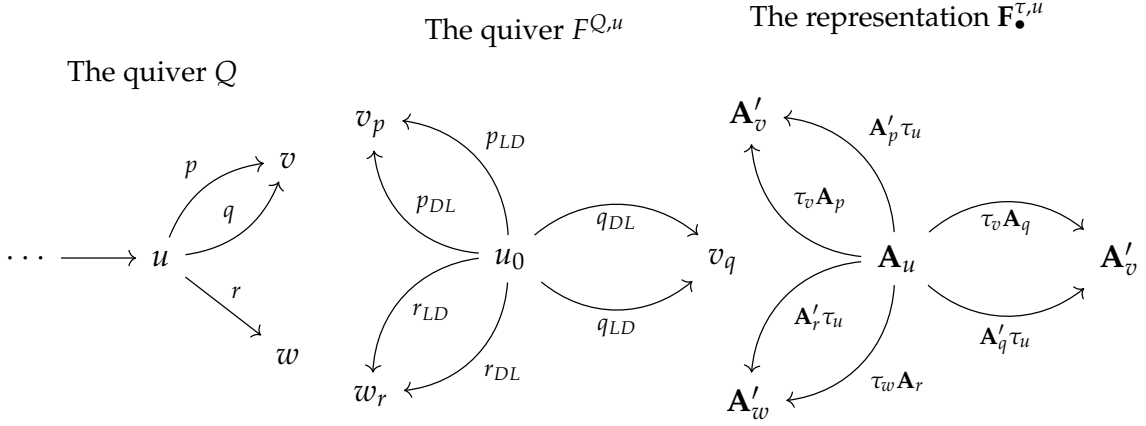
\begin{figure}[ht]
\begin{subfigure}{0.3\textwidth}
\caption*{The quiver \(Q\)}
\[
\begin{tikzcd}[ampersand replacement=\&]
  \&\& v \\
{\cdots}\&u\&   \\
  \&\& w 
\arrow[from=2-1, to=2-2]
\arrow["p", from=2-2, to=1-3, bend left=30]
\arrow["q", from=2-2, to=1-3, bend right=30]
\arrow["r", from=2-2, to=3-3]
\end{tikzcd}
\]
\end{subfigure}%
\begin{subfigure}{0.3\textwidth}
\caption*{The quiver \(F^{Q, u}\)}
  \[\begin{tikzcd}[ampersand replacement=\&, row sep=scriptsize]
	v_p \& \\
	\\
	\& {u_0} \&\& v_q \\
	\\
	w_r
	\arrow["{p_{DL}}"', bend left=40, from=3-2, to=1-1]
	\arrow["{p_{LD}}"', bend right=40, from=3-2, to=1-1]
	\arrow["{q_{DL}}"', bend left=40, from=3-2, to=3-4]
	\arrow["{q_{LD}}"', bend right=40, from=3-2, to=3-4]
	\arrow["{r_{DL}}", bend left=40, from=3-2, to=5-1]
	\arrow["{r_{LD}}", bend right=40, from=3-2, to=5-1]
\end{tikzcd}\]
\end{subfigure}%
\begin{subfigure}{0.3\textwidth}
  \caption*{The representation \(\bF^{\tau, u}_\bullet\)}
    \[\begin{tikzcd}[ampersand replacement=\&, row sep=scriptsize]
	\bA^\prime_{v} \& \\
	\\
	\& \bA_u \&\& \bA^\prime_v \\
	\\
	\bA^\prime_w
	\arrow["{\tau_v \bA_p}"', bend left=40, from=3-2, to=1-1]
	\arrow["{\bA^\prime_p \tau_u}"', bend right=40, from=3-2, to=1-1]
	\arrow["{\tau_v \bA_q}"', bend left=40, from=3-2, to=3-4]
	\arrow["{\bA^\prime_q \tau_u}"', bend right=40, from=3-2, to=3-4]
	\arrow["{\tau_w \bA_r}", bend left=40, from=3-2, to=5-1]
	\arrow["{\bA^\prime_r \tau_u}", bend right=40, from=3-2, to=5-1]
\end{tikzcd}\]
\end{subfigure}
\caption{An illustration of the floret based at \(u\) and its representation.}
\end{figure}
\label{fig:floret}
\end{definition}

Constructing the floret now allows us to compute locally compatible features: in the case where the transformation is \(\iota^\cF: \bA^{\bS, \cF}_\bullet \to \bC^\cF_\bullet\) as in Proposition \ref{prop:comp=quivmor},
the sections \(\Gamma(\bF^{\iota^{\cF}\sigma}_\bullet)\)
are exactly the locally compatible features at \(\sigma\).

\subsection{Bicompatible features}

In Section \ref{sec:laplacian} below, we will consider the problem of discovering features which satisfy approximate versions of the local and global compatibility criteria. For this purpose, it will be helpful to construct a single quiver whose sections correspond to features which are both locally and globally compatible. The first step in this construction is a method for gluing quivers and their representations. Let \(Q = (s,t:Q_1 \to Q_0)\) and \(Q' = (s',t':Q'_1 \to Q'_0)\) be two quivers. 

\begin{definition}
   For each subset $R \subset Q_0 \times Q_0'$, let \(Q \cup_{R} Q'\) be the {\bf merged quiver} whose 
  \begin{enumerate}
  	\item vertex set is the quotient $V := (Q_0 \sqcup Q'_0)/R$,
  	\item edge set is the disjoint union $E := Q_1 \sqcup Q'_1$,
   \end{enumerate}
and source/target maps are given as follows. Let $\rho$ be the composite $Q_0 \hookrightarrow Q_0 \sqcup Q_0' \twoheadrightarrow V$, and define $\rho':Q_0' \to V$ similarly. If $e \in E$ lies in $Q_1$, then its source and target vertices are $\rho \circ s(e)$ and $\rho \circ t(e)$; and if $e$ lies in $Q_1'$, then its source and target vertices are $\rho' \circ s'(e)$ and $\rho' \circ t'(e)$ respectively.  
\end{definition}

\noindent Fix representations \(\bA_\bullet\) of a quiver $Q$ and \(\bA'_\bullet\) of another quiver $Q'$.  

\begin{definition}
Let $R \subset Q_0 \times Q'_0$ be any subset for which we have $\bA_i = \bA'_j$ whenever $(i,j)$ lies in $R$. The {\bf merged representation} of $\bA_\bullet$ and $\bA'_\bullet$, denoted $(\bA \cup_R \bA')_\bullet$, is the following representation of $Q \cup_R Q'$. To vertices, it assigns
\[
(\bA \cup_R \bA')_i := \begin{cases} \bA_i & i \in Q_0 \text{ and } (i,j) \notin R \text{ for any }j \in Q'_0 \\
\bA'_i & i \in Q'_0 \text{ and } (j,i) \notin R \text{ for any }j \in Q_0 \\
\bA_i = \bA'_j & \text{if } (i,j) \in R \\
\bA_j = \bA'_i & \text{if } (j,i) \in R
\end{cases}
\]
And on edges, we have 
\[
(\bA \cup_R \bA')_e := \begin{cases} \bA_e & \text{if }e \in Q_1, \\
\bA'_e & \text{if } e \in Q_1'.
\end{cases}
\]
\end{definition}

One may embed both $\Tot(\bA_\bullet)$ and $\Tot(\bA'_\bullet)$ within the total space of the merged representation in the natural way, and it is readily seen that the space of sections of $(\bA \cup_R \bA')_\bullet$ is (isomorphic to) the intersection $\Gamma(\bA_\bullet) \cap \Gamma(\bA'_\bullet)$.

\begin{figure}[h!]
\includegraphics[scale=.85]{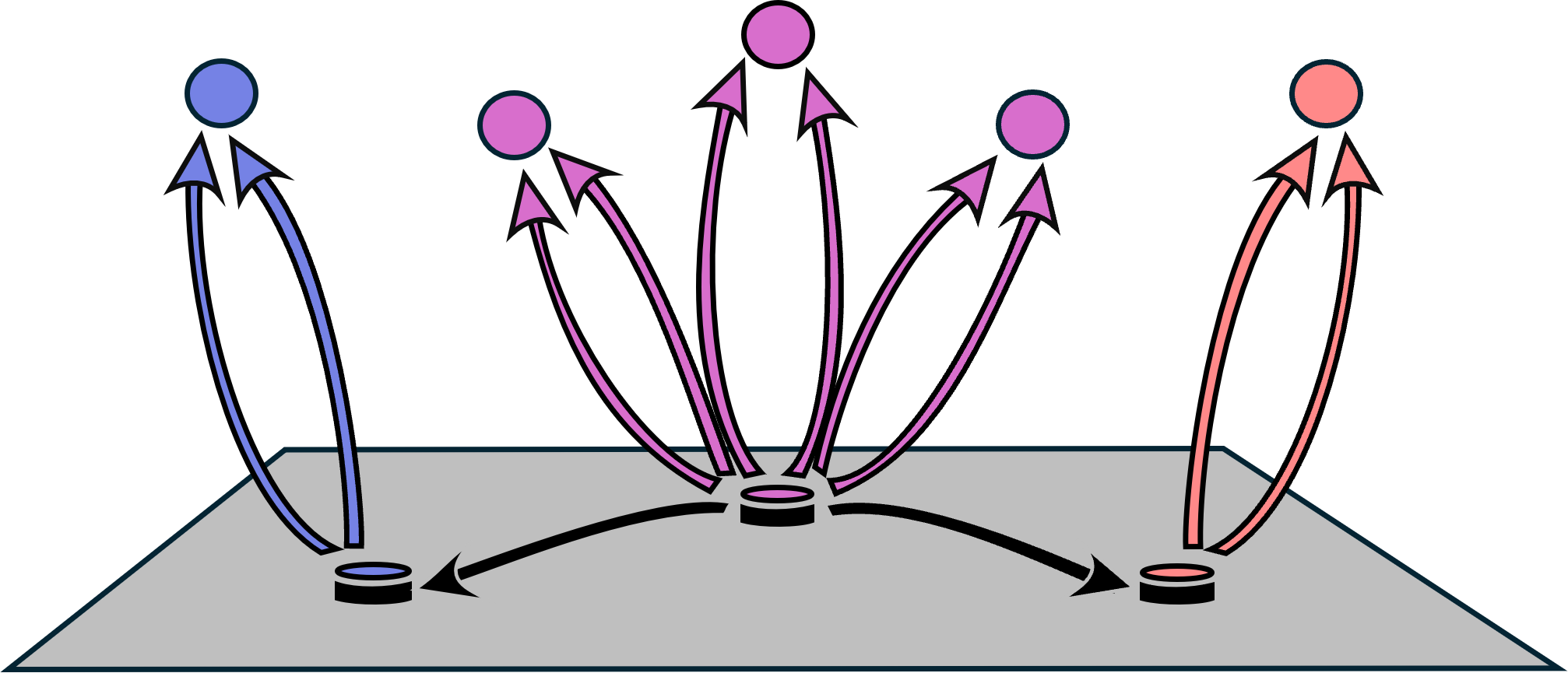}
\caption{Illustration of the merged quiver}
\end{figure}

Returning to the case of interest, let $\bS$ be a feature selector on a finite set $X$; consider a cover $\cU$ of $X$ and a subset $\cF \subset X^*$. Let $\widehat{Q}^\cU$ be the quiver given by the disjoint union of florets $\cI^\sigma$ indexed by vertices $\sigma \in Q^\cU_0$, and let $\wbA_\bullet^{\bS,\cF}$ be the representation of $\widehat{Q}^\cU$ given by the floret functors $\cB^\sigma$. We define $R \subset (Q^\cU)_0 \times \widehat{Q}^\cU_0$ as the collection 
\[
R := \set{(\sigma,\sigma_0) \mid \sigma \in Q^\cU_0},
\]
where $\sigma_0$ is the central vertex of $\cI^\sigma$.

\begin{definition} \label{def:combinedrep}
The {\bf combined $\bS$-representation} (of $Q^\cU \cup_R \widehat{Q}^\cU$, along $\cF$) is the merged representation $\Comb^{\bS,\cF}_\bullet := \left(\bA^{\bS,\cF} \cup_R \wbA^{\bS,\cF}\right)_\bullet$.
\end{definition}
\noindent By design, the sections of $\Comb^{\bS,\cF}_\bullet$ correspond to relevant features which are both locally and globally compatible with $\cU$. We call these sections the {\bf bicompatible features} of $\bS$.

\begin{example}\label{eg:bicompat}
    Suppose \(\cU = \set{U_1, U_2}\) is a cover of \(X\),
such that \(U_1 \cap U_2 \neq \varnothing\).
Define \(\tau_1 = \set{U_1}, \tau_2 = \set{U_2}\)
and \(\sigma = \set{U_1, U_2}\).
The quiver \(Q^{\cU}\) is
\[
\begin{tikzcd}
    \tau_1 & \sigma \arrow[r] \arrow[l] & \tau_2 .
\end{tikzcd}
\]
Consider a bicompatible feature, i.e., a section \(x \in \Gamma(\Comb^{\bS, \cF}_\bullet)\). By definition, we have \(\iota^\cF_{\tau_2}(x_{\tau_2}) = \iota^\cF_\sigma(x_\sigma) = \iota^\cF_{\tau_1}(x_{\tau_1})\),
hence 
\[
\Gamma(\Comb^{\bS, \cF}_\bullet) 
\cong \iota^\cF(\bS^{\cF}_{|\tau_1|}) \cap \iota^\cF(\bS^{\cF}_{|\sigma|}) \cap \iota^\cF(\bS^{\cF}_{|\tau_2|}).
\]
Thus, bicompatible features are the ones which are relevant for every \(U \in \cU\).
\end{example}

\subsection{Dual Representations} 
\label{sec:dual-reps}

If one is also interested in features which are relevant for some -- but not all -- \(U \in \cU\), then it becomes necessary to consider a different notion of bicompatibility. Given a quiver \(Q = (s,t:Q_1 \to Q_0)\), its {\em dual} \(Q^*\) is the quiver with the same vertices and edges but with source and target maps interchanged, i.e., \(s^* := t\) and \(t^* := s\). Every representation \(\bA_\bullet\) of \(Q\) valued in \(\fdHilb(\F)\) induces a unique {\em dual representation} \(\bA_\bullet^*\) of \(Q^*\), which is obtained by taking the adjoint of every edge map: namely, \((\bA^*)_e := (\bA_e)^*\) for each \(e \in Q_1\). 

When $\bA_\bullet$ is the $\bS$-representation (from Definition \ref{def:srep}) of a feature selector $\bS_\bullet$ along some input $\cF$, the fact that $\iota_\sigma^\cF$ and $\pi_\sigma^\cF$ form an adjoint pair for each vertex $\sigma \in Q^\cU_0$ 
implies that for every edge \(\sigma \to \tau\) in $Q^\cU$, we have 
\[
(\bA^{\bS, \cF})^*_{\tau \to \sigma} = (\pi_\tau^\cF \circ \iota_\sigma^\cF)^* = \pi_\sigma^\cF \circ \iota_\tau^\cF.
\]

\begin{definition} \label{def:dual-combinedrep}
The {\bf mixed $\bS$-representation} (of $(Q^\cU)^* \cup_R \widehat{Q}^\cU$, along $\cF$) is the merged representation $\MixedComb^{\bS,\cF}_\bullet := \left((\bA^{\bS,\cF})^* \cup_R \wbA^{\bS,\cF}\right)_\bullet$.
\end{definition}

Let us revisit Example \ref{eg:bicompat} with a view towards understanding the sections of $\MixedComb^{\bS,\cF}_\bullet$. These sections are given by the direct sum

\[
\Gamma(\MixedComb^{\bS,\cF}_\bullet) \cong 
(\iota_{\tau_1}(\bS^{\cF}_{|\tau_1|}) \cap \iota_{\sigma}(\bS^{\cF}_{|\sigma|})^\perp)
\oplus
(\iota_{\tau_2}(\bS^{\cF}_{|\tau_2|}) \cap \iota_{\sigma}(\bS^{\cF}_{|\sigma|})^\perp)
\oplus \Gamma(\Comb^{\bS, \cF}_\bullet).
\]
In this case,
\(\iota_{\tau_1}(\bS^{\cF}_{|\tau_1|}) \cap \iota_{\sigma}(\bS^{\cF}_{|\sigma|})^\perp\) 
corresponds to features that are relevant to \(U_1\) but not to \(U_1 \cap U_2\).
Note that 
\[
(\iota_{\tau_1}(\bS^{\cF}_{|\tau_1|}) \cap \iota_{\sigma}(\bS^{\cF}_{|\sigma|})^\perp)
\cap
(\iota_{\tau_2}(\bS^{\cF}_{|\tau_2|}) \cap \iota_{\sigma}(\bS^{\cF}_{|\sigma|})^\perp)
\]
might not be trivial, 
i.e. there could be a feature relevant to both \(U_1\) and \(U_2\) but not to \(U_1 \cap U_2\),
and hence this feature corresponds to two independent sections.

\section{The Quiver Laplacian and Approximate Sections}
\label{sec:laplacian}
So far, we have reframed compatibility-testing of feature selectors in terms of finding the sections of a quiver representation. However, the space of sections might be trivial \cite[Sec 5.1]{seigalPrincipalComponentsQuiver2022}. Rather than seeking exact solutions, we endeavour to find approximate sections.

\subsection{The Quiver Laplacian}
Let $\fdHilb(\F)$ denote the category of finite-dimensional Hilbert spaces over the field $\F \in \set{\R,\mathbb{C}}$; every morphism $A:U \to V$ in this category admits an {\em adjoint} morphism $A^*:V \to U$ characterised by $\ip{Au,v} = \ip{u,A^*v}$ for all $(u,v)$ in $U \times V$. Here we fix a finite quiver $Q = \left(s,t:Q_1 \to Q_0\right)$ and consider a representation $\bA_\bullet$ of $Q$ valued in $\fdHilb(\F)$. We recall the {total space} from \eqref{eq:totspc} and define {\em target space} of $\bA_\bullet$ as the direct product
\[
\Tar(\bA_\bullet) := \prod_{e \in Q_1} \bA_{t(e)}.
\]
Since we have assumed that $Q$ is finite, both these spaces inherit a Hilbert space structure from the individual factors of the form $\bA_v$. We will denote the identity map on $\bA_v$ by $\id_v$ rather than $\id_{\bA_v}$.

\begin{definition}
  Given an \(\fdHilb(\F)\)-valued quiver representation \(\bA_\bullet\) of \(Q\), its
  {\bf boundary operator}
  \[
    B_\bA :\Tot(\bA_\bullet) \xrightarrow{} \Tar(\bA_\bullet)
  \]
  is the linear map given in component form by
\[
  (B_{\bA})_{e, v} =
  \begin{cases}
    \bA_{e} - \id_{{v}} & \text{if } s(e) = t(e) = v, \\
    \bA_{e}                  & \text{if } s(e) = v \text{ and } t(e) \neq v,\\
    -\id_{v}         & \text{if } s(e) \neq v \text{ and } t(e) = v,\\
    \mbf{0} & \text{otherwise}.
  \end{cases}
\]
\end{definition}

If the quiver \(Q\) is finite, then it follows from a simple calculation that \(\ker{B_\bA} \) is isomorphic (as a Hilbert space) to the space \(\Gamma(Q;\bA_\bullet)\) of $\bA_\bullet$'s sections. From the boundary operator of a quiver representation, we can construct another associated operator, the Laplacian, which will allow us to compute
approximate sections.

\begin{definition}\label{def:quivlap}
 The {\bf Laplacian} of $\bA_\bullet$ is the endomorphism
  \(
    L_\bA: \Tot(\bA_\bullet) \to \Tot(\bA_\bullet)
  \)
  given by
  \[L_{\bA} = B_{\bA}^* B_{\bA}.\]
  In component form, we have
  \[
    (L_\bA)_{v, v} = 
       \sum_{\substack{e \in Q_1 \\ s(e) = v}}{\bA_e^* \bA_e} +
      \sum_{\substack{e \in Q_1 \\ t(e) = v}}{\id_{v}} -
      \sum_{\substack{e \in Q_1 \\ s(e) = t(e) = v}}{[\bA_e^* + \bA_e]}
   \]
   and for \(u \neq v\)
   \[
     (L_\bA)_{u, v} =
     - \sum_{\substack{e \in Q_1 \\ s(e) = u \\ t(e) = v}}{\bA_e^*}
     - \sum_{\substack{e \in Q_1 \\ s(e) = v \\ t(e) = u}}{\bA_e}.     
   \]
\end{definition}

Since \(\ker{L_\bA} = \ker{B_\bA}\), the kernel of \(L_\bA\) also computes the sections of \(\bA_\bullet\).
The main advantage of considering $L_\bA$ rather than $B_\bA$ in this context is that it is a Hermitian and positive semi-definite matrix. As such, it enjoys favourable spectral properties; in particular, all of
its eigenvalues are all real and non-negative.
We shall order these eigenvalues in increasing fashion, with multiplicity:
\[
\lambda_1(L_\bA) \leq \dots \leq \lambda_n(L_\bA)
\]
where \(n := \dim \Tot(\bA_\bullet)\). If \(\phi: \bA^1_\bullet \to \bA^2_\bullet\)
is an isomorphism of quiver representations
 with each \(\phi_v\) a unitary map then
 \[
      L_{\bA^1} = \Phi \circ L_{\bA^2} \circ \Phi^*,
 \]
 where \(\Phi = \prod_{v \in Q_0}{\phi_v}\),
 thus we may speak unambiguously of the eigenvalues \(\lambda_i(\bA_\bullet) := \lambda_i(L_\bA)\)
 of a quiver representation \(\bA_\bullet\).

\subsection{Approximate Sections}
To measure the distance in $\Tot(\bA_\bullet)$ between an arbitrary vector $x$ and the space $\Gamma(\bA_\bullet)$ of sections, we consider the {\bf Dirichlet energy}:
\[
  \mca{E}_{\bA}(x) := \langle x, L_{\bA}x \rangle =
  \sum_{e \in Q_1}{\|\bA_e(x_{s(e)}) - x_{t(e)}\|^2_{\bA_{t(e)}}} \in \mbb{R}_{\geq 0}
\]
Here \(\langle -, - \rangle\) is the induced inner product on \(\Tot(\bA_\bullet)\), while
and \(x_v\) denotes the component of \(x\) in \(\bA_{v}\), and $\| - \|$ denotes the norm induced by the inner product. As \(L_{\bA}\) is Hermitian, we can form an orthonormal eigenbasis
\(e_1, \dots, e_n\) where \(e_i\) corresponds to \(\lambda_i(\bA_\bullet)\). Writing \(x\) in this basis, we have
\[
  \mca{E}_\bA(x) = \mca{E}_\bA\left(\sum_{i}{x_i e_i}\right) = \sum_i{\lambda_i(\bA_\bullet)x_i^2}.
\]
Thus, \(\sqrt{\mca{E}_\bA(x)}\) is the $\lambda_\bullet$-weighted distance from \(x\) to \(\Gamma(\bA_\bullet)\);  and moreover, \[\sum_{i: \lambda_i(\bA) = 0}{x_i e_i}\] is the closest vector to \(x\) in \(\Gamma(\bA_\bullet)\).
This motivates the following definition. (A similar notion for cellular sheaves was introduced in~\cite{joslyn2020sheaf}).

\begin{definition}
\label{dfn:approx-section}    
For a quiver \(Q\) and an \(\fdHilb(\F)\)-valued
representation \(\bA_\bullet\) of \(Q\),
and {\em \(\varepsilon\)-approximate section}
of \(\bA_\bullet\) is an \(x \in \Tot(\bA_\bullet)\)
such that \(\|x\|=1\) and 
\(
\mca{E}_{\bA}(x) \leq \varepsilon.
\)
\end{definition}
\noindent 
By Theorem~\ref{thm:rayleigh},
if \(x \in \Span\{e_1, \dots, e_k\}\)
and \(\|x\| = 1\),
then \(x\) is a \(\lambda_k\)-approximate section.
Note this is not exhaustive: there
can exist \(\lambda_k\)-approximate sections
of \(\bA_\bullet\) that are not in 
the span of the first \(k\) eigenvectors,
e.g. \(\lambda_1 x_1 e_1 + \lambda_{k + 1} x_{k+1} e_{k+1}\) for small enough \(x_{k+1}\).
Regardless, to compute approximate sections of a quiver representation
one can compute eigenvectors of \(L_\bA\)
with small eigenvalues.
As we have connected approximate sections
with the spectra of the quiver representation,
we now want to understand how the spectra changes
if we perturb the quiver representation (\S\ref{sec:stability}), 
or the underlying quiver (\S\ref{sec:eigenvalues}).

\section{Spectral Stability of Feature Selectors}
\label{sec:stability}
  Let \(Q\) be a quiver and suppose \(\bA_\bullet\) and \(\wbA_\bullet\)
  are two representations of \(Q\).
  Suppose \(\tau\) is a collection \(\{(\tau_v: \bA_v \to \wbA_v) : v \in Q_0\}\) of linear maps.
  Define the {\em defect} of \(\tau\) at an edge \(e \in Q_1\) to be
  \[
    \partial(\tau, e) = \left\|\wbA_e \tau_{s(e)} - \tau_{t(v)} \bA_e\right\|,
  \]
   Here, and henceforth, the expression $\|-\|$ when applied to a linear map between Hilbert spaces will always indicate the operator norm --- for $M:V \to W$, the norm $\|M\|$ equals the supremum of $\|Mx\|_W$ over unit vectors $x \in V$. Now the {\em total defect} of \(\tau\) is:
  \[
    \partial(\tau) = \left(\sum_{e \in Q_1}\partial(\tau, e)^2\right)^{\frac{1}{2}}.
  \]
  The defect of \(\tau\) is zero if and only if when
  \(\tau\) prescribes a morphism of quiver representations. Given bases of \(\Tot(\bA_\bullet)\) and \(\Tot(\wbA_\bullet)\),
  we view \(\tau\) as a block diagonal matrix.
  Let \(\tau^+\) denote its Moore-Penrose pseudoinverse. The values \(\|\tau\|\) and \(\|\tau^+\|\) are unique up to unitary transforms of \(\Tot(\bA_\bullet)\) and \(\Tot(\wbA_\bullet)\). Finally, we let
  \[
    \kappa(\tau) := \|\tau\| \phantom{.} \|\tau^+\|.
  \]
  be the {\em generalised condition number} of \(\tau\).

  We seek to describe how the existence of $\tau$ forces a relationship between the Laplacian spectra of $\bA_\bullet$ and $\wbA_\bullet$. The first step in this direction is the following lemma.

  \begin{lemma}
    \label{lem:lower-bound-singular}
    For a matrix \(\tau\), if \(x \in (\ker{\tau})^\perp\) then
    \[
      \|\tau x\| \geq \|\tau^+\|^{-1} \|x\|.
    \]
  \end{lemma}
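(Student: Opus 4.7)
The plan is to use the defining property of the Moore--Penrose pseudoinverse that \(\tau^+ \tau\) acts as the orthogonal projection from the ambient Hilbert space onto \((\ker \tau)^\perp\). Once that fact is in hand, the lemma follows in a single line via submultiplicativity of the operator norm.

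First I would recall (or cite) the following characterisation of \(\tau^+\): for any linear map \(\tau\) between finite-dimensional Hilbert spaces, the composite \(\tau^+ \tau\) is the orthogonal projector onto \((\ker \tau)^\perp\). This is one of the four Penrose conditions, and it is usually the easiest one to verify directly from the SVD \(\tau = U \Sigma V^*\), where \(\tau^+ = V \Sigma^+ U^*\) and \(\tau^+ \tau = V \Sigma^+ \Sigma V^*\) is precisely the projection onto the span of the right singular vectors associated to nonzero singular values --- which is exactly \((\ker \tau)^\perp\).

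Given that, the main step is immediate. Fix \(x \in (\ker \tau)^\perp\). By the characterisation above,
\[
x = \tau^+ \tau x,
\]
and therefore
\[
\|x\| = \|\tau^+ \tau x\| \leq \|\tau^+\| \cdot \|\tau x\|.
\]
Rearranging (and noting that if \(x \neq 0\) then \(\tau x \neq 0\), so \(\|\tau^+\| > 0\) is not an issue) yields the desired inequality \(\|\tau x\| \geq \|\tau^+\|^{-1} \|x\|\). The case \(x = 0\) is trivial.

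There is no real obstacle here; the only subtlety is making sure the reader accepts the identity \(\tau^+ \tau = \Pi_{(\ker \tau)^\perp}\), which is standard but worth either stating explicitly or justifying via a one-line SVD computation. An equivalent phrasing, if one prefers to avoid the projector identity, is to diagonalise \(\tau^*\tau\) and observe that on \((\ker \tau)^\perp\) its smallest eigenvalue is \(\|\tau^+\|^{-2}\), so that \(\|\tau x\|^2 = \langle x, \tau^*\tau x\rangle \geq \|\tau^+\|^{-2} \|x\|^2\); this is the same argument dressed in spectral rather than pseudoinverse language.
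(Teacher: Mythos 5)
Your proof is correct and rests on the same underlying fact as the paper's: that \(\|\tau^+\|^{-1}\) is the smallest nonzero singular value of \(\tau\), and that \(x \in (\ker\tau)^\perp\) lies in the span of the right singular vectors with nonzero singular values. The paper expands \(x\) explicitly in the singular vector basis and bounds the resulting sum, whereas you package the same computation into the projector identity \(\tau^+\tau x = x\) followed by submultiplicativity; this is a cosmetic difference only, and your closing spectral reformulation is essentially the paper's argument verbatim.
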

  \begin{proof}
    Let \(u_1, \dots, u_m\) and \(v_1, \dots, v_n\) be
    left and right singular vectors of \(\tau\) respectively.
    Let \(x = \sum_{i=1}^{n}{x_i v_i}\).
    Then \(\tau x = \sum_{i=1}^{\min\{n , m\}}{\sigma_i x_i u_i}\),
    but as \(x \in \ker{\tau}^\perp\) we have that
    \(x_j = 0\) when \(\sigma_j = 0\) or \(j > \min\{m, n\}\).
    Thus
    \[
      \|\tau x\| \geq
      \left\|\sum_{i=1}^{\min\{n , m\}}{\sigma_+ x_i u_i} \right\| = \sigma_+ \|x\| = \|\tau^+\|^{-1} \|x\|
    \]
    where \(\sigma_+\) is the smallest non-zero singular value of \(\tau\).
  \end{proof}

\noindent The next theorem describes how the existence of a transformation between quiver representations constrains their eigenvalues. In the statement below (and henceforth), we use $\Nullity(f)$ as a shorthand for the dimension of the kernel of a linear map $f:U \to V$ of finite-dimensional Hilbert spaces. 
  
  \begin{theorem}
    \label{thm:trans-quiver-eigen}
  Suppose \(\bA_\bullet\) and \(\wbA_\bullet\)
  are two representations of a quiver \(Q\) whose total spaces have dimensions \(n\) and \(m\) respectively. Let \(\tau = \{\tau_v:\bA_v \to \wbA_v \mid v \in Q_0\}\) be a collection of linear maps, viewed as a single map $\Tot(\bA_\bullet) \to \Tot(\wbA_\bullet)$. Set $q := \Nullity(\tau)$; for every integer \(k\) such that \(1 \leq k \leq n - q\),
  we have 
  \[
    \lambda_k(\wbA) 
   \leq 
   \kappa(\tau)^2 \lambda_{k + q}(\bA) + \partial(\tau) \|\tau^+\| \left[2\kappa(\tau)\lambda_{k + q}(\bA)^{\frac{1}{2}} + \partial(\tau)\|\tau^+\|\right].
  \]
    In particular, if \(\tau\) is a morphism of quiver representations,
    then $\partial(\tau)$ vanishes and hence
   \[
    \lambda_k(\wbA) 
    \leq \kappa(\tau)^2 \lambda_{k + q}(\bA).
  \]
\end{theorem}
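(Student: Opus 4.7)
The plan is to apply the Courant–Fischer (min–max) characterisation of eigenvalues of a Hermitian operator, which reduces the problem to producing a $k$-dimensional subspace of $\Tot(\wbA_\bullet)$ on which the Rayleigh quotient $\mathcal{E}_{\wbA}(y)/\|y\|^2$ is uniformly bounded by the right-hand side. The natural candidate is built by pushing the span of the first $k+q$ eigenvectors of $L_\bA$ through the block-diagonal map $\tau$, after first removing the kernel of $\tau$.

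Concretely, I would proceed as follows. Let $W \subset \Tot(\bA_\bullet)$ be the span of the eigenvectors of $L_\bA$ associated with $\lambda_1(\bA),\ldots,\lambda_{k+q}(\bA)$, and set $W' := W \cap (\ker \tau)^\perp$. A standard dimension count gives $\dim W' \geq (k+q) + (n-q) - n = k$, and $\tau|_{W'}$ is injective, so $V := \tau(W') \subset \Tot(\wbA_\bullet)$ has dimension at least $k$. By Courant–Fischer, $\lambda_k(\wbA)$ is bounded above by $\sup_{y \in V, \|y\|=1} \mathcal{E}_{\wbA}(y)$. For any $x \in W'$, I estimate $\mathcal{E}_{\wbA}(\tau x)$ edge-by-edge via the splitting
\[
\wbA_e \tau_{s(e)} x_{s(e)} - \tau_{t(e)} x_{t(e)} = \bigl[\wbA_e \tau_{s(e)} - \tau_{t(e)} \bA_e\bigr] x_{s(e)} + \tau_{t(e)} \bigl[\bA_e x_{s(e)} - x_{t(e)}\bigr],
\]
applying the triangle inequality, squaring, and summing over $e \in Q_1$. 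The first summand contributes at most $\partial(\tau,e)\|x_{s(e)}\|$, the second at most $\|\tau\|\cdot\|\bA_e x_{s(e)} - x_{t(e)}\|$. Using $\sum_v \|x_v\|^2 = \|x\|^2$ on the $\partial(\tau,e)^2$ terms and Cauchy–Schwarz on the cross term yields
\[
\mathcal{E}_{\wbA}(\tau x) \leq \partial(\tau)^2 \|x\|^2 + 2\|\tau\|\,\partial(\tau)\,\|x\|\,\mathcal{E}_\bA(x)^{1/2} + \|\tau\|^2\,\mathcal{E}_\bA(x).
\]

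Next, since $x \in W$, the Rayleigh bound for the first $k+q$ eigenvectors of $L_\bA$ gives $\mathcal{E}_\bA(x) \leq \lambda_{k+q}(\bA)\,\|x\|^2$, so the displayed estimate becomes $\|x\|^2$ times a quadratic polynomial in $\lambda_{k+q}(\bA)^{1/2}$. To turn this into a bound on the Rayleigh quotient, I invoke Lemma \ref{lem:lower-bound-singular}: since $x \in (\ker \tau)^\perp$, we have $\|\tau x\|^2 \geq \|\tau^+\|^{-2}\|x\|^2$, hence
\[
\frac{\mathcal{E}_{\wbA}(\tau x)}{\|\tau x\|^2} \leq \|\tau^+\|^2\bigl[\partial(\tau)^2 + 2\|\tau\|\,\partial(\tau)\,\lambda_{k+q}(\bA)^{1/2} + \|\tau\|^2 \lambda_{k+q}(\bA)\bigr].
\]
Regrouping with $\kappa(\tau) = \|\tau\|\,\|\tau^+\|$ gives exactly the claimed bound, and the bound holds uniformly over $V$, so it bounds $\lambda_k(\wbA)$ via Courant–Fischer. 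The morphism case is immediate because $\partial(\tau) = 0$ kills the last two terms.

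The main obstacle is the dimension count: one must verify that $\tau(W \cap (\ker \tau)^\perp)$ really has dimension at least $k$, so that we obtain a legitimate test subspace for $\lambda_k(\wbA)$. The estimate on $\mathcal{E}_{\wbA}(\tau x)$ is otherwise a routine triangle/Cauchy–Schwarz calculation, and the role of $\|\tau^+\|$ is exactly to convert the $\|x\|$-normalised bound into a $\|\tau x\|$-normalised one. The only subtle choice is to split via the intermediate vector $\tau_{t(e)}\bA_e x_{s(e)}$, which cleanly separates the defect of $\tau$ from the Dirichlet energy of $x$.
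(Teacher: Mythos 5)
Your proposal is correct and follows essentially the same route as the paper's proof: the same test subspace built from the first $k+q$ eigenvectors of $L_\bA$ pushed through $\tau$, the same splitting via the intermediate vector $\tau_{t(e)}\bA_e x_{s(e)}$ followed by triangle, Cauchy--Schwarz and Rayleigh estimates, and the same use of Lemma~\ref{lem:lower-bound-singular} to convert the $\|x\|$-normalised bound into a $\|\tau x\|$-normalised one. The only difference is one of packaging --- you apply Courant--Fischer directly to the image subspace $\tau(W\cap(\ker\tau)^\perp)$, whereas the paper intersects $\tau(X_{k+q})$ with the span of the top eigenvectors of $L_{\wbA}$ --- and your version, by working with $W\cap(\ker\tau)^\perp$ from the outset, is in fact slightly more careful about exhibiting a preimage $x$ that lies simultaneously in the eigenvector span and in $(\ker\tau)^\perp$.
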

\begin{proof}
  Let \(x_1, \dots, x_n\) and \(y_1, \dots, y_m\) be eigenvectors of \(\bA_\bullet\)
  and \(\wbA_\bullet\) respectively, ordered by increasing eigenvalue.
  Suppose \(k\) is an integer such that \(1 \leq k \leq n - q\).
  Let \(l = k + q\) and 
  define \(X_l = \Span\{x_1, \dots, x_l\}\) and \(Y^k = \Span\{y_k, \dots, y_m\}\).
  As \(l = k + q\)
  we have that \(\dim \tau (X_l) \geq k\)
  thus \(\dim \tau(X_l) + \dim Y^k\) is at least \(m + 1\)
  hence \(\tau(X_l)\) and \(Y^k\) intersect non-trivially.
  Thus there exists some non-zero \(y \in \tau(X_l) \cap Y^k\),
  and as \(y \in Y^k\),
  by Theorem~\ref{thm:rayleigh}
  it holds that
  \[
    \lambda_k(\wbA) 
     \leq \frac{1}{\left\|y\right\|^2} \sum_ {e \in Q_1} \left\|\wbA_e y_{s(e)} - y_{t(e)} \right\|^2.
   \]
   Now we shall bound the right-hand side of the above.
   Since \(y \in \tau(X_l)\)
   there exists some \(x \in X_l \cap (\ker{\tau})^\perp\) such that \(\tau x = y\).
   Using Lemma~\ref{lem:lower-bound-singular} have that
   \[
\frac{1}{\left\|\tau x\right\|^2} \sum_{e \in Q_1} \left\|\wbA_e \tau_{s(e)} x_{s(e)} - \tau_{t(e)} x_{t(e)} \right\|^2
\leq \frac{\|\tau^+\|^2}{\left\|x\right\|^2} \sum_{e \in Q_1} \left\|\wbA_e \tau_{s(e)} x_{s(e)} - \tau_{t(e)} x_{t(e)} \right\|^2.
   \]
   Then applying the triangle inquality gives us
   \begin{align*}
& \frac{1}{\left\|\tau x\right\|^2} \sum_{e \in Q_1} \left\|\wbA_e \tau_{s(e)} x_{s(e)} - \tau_{t(e)} x_{t(e)} \right\|^2 \\
       & \qquad \leq \frac{\|\tau^+\|^2}{\left\|x\right\|^2} \sum_{e \in Q_1} \left\|\tau_{t(e)} \bA_e x_{s(e)} - \tau_{t(e)} x_{t(e)} \right\|^2 \\
       & \qquad \qquad + 2 \frac{\|\tau^+\|^2}{\left\|x\right\|^2}\sum_{e \in Q_1}\left\|\tau_{t(e)} \bA_e x_{s(e)} - \tau_{t(e)} x_{t(e)} \right\|
         \left\|\wbA_e \tau_{s(e)}x_{s(e)} - \tau_{t(e)} \bA_ex_{s(e)}\right\| \\
            & \qquad \qquad + \frac{\|\tau^+\|^2}{\left\|x\right\|^2}\sum_{e \in Q_1}\left\|\wbA_e \tau_{s(e)}x_{s(e)} - \tau_{t(e)} \bA_ex_{s(e)}\right\|^2
   \end{align*}
   and then with the Cauchy-Schwarz inequality we have
     \begin{align*}
      & \frac{1}{\left\|\tau x\right\|^2} \sum_{e \in Q_1} \left\|\wbA_e \tau_{s(e)} x_{s(e)} - \tau_{t(e)} x_{t(e)} \right\|^2 \\
    & \qquad \leq \kappa(\tau)^2\frac{1}{\|x\|^2}\sum_{e \in Q_1}\left\|\bA_e x_{s(e)} - \tau_{t(e)} x_{t(e)} \right\|^2 \\
       & \qquad \qquad + \kappa(\tau)\|\tau^+\|\bigg[\frac{1}{\|x\|^2}\sum_{e \in Q_1}\left\|\bA_e x_{s(e)} - \tau_{t(e)} x_{t(e)} \right\|^2\bigg]^{\frac{1}{2}}\partial(\tau) + \|\tau^+\|^2 \partial(\tau)^2
     \end{align*}
     and finally since \(x \in X_l\), we may apply Theorem~\ref{thm:rayleigh} to obtain
     \begin{align*}
       & \frac{1}{\left\|\tau x\right\|^2} \sum_{e \in Q_1} \left\|\wbA_e \tau_{s(e)} x_{s(e)} - \tau_{t(e)} x_{t(e)} \right\|^2 \\
               & \qquad \leq \kappa(\tau)^2 \lambda_l(\bA) + 2\kappa(\tau)\|\tau^+\|\lambda_l(\bA)^{\frac{1}{2}}\partial(\tau) + \|\tau^+\|^2 \partial(\tau)^2
     \end{align*}
     completing the proof.
\end{proof}

We shall now describe how Theorem~\ref{thm:trans-quiver-eigen} gives us a bound on the
spectral pseudo-metric between quiver representations. If the total spaces of \(\bA\) and \(\wbA_\bullet\)
have the same dimension, then we can directly compare the associated Laplacian spectra. Following \cite{gu2015spectral}, we use the Wasserstein metric to perform such comparisons in order to account for the case where total spaces have different dimensions.
\begin{definition} For each real number $r \geq 0$, let \(B(Q, r)\) 
be the set of representations \(\bA_\bullet\) of \(Q\) 
which satisfy \(\max_{e \in Q_1} \|A_{e}\| \leq r\). 
\end{definition}
\noindent By design, if \(\bA_\bullet \in B(Q, r)\), then its eigenvalues are bounded by \(|Q_1|(r + 1)^2\).

 Given a representation \(\bA_\bullet\) of \(Q\)
with total dimension \(n\),
its {\em spectral measure} is
\[
  \mu_{\bA} = \sum_{i = 1}^{n}{\delta_{\lambda_i(\bA)}}
\]
where \(\delta_{\lambda_i(\bA)}\) is the Dirac measure concentrated at \(\lambda_i(\bA)\). Given two probability measures \(\mu_1, \mu_2\) on \(\mbb{R}_{\geq 0}\),
a {\em coupling} \(\gamma\) of \(\mu_1\) and \(\mu_2\)
is any probability measure on \(\mbb{R}_{\geq 0} \times \mbb{R}_{\geq 0}\)
whose marginals on each factor are \(\mu_1\) and \(\mu_2\) respectively.
Denote by \(\Gamma(\mu_1, \mu_2)\) the set of all such couplings.
Then the Wasserstein \(p\)-metric between \(\mu_1\) and \(\mu_2\)
is defined as
\[
  W_p(\mu_1, \mu_2) =
    \inf_{\gamma \in \Gamma(\mu_1, \mu_2)}
    \left(
    \int_{\mbb{R}_{\geq 0} \times \mbb{R}_{\geq 0}}
    \|x - y\|^p \diff\gamma(x, y)
    \right)^{\frac{1}{p}}
\]

Adapting the proof of Theorem 6.3 in~\cite{gu2015spectral} we have the following:
\begin{theorem}
 \label{thm:spectral-distance}
  Suppose \(\bA_\bullet, \wbA_\bullet \in B(Q, r)\)
  with total dimensions \(n\) and \(m\) respectively,
  where \(n \geq m\).
  Suppose \(\tau\) is a non-zero transformation from \(\bA_\bullet\) to \(\wbA_\bullet\).
  If \(\text{\rm rank}(\eta) \geq \Nullity(\eta^*)\) then
  \[
    W_1(\mu_{\bA}, \mu_{\wbA}) \leq C
  \]
  where \(C\) depends on \(\tau, n\) and \(r\).
  Moreover, if \(\|\tau\|, \|\tau^+\| \leq 1 + \varepsilon\) and \(\partial(\tau), \partial(\tau^*) \leq \varepsilon\)
  for some \(\varepsilon > 0\), then we have
  \[
    \lim_{\varepsilon \to 0} C = \frac{2\Nullity(\tau) + 2\Nullity(\tau^*)}{n}|Q_1|(r + 1)^2,
  \]
  which, if \(\Nullity(\tau) + \Nullity(\tau^*) < n/2\), will be less than na\"ive bound of \(|Q_1|(r+1)^2\).
\end{theorem}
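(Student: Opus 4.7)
The plan is to apply Theorem~\ref{thm:trans-quiver-eigen} twice --- first to $\tau$ and then to its adjoint $\tau^*$, viewed as a transformation $\wbA_\bullet \to \bA_\bullet$ --- to get two-sided comparisons of $\lambda_k(\bA)$ and $\lambda_k(\wbA)$, and then to build an explicit coupling of spectral measures whose transport cost controls $W_1$. Set $q := \Nullity(\tau)$ and $q^* := \Nullity(\tau^*)$; since $\kappa(\tau) = \kappa(\tau^*)$ and $\operatorname{rank}(\tau) = \operatorname{rank}(\tau^*)$, the assumed inequality $\operatorname{rank}(\tau) \geq \Nullity(\tau^*)$ --- which is equivalent to $n - q = m - q^* \geq m/2$ --- ensures that both applications of the theorem hold over the common index range $k \in \{1, \ldots, n-q\}$. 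This produces
\begin{align*}
\lambda_k(\wbA) &\leq \kappa(\tau)^2 \lambda_{k+q}(\bA) + P_\tau\bigl(\lambda_{k+q}(\bA)\bigr), \\
\lambda_k(\bA) &\leq \kappa(\tau)^2 \lambda_{k+q^*}(\wbA) + P_{\tau^*}\bigl(\lambda_{k+q^*}(\wbA)\bigr),
\end{align*}
where $P_\tau(\mu) := \partial(\tau)\|\tau^+\|\bigl[2\kappa(\tau)\mu^{1/2} + \partial(\tau)\|\tau^+\|\bigr]$ and analogously for $P_{\tau^*}$.

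Next, I would construct a coupling of $\mu_\bA$ with $\mu_\wbA$ (normalised to probability measures) that pairs $\lambda_{k+q}(\bA)$ with $\lambda_k(\wbA)$ for $k = 1, \ldots, n-q$ and transports the remaining $q$ low-indexed atoms of $\mu_\bA$ and $q^*$ high-indexed atoms of $\mu_\wbA$ to arbitrary partners within the admissible range $[0, |Q_1|(r+1)^2]$ (whose upper bound comes from the $B(Q,r)$ hypothesis). Following the strategy of \cite[Thm 6.3]{gu2015spectral}, the matched-pair contribution to $W_1$ is controlled via the two one-sided estimates above, producing a polynomial expression in $\kappa(\tau), \partial(\tau), \partial(\tau^*)$ and $|Q_1|(r+1)^2$, while the unmatched-pair contribution is bounded termwise by the common spectral ceiling. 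Collecting these yields a constant $C = C(\tau, n, r)$ of the stated form.

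For the limit as $\varepsilon \to 0$, the hypothesis $\|\tau\|, \|\tau^+\| \leq 1+\varepsilon$ and $\partial(\tau), \partial(\tau^*) \leq \varepsilon$ forces $\kappa(\tau) \to 1$ and $P_\tau, P_{\tau^*} \to 0$ uniformly. The one-sided bounds collapse to $\lambda_k(\wbA) \leq \lambda_{k+q}(\bA)$ and $\lambda_k(\bA) \leq \lambda_{k+q^*}(\wbA)$, so that matched pairs are sandwiched into equality and contribute nothing; only the unmatched atoms survive, each incurring a transport cost of at most $|Q_1|(r+1)^2$ and each weighted by $1/n$. The factor of $2$ in the limiting constant arises because unmatched atoms on both the $\bA$- and $\wbA$-side must be routed across the admissible interval, so the total unmatched cost is $\tfrac{2(q + q^*)}{n}|Q_1|(r+1)^2$.

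The main technical obstacle will be the index bookkeeping when $n \neq m$: the atomic weights $1/n$ and $1/m$ do not coincide, so the coupling cannot be realised as a literal atom-to-atom bijection. The cleanest route is to work with the CDF representation $W_1(\mu_\bA, \mu_\wbA) = \int_0^\infty |F_\bA(t) - F_\wbA(t)|\,dt$ and estimate the area between the two step CDFs directly using the interleaving inequalities, which avoids an explicit description of $\gamma$ at the atomic level; additional care is needed to check that the factor of $2$ in the limiting constant is not lost in the mismatch between $1/n$ and $1/m$ weighting.
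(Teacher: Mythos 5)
Your overall strategy coincides with the paper's: apply Theorem~\ref{thm:trans-quiver-eigen} to $\tau$ and to $\tau^*$ so as to sandwich $\lambda_k(\wbA)$ between shifted eigenvalues of $\bA$, then build a coupling (following Gu et al.) that matches the middle $n - \Nullity(\tau) - \Nullity(\tau^*)$ indices and transports the leftover mass at cost at most $R := |Q_1|(r+1)^2$. The paper realises the coupling explicitly via a partition of $[n]\times[m]$ with weights $\frac{1}{n}$, $\frac{1}{m(n-c_\tau)}$ and $\left(\frac{1}{m}-\frac{1}{n}\right)\frac{1}{n-c_\tau}$, which is exactly how it resolves the $1/n$ versus $1/m$ bookkeeping you flag at the end; your proposed retreat to the CDF formula for $W_1$ would be a legitimate alternative, but you do not carry it out.

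There is, however, a genuine error in your limit computation. Writing $q = \Nullity(\tau)$, $q^* = \Nullity(\tau^*)$ and $c_\tau = q + q^*$, the two one-sided bounds in the limit $\varepsilon \to 0$ give $\lambda_{k-q^*}(\bA) \leq \lambda_k(\wbA) \leq \lambda_{k+q}(\bA)$. This does \emph{not} force the matched pairs into equality: the two comparison indices differ by $c_\tau$, so each matched pair still costs up to $\lambda_{k+q}(\bA) - \lambda_{k-q^*}(\bA)$, and summing over the matched range these gaps telescope to at most $c_\tau R$. That telescoped sum is precisely the term $c_\tau\kappa(\tau)^2 R$ in the paper's final expression and supplies \emph{half} of the limiting constant. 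Meanwhile your claim that the factor of $2$ arises from routing unmatched atoms ``on both sides'' double-counts: in a coupling each unit of unmatched mass (total $c_\tau/n$) is transported once, at cost at most $R$, contributing only $\frac{c_\tau}{n}R$. Taken literally, your accounting therefore yields $\frac{c_\tau}{n}R$ rather than $\frac{2c_\tau}{n}R$; the missing half is exactly the non-vanishing matched-pair contribution you discarded.
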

\begin{proof}
  Define \(c_{\tau} = \Nullity(\tau) + \Nullity(\tau^*)\).
  We will now define a coupling \(\gamma\) between \(\mu_\bA\) and \(\mu_{\wbA}\).
  For each positive integer $p$, let $[p]$ denote the set $\set{1,2,\ldots,p-1,p}$. Setting \(\ell = \Nullity(\tau^*)\) and  and \(u = n - \Nullity(\tau)\),
  define the following partition of \([n] \times [m]\):
  \begin{align*}
  P_1 &= \set{(i, i) : i \in [u] \setminus [\ell]} \\
  P_2 &= \set{(i, j) : i \in [\ell] \text{ and } j \in [\ell]} \\
    & \qquad \cup \, \set{(i, j) : i \in [\ell] \text{ and } j \in [m] \setminus [u]} \\
    & \qquad \cup \, \set{(i, j) : i \in [n] \setminus [u] \text{ and } j \in [\ell]} \\
    & \qquad \cup \, \set{(i, j) : i \in [n]\setminus [u] \text{ and } j \in [m] \setminus [u]} \\
  P_3 &= \set{(i, j): i \in [\ell] \text{ and } j \in  [u]\setminus [\ell]} \\
  & \qquad \cup \, \set{(i, j): i \in [n] \setminus [u] \text{ and } j \in [u]\setminus [\ell]} \\
  P_4 &= \left([n] \times [m]\right) \setminus \left(P_1 \cup P_2 \cup P_3\right).
  \end{align*}
Now define the desired coupling \(\gamma\) as 
  \[
    \gamma = \sum_{\substack{1 \leq i \leq n \\ 1 \leq j \leq m}} w_{ij} \delta_{(\lambda_i(\bA),\, \lambda_j(\wbA))}
    \qquad \text{where} \qquad
  w_{ij} =
  \begin{cases}
      \frac{1}{n} & (i, j) \in P_1,\\
      \frac{1}{m (n - c_\tau)} & (i, j) \in P_2,\\
      \left(\frac{1}{m} - \frac{1}{n}\right)\frac{1}{n - c_\tau} & (i, j) \in P_3, \\
      0 & (i, j) \in P_4.
  \end{cases}
  \]
Set \(R = |Q_1|(r + 1)^2\), which is an upper bound for the eigenvalues of \(\bA_\bullet\).
  Then using Theorem~\ref{thm:trans-quiver-eigen} for \(\tau\) and \(\tau^*\) and substituting \(R\) gives us
  \[
    \kappa(\tau)^{-2}\lambda_{i - \Nullity(\tau^*)}(\bA_\bullet) - a_{\tau^*, R} \leq \lambda_i(\wbA_\bullet) \leq \kappa(\tau)^2 \lambda_{i + \Nullity(\tau)}(\bA_\bullet) + b_{\tau, R}
  \]
  with the constants
  \begin{align*}
    a_{\tau^*, R} &= 2\kappa(\tau)^{-1}\|\tau^+\|\partial(\tau^*)\sqrt{R} + \kappa(\tau)^{-2}\|\tau^+\|^2\partial(\tau^*)^2, \text{ and} \\
    b_{\tau, R} &= 2\kappa(\tau)\|\tau^+\|\partial(\tau)\sqrt{R} + \|\tau^+\|^2\partial(\tau)^2.
  \end{align*}
  Computing cost of \(\gamma\) gives us
  \begin{align*}
  W_1(\mu_\bA, \mu_{\wbA}) \leq &\frac{1}{n} \Biggl[ \sum_{k = \Nullity(\tau^*) + 1}^{n - \Nullity(\tau)} |\lambda_k(\bA_\bullet) - \lambda_k(\wbA_\bullet)| + c_{\tau}R \Biggr]\\
                         & \leq \frac{1}{n} \Biggl[ \sum_{k = \Nullity(\tau^*) + 1}^{n - \Nullity(\tau)} (\kappa(\tau)^2\lambda_{k + \Nullity(\tau)}(\bA_\bullet) - \kappa(\tau)^{-2}\lambda_{k - \Nullity(\tau^*)}(\bA)) \\
                        & \qquad \qquad \qquad + (n - c_{\tau})(a_{\tau^*, R} + b_{\tau, R}) + c_{\tau}R \Biggr].
  \end{align*}
  Rearranging the sum on the right side yields
  \begin{align*}
    \frac{1}{n} \left[c_{\tau} \kappa(\tau)^2 R + (n - 2 c_{\tau}) (\kappa(\tau)^2 - \kappa(\tau)^{-2})R + (n - c_{\tau})(a_{\tau^*, R} + b_{\tau, R}) + c_{\tau}R \right],
  \end{align*}
which is the desired $C$. 
\end{proof}

Suppose \(A, B\) are a pair of orthogonal matrices with dimensions \(n \times k\) and \(n \times l\) respectively. Then the matrix $A^*B$ is an orthogonal projection \(\ran{B} \to \ran{A}\), where $\ran{A}$ denotes the image of $A$. Let the singular values of \(A^*B\) be \(\sigma_1, \dots, \sigma_r\) where \(r=\min\{k,l\}\). Then the principal angles between \(\ran{A}\) and \(\ran{B}\) are defined by \(\cos{\theta_i} = \sigma_i\) and the Grassmannian metric between \(\ran{A}\) and \(\ran{B}\)
as defined by~\cite{ye2016schubert} is
\[
d_{\Gr_\infty}(\ran{A}, \ran{B})
:= \Bigl(\sum_{i=1}^{r} \theta_i^2\Bigr)^{\frac{1}{2}}.
\]

\begin{corollary}\label{cor:specstab}
  Suppose \(\bS, \bT : \text{\rm Sub}(X^*) \to \Gr_\infty(X^*)\)
  are two feature selectors on \(X\) and suppose
  \(\cU\) is a cover of \(X\) and \(\cF \subset X^*\).
  Suppose that
  \[
  \max_{\sigma \in Q^\cU_0}{d_{\Gr_\infty}(\bS^\cF_{|\sigma|}, \bT^\cF_{|\sigma|})} \leq \varepsilon < \frac{\pi}{2}
  \]
  for some \(\varepsilon > 0\), and define 
  \[c = \sum_{\sigma \in Q_0^\cU}
  |\dim{\bS^\cF_{|\sigma|}} - \dim{\bT^\cF_{|\sigma|}}|.
  \]
  Then, $W_1(\mu_{\bA^{\bS, \cF}}, \mu_{\bA^{\bT, \cF}})$ is no larger than 
  \begin{multline*}
  \frac{4|Q_1|}{n}
  \biggl[
  (1 + \cos^2{\varepsilon}) c 
  + \left(
    2\sec{\varepsilon}\tan{\varepsilon}
    + (1 + \sec^2{\varepsilon})\tan^2{\varepsilon} + \sec^2{\varepsilon} - \cos^2{\varepsilon}
  \right)(n-c)
  \biggr]
\end{multline*}
  where \(n = \max\{\dim{\Tot(\bA^{\bS,\cF}_\bullet)}, \dim{\Tot(\bA^{\bT,\cF}_\bullet})\}\).
\end{corollary}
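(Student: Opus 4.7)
The plan is to apply Theorem~\ref{thm:spectral-distance} to a canonical transformation \(\eta : \bA^{\bS,\cF}_\bullet \to \bA^{\bT,\cF}_\bullet\). Without loss of generality assume that \(n = \dim\Tot(\bA^{\bS,\cF}_\bullet) \geq \dim\Tot(\bA^{\bT,\cF}_\bullet)\). For each vertex \(\sigma \in Q^\cU_0\), use superscripts to distinguish the two inclusion/projection pairs: let \(\iota^\bS_\sigma, \pi^\bS_\sigma\) denote those of \(\bS^\cF_{|\sigma|} \hookrightarrow \cV(\cF)\), let \(\iota^\bT_\sigma, \pi^\bT_\sigma\) denote those of \(\bT^\cF_{|\sigma|} \hookrightarrow \cV(\cF)\), and write \(P^\bS_\sigma := \iota^\bS_\sigma \pi^\bS_\sigma\) and \(P^\bT_\sigma := \iota^\bT_\sigma \pi^\bT_\sigma\) for the corresponding endomorphism projections on \(\cV(\cF)\). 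Define
\[
\eta_\sigma := \pi^\bT_\sigma \circ \iota^\bS_\sigma : \bS^\cF_{|\sigma|} \to \bT^\cF_{|\sigma|},
\]
and view \(\eta = \{\eta_\sigma\}\) as a single block-diagonal map between the two total spaces.

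The singular values of \(\eta_\sigma\) are precisely the cosines of the principal angles \(\theta^\sigma_i\) between \(\bS^\cF_{|\sigma|}\) and \(\bT^\cF_{|\sigma|}\). Since each \((\theta^\sigma_i)^2\) contributes to \(d_{\Gr_\infty}(\bS^\cF_{|\sigma|},\bT^\cF_{|\sigma|})^2\), the hypothesis forces \(\theta^\sigma_i \leq \varepsilon < \pi/2\), and hence \(\cos\theta^\sigma_i \geq \cos\varepsilon > 0\). Consequently \(\|\eta\| \leq 1\), \(\|\eta^+\| \leq \sec\varepsilon\), \(\kappa(\eta) \leq \sec\varepsilon\), and the same bounds hold for \(\eta^*\). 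The vertex-wise nullities evaluate to \(\Nullity(\eta_\sigma) = \max\{0, \dim\bS^\cF_{|\sigma|} - \dim\bT^\cF_{|\sigma|}\}\) and symmetrically for \(\eta_\sigma^*\); summing over \(\sigma\) gives \(\Nullity(\eta) + \Nullity(\eta^*) = c\).

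The main computational step is bounding the defect. For each edge \(e:\sigma \to \tau\) of \(Q^\cU\), direct substitution into the formulae for \(\bA^{\bS,\cF}_e\) and \(\bA^{\bT,\cF}_e\) from Definition~\ref{def:srep} yields
\[
\bA^{\bT,\cF}_e \circ \eta_\sigma - \eta_\tau \circ \bA^{\bS,\cF}_e = \pi^\bT_\tau \circ (P^\bT_\sigma - P^\bS_\tau) \circ \iota^\bS_\sigma.
\]
Decomposing \(P^\bT_\sigma - P^\bS_\tau = (P^\bT_\sigma - P^\bS_\sigma) + (P^\bS_\sigma - P^\bS_\tau)\) and using \(P^\bS_\sigma \iota^\bS_\sigma = \iota^\bS_\sigma\) to rewrite the second summand as \((I - P^\bS_\tau)\iota^\bS_\sigma\), standard bounds on norms of differences of orthogonal projections in terms of principal angles give \(\|P^\bT_\sigma - P^\bS_\sigma\| \leq \sin\theta^\sigma_{\max}\) and \(\|\pi^\bT_\tau (I - P^\bS_\tau)\| = \|P^\bT_\tau P^{(\bS^\cF_{|\tau|})^\perp}\| \leq \sin\theta^\tau_{\max}\); at vertices where the \(\bS\)- and \(\bT\)-dimensions agree, each of these is at most \(\sin\varepsilon\). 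Summing then yields \(\partial(\eta, e) \leq 2\sin\varepsilon\) per equidimensional edge and hence \(\partial(\eta), \partial(\eta^*) \leq 2\sin\varepsilon\sqrt{|Q^\cU_1|}\).

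Since every edge map of either representation is a composition of an inclusion and an orthogonal projection, \(\max_e\|\bA_e\| \leq 1\), so Theorem~\ref{thm:spectral-distance} applies with \(r = 1\) and \(R = 4|Q^\cU_1|\). Substituting the bounds above into the explicit expression for \(C\) derived in the proof of Theorem~\ref{thm:spectral-distance}, then simplifying via \(\sec\varepsilon\sin\varepsilon = \tan\varepsilon\) and \(\sec^2\varepsilon\sin^2\varepsilon = \tan^2\varepsilon\) to collect the trigonometric coefficients of \(c\) and \((n-c)\), produces the claimed inequality. The main obstacle is the defect computation at dimension-mismatched vertices: at such vertices \(\|P^\bT_\sigma - P^\bS_\sigma\|\) can be as large as \(1\), and one must argue carefully that their contributions are absorbed into the \(c\)-proportional term of Theorem~\ref{thm:spectral-distance}'s bound rather than inflating the \((n-c)\) coefficient, so that the final bound still vanishes as \(\varepsilon \to 0\) when \(c = 0\).
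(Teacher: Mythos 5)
Your proposal follows essentially the same route as the paper's proof: the same transformation $\eta_\sigma = \pi^{\bT}_\sigma\iota^{\bS}_\sigma$, the same principal-angle bounds $\|\eta\|\le 1$, $\|\eta^+\|\le\sec\varepsilon$, $\Nullity(\eta)+\Nullity(\eta^*)=c$, the same per-edge defect bound $2\sin\varepsilon$ giving $\partial(\eta)\le 2\sqrt{|Q_1|}\sin\varepsilon$, and the same appeal to Theorem~\ref{thm:spectral-distance}. The closing obstacle you flag at dimension-mismatched vertices is legitimate, but the paper's proof does not resolve it either: it simply invokes the projection-metric identity $\|\iota^{\bS}_\sigma\pi^{\bS}_\sigma-\iota^{\bT}_\sigma\pi^{\bT}_\sigma\|=\sin\theta_{\sigma,\max}\le\sin\varepsilon$ at every vertex, which is valid only when $\dim\bS^\cF_{|\sigma|}=\dim\bT^\cF_{|\sigma|}$ (otherwise that norm equals $1$). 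So your defect computation is, if anything, more explicit and more careful than the one in the paper.
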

\begin{proof}
Assume that \(\dim{\bS^\cF_\bullet} \geq \dim{\bT^\cF_\bullet}\).
Define the transformation \(\eta\)
with components \(\eta_\sigma = \pi^{\bT}_\sigma \iota^{\bS}_\sigma\).
Then \(\cos{\varepsilon} \leq \|\eta\| \leq 1\)
and \(1 \leq \|\eta^+\| \leq (\cos{\varepsilon})^{-1}\).
In order to bound \(\partial(\eta)\)
note that
\[
\|\iota^{\bS}_\sigma \pi^{\bS}_\sigma - \iota^{\bT}_\sigma \pi^{\bT}_\sigma\|_2 = \sin{\theta_{\sigma, \max}} \leq \sin{\varepsilon}, 
\]
where \(\theta_{\sigma, \max}\) is the largest principal
angle between \(\bS_{|\sigma|}\) and \(\bT_{|\sigma|}\).
This is known as the projection metric~\cite{ye2016schubert}
where the equality can be seen from Theorem 2.6.1 in~\cite{golubMatrixComputations1996}.
This then implies that, for each edge \(\sigma \to \tau\),
we have
\(
\partial(\eta, \sigma \to \tau) \leq 2\sin{\varepsilon}
\)
thus \(\partial(\eta) \leq 2\sqrt{|Q_1|}\sin{\varepsilon}\).
Similarly, \(\partial(\eta^*) \leq 2\sqrt{|Q_1|}\sin{\varepsilon}\).
Applying Theorem~\ref{thm:spectral-distance}
together with these bounds
provide the result.
\end{proof}

\section{Eigenvalue Inequalities for Quiver Operations}
\label{sec:eigenvalues}
So far, we have described our problem as computing sections of a quiver representation. Here we seek to simplify the representation and the underlying quiver whilst preserving its space of sections and bounding the change in its eigenvalues. We will establish similar eigenvalue inequalities when the quiver itself is deformed via certain natural operations. The most convenient framework for extracting such inequalities is to upgrade the given quiver representation to a sheaf, as described below.

\subsection{Sheaves over Quivers}

Fix a quiver \(Q = (s,t:Q_1 \to Q_0)\). 

\begin{definition}
  An \(\fdHilb(\F)\)-valued {\bf sheaf} \(\cA_\bullet\)
  over \(Q\) consists of:
  \begin{itemize}
  \item
    A finite-dimensional Hilbert space \(\cA_v\) for each vertex \(v \in Q_0\),
  \item
    a finite-dimensional Hilbert space \(\cA_e\) for each edge \(e \in Q_1\), and
  \item
    for each for each edge \(e \in Q_1\),
    a pair of linear maps
    \[\cA_{s(e) \to e}:\cA_{s(e)} \to \cA_{e} \quad \text{ and } \quad \cA_{e \gets t(e)}:\cA_{t(e)} \to \cA_e.\]
  \end{itemize}
\end{definition}

Such sheaves are closely related to representations of quivers. Every representation \(\bA_\bullet\) over \(Q\) induces a sheaf \(\cA_\bullet\) over \(Q\)
by setting \(\cA_{s(e) \to e} = \bA_e\) and \(\cA_{e \leftarrow t(e)} = \id_{{t(e)}}\). Conversely, every sheaf evidently induces a representation on the subdivision of $Q$ where every edge $e:u \to v$ is decomposed into the zigzag $u \to e \gets v$. Thus, there is a natural notion of Laplacian for sheaves over a quiver, a special case\footnote{The only difference between our quiver sheaf Laplacian and the sheaf Laplacians of \cite{hansenSpectralTheoryCellular2019} over one-dimensional cell complexes is that we allow edges to have the same source and target vertex. On the other hand, cellular sheaves and their Laplacians are also defined over regular cell complexes of dimension $> 1$.} of which appears  in~\cite{hansenSpectralTheoryCellular2019}.

\begin{definition}
  Suppose \(\cA_\bullet\) is a \(\fdHilb(\F)\)-valued sheaf over a quiver \(Q\). 
  Define \(\Tot(\cA_\bullet) = \prod_{v \in Q_0} \cA_v\)
  and \(\Tar(\cA_\bullet) = \prod_{e \in Q_1} \cA_e\).
  \begin{enumerate}
   \item The {\bf boundary operator} of $\bA_\bullet$ is the linear map \(B_\cA: \Tot(\cA_\bullet) \to \Tar(\cA_\bullet)\) of \(\cA_\bullet\) given in component form by
  \[
    (B_\cA)_{e, v} =
    \begin{cases}
      \cA_{v \to e} - \cA_{e \leftarrow v} & \text{if } s(e) = t(e) = v, \\
      \cA_{v \to e} & \text{if } s(e) = v \text{ and } t(e) \neq v, \\
      - \cA_{e \leftarrow v} & \text{if } s(e) \neq v \text{ and } t(e) = v, \\
      \mbf{0} & \mathrm{otherwise.}
    \end{cases}
  \]
  \item The {\bf Laplacian} of \(\cA_\bullet\) is then defined as
  \[
    L_{\cA} = B_\cA^* B_\cA.
  \]
  \item Finally, the {\bf Dirichlet energy} for \(\cA\) is the function $\mca{E}_\cA:\Tot(\cA_\bullet) \to \mathbb{R}$ given by
  \[
    \mca{E}_{\cA}(x) = \sum_{e \in Q_1}\left\|\cA_{s(e) \to e}(x_{s(e)}) - \cA_{e \leftarrow t(e)}(x_{t(e)})\right\|^2_{\cA_{e}}.
  \]
  \end{enumerate}
\end{definition}
If one constructs a sheaf $\cA_\bullet$ from a quiver representation \(\bA_\bullet\) of \(Q\) as described above, then the sheaf boundary operator \(B_\cA\) coincides exactly with the quiver boundary operator \(B_\bA\); therefore the Laplacians of $\bA$ and $\cA$, as well as their spectra, also coincide in this case.

\subsection{Operations on the Quiver}

This section describes how the spectrum of a sheaf, or indeed a representation, over a quiver
changes under various combinatorial operations on the quiver.
Removing an edge or a vertex gives bounds on the change in spectrum, as Proposition~\ref{prop:non-section-preserving} describes,
however these may reduce the dimension of the space of sections.
Propositions~\ref{prop:edge-swap},~\ref{prop:enum-eigenvalues}, and~\ref{prop:schur-complement}
describe operations on the quiver,
dependent on the sheaf, that preserve the space of sections. These are described for sheaves over a quiver but all these operations also preserve quiver representations. These intermediate results are used to simplify quivers (while bounding the change in eigenvalues) in our Proofs of Theorems \ref{thm:computing-lim-e} and \ref{thm:computing-dual-lim-e}.

\begin{definition}
  \label{dfn:edge-pairing}
  For a quiver \(Q\),
  define an {\em edge pairing}
  as a collection \(P\) of pairs of distinct neighbouring edges
  \((e^1_1, e^1_2), \dots (e^m_1, e^m_2)\)
  of \(Q\)
  such no two edges are repeated, i.e. \(e^i_j \neq e^k_l\) unless \(i = k\) and \(j = l\).
  To simplify notation assume that each pair \((e^i_1, e^i_2)\), the edges \(e^i_1\) and \(e^i_2\) are directed away from their common vertex \(u^i\).
  Label the other vertices for \(e^i_1\) and \(e^i_2\),
  possibly the same vertex,
  as \(v^i\) and \(w^i\) respectively.
    \[
    \begin{tikzcd}
	& {\bullet^{u^i}} \\
	{\bullet^{v^i}} && {\bullet^{w^i}}
	\arrow["e^i_1"', no head, from=1-2, to=2-1]
	\arrow["e^i_2", no head, from=1-2, to=2-3]
      \end{tikzcd}
    \]
    We then define \(Q^P\)
    as the {\em P-homotopic quiver} to
    \(Q\)
    where \(Q^P\) has the same nodes and edges as \(Q\)
    except with each edge \(e^i_2\) replaced by an edge \(e^i_3\)
    between \(v^i\) and \(w^i\).
    \[
    \begin{tikzcd}
	& {\bullet^{u^i}} \\
	{\bullet^{v^i}} && {\bullet^{w^i}}
	\arrow["e^i_1"', no head, from=1-2, to=2-1]
	\arrow["e^i_3", no head, from=2-1, to=2-3]
      \end{tikzcd}
    \]
\end{definition}
    
    For \(\cA_\bullet\) is a sheaf over \(Q\)
    and a pairing \(P\) of \(Q\),
    say that \(\cA_\bullet\) is compatible with \(P\)
    if
    for each pair \((e^i_1, e^i_2) \in P\)
      we have that \(\cA_{e^i_1} = \cA_{e^i_2}\)
      and \(\cA_{u^i \to e^i_1} = \cA_{u^i \to e^i_2}\).
    Such a compatible sheaf induces a sheaf on \(Q^P\) as follows:
    define the sheaf \(\cA_\bullet^P\) on \(Q^P\)
    as the same as \(\cA_\bullet\) on the common edges of 
    \(Q\) and \(Q^P\)
    and otherwise
    by
    \(\cA^P_{e^i_3} = \cA_{e^i_2}\),
    \(\cA^P_{e^i_3 \leftarrow e^i_3} = \cA_{e^i_1 \leftarrow e^i_1}\)
    and
    \(\cA^P_{w^i \to e^i_3} = \cA_{e^i_2 \leftarrow w^i}\)
    for each \(i\).
    The next proposition shows the \(i\)-th eigenvalue of \(\cA^P_\bullet\)
    is bounded by a scalar multiple of the \(i\)-th eigenvalue 
    of \(\cA_\bullet\).
    First we will need a lemma.
    
\begin{lemma}
  \label{lem:golden-ratio}
  Suppose \(X, Y, Z \in \mbb{R}^n\).
  Then
  \[
    \|X - Z\|^2 + \|Z - Y\|^2 \leq \varphi^2 (\|X - Y\|^2 + \|Y - Z\|^2)
  \]
  where \(\varphi\) is the golden ratio \(\varphi = \frac{1 + \sqrt{5}}{2}\).
  This is also true for any metric space \((M, d)\) and can be written as
  \[
    d(X, Z)^2 \leq \varphi^2 d(X, Y)^2 + \varphi d(Y, Z)^2.
  \]
\end{lemma}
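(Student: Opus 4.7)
The plan is to reduce the inequality to a single triangle-inequality estimate, and then apply a weighted arithmetic--geometric mean inequality whose optimal coefficient is forced to be the golden ratio by the identity \(\varphi^2 = \varphi + 1\).

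First I would observe that, since \(\|Z-Y\|^2 = \|Y-Z\|^2\), cancelling this common term from both sides of the claimed inequality shows that the assertion is equivalent to the asymmetric estimate
\[
\|X-Z\|^2 \;\leq\; \varphi^2 \|X-Y\|^2 + \varphi\,\|Y-Z\|^2,
\]
since \(\varphi^2 - 1 = \varphi\). This is exactly the form stated as a ``metric'' version at the end of the lemma, so it suffices to prove this single inequality, with \(\|\cdot\|\) replaced by a general distance \(d(\cdot,\cdot)\) to cover the final sentence.

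Next I would apply the triangle inequality \(d(X,Z) \leq d(X,Y) + d(Y,Z)\), square both sides, and expand to get
\[
d(X,Z)^2 \;\leq\; d(X,Y)^2 + 2\, d(X,Y)\, d(Y,Z) + d(Y,Z)^2.
\]
The cross term is then handled by the weighted AM--GM inequality \(2ab \leq \lambda a^2 + \lambda^{-1} b^2\), valid for any \(\lambda > 0\) and real \(a,b\geq 0\). Applying this with \(a = d(X,Y)\), \(b = d(Y,Z)\) gives
\[
d(X,Z)^2 \;\leq\; (1+\lambda)\, d(X,Y)^2 + (1+\lambda^{-1})\, d(Y,Z)^2.
\]

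The heart of the argument is choosing the right \(\lambda\). Setting \(\lambda := \varphi\), the defining identity \(\varphi^2 = \varphi + 1\) (equivalently \(\varphi^{-1} = \varphi - 1\)) yields simultaneously
\[
1 + \lambda = 1 + \varphi = \varphi^2 \qquad \text{and} \qquad 1 + \lambda^{-1} = 1 + (\varphi - 1) = \varphi,
\]
which is precisely what is needed to produce the desired coefficients \(\varphi^2\) and \(\varphi\). Substituting this choice into the previous display closes the argument. Since only the triangle inequality and the elementary weighted AM--GM bound are used, the same proof applies verbatim in any metric space \((M,d)\), establishing the final assertion of the lemma.

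There is no real obstacle here beyond noticing the right weight in the AM--GM step; the golden ratio is simply the fixed point that equates the form \((1+\lambda,\,1+\lambda^{-1})\) with \((\varphi^2,\varphi)\) via the identity \(\varphi^2 - \varphi - 1 = 0\).
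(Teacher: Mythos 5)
Your proof is correct, and it takes a mildly but genuinely different route from the paper's. You first reduce the symmetric inequality to the asymmetric form \(d(X,Z)^2 \leq \varphi^2 d(X,Y)^2 + \varphi\, d(Y,Z)^2\) via \(\varphi^2 - 1 = \varphi\), then obtain it from the triangle inequality plus the weighted AM--GM bound \(2ab \leq \lambda a^2 + \lambda^{-1}b^2\) with \(\lambda = \varphi\), so that \((1+\lambda, 1+\lambda^{-1}) = (\varphi^2, \varphi)\). The paper instead proves the symmetric form directly: with \(x = \|Y-Z\|\), \(y = \|X-Z\|\), \(z = \|X-Y\|\), it bounds the ratio \((x^2+y^2)/(x^2+z^2)\) by \((x^2+(x+z)^2)/(x^2+z^2)\), substitutes \(x = uz\), and maximises the rational function \((2u^2+2u+1)/(u^2+1)\) over \(u>0\), finding the maximum \(\varphi^2\) at \(u=\varphi\). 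Both arguments rest on the same two ingredients (triangle inequality, then an optimisation whose critical point is forced by \(\varphi^2 = \varphi + 1\)), but your AM--GM formulation avoids the explicit maximisation, delivers the metric-space version directly rather than as an afterthought, and makes the equivalence of the two stated forms explicit --- something the paper leaves implicit. The one thing the paper's computation gives that yours does not state is sharpness of the constant (attained for collinear points with \(\|Y-Z\| = \varphi\|X-Y\|\)); this also falls out of your argument via the equality case \(\lambda a = b\) of AM--GM, but you would need to add a sentence if you wanted to record it.
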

\begin{proof}
  Let us abbreviate \[x = \|Y - Z\| \qquad y = \|X - Z\| \qquad z = \|X - Y\|.\]
  If any of \(x, y\) or \(z\) are zero, then the proposition is immediately true,
  so assume that \(x, y\) and \(z\) are strictly positive.
  By the triangle inequality, we have \[\frac{x^2 + y^2}{x^2 + z^2} \leq \frac{x^2 + (x+z)^2}{x^2 + z^2}.\]
  We then want to know when the expression on the right side is maximised. As \(x, z > 0\), there exists \(u > 0\) for which \(x = u z\).
  Substituting this into the right side gives \[\frac{2u^2 + 2u + 1}{u^2 + 1},\]
  which is maximised when \(u = \varphi\). Thus, the maximum value of the ratio is \(\varphi^2\).
  If \(X, Y, Z\) are colinear with \(Y\) between \(X\) and \(Z\) and \(\|Y-Z\| = \varphi \|X-Y\|\), then the maximum is attained, which shows that this inequality is sharp.
\end{proof}

\begin{proposition}
  \label{prop:edge-swap}
  For \(Q\) a quiver and \(P\) an edge pairing of \(Q\),
  suppose that \(\bA_\bullet\) is a sheaf on \(Q\) compatible with \(P\).
    Then the eigenvalues of \(\cA_\bullet\) and \(\cA^P_\bullet\)
    are related by
    \[
      \varphi^{-2} \lambda_j(\cA_\bullet) \leq \lambda_j(\cA_\bullet^P) \leq \varphi^2 \lambda_j(\cA_\bullet)
    \] and vice-versa
    for \(j = 1, \dots, n\)
    where \(n\) is the total dimension of \(\bA_\bullet\).
\end{proposition}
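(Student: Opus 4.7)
The plan is to compare the Dirichlet energies of $\cA_\bullet$ and $\cA^P_\bullet$ pointwise on their common total space, and then convert that pointwise comparison into the claimed eigenvalue bounds via the Courant--Fischer min-max characterisation recorded in Theorem~\ref{thm:rayleigh}. Since $Q$ and $Q^P$ share the same vertex set and $\cA_\bullet$ and $\cA^P_\bullet$ assign identical Hilbert spaces to every vertex, one has $\Tot(\cA_\bullet) = \Tot(\cA^P_\bullet)$, so the Laplacians $L_\cA$ and $L_{\cA^P}$ are Hermitian positive semi-definite endomorphisms of a common Hilbert space and can be compared through their quadratic forms.

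Fix an arbitrary $x$ in this common total space. Any edge of $Q$ which does not appear in a pair of $P$ is untouched by the modification, so it contributes identically to $\mca{E}_\cA(x)$ and $\mca{E}_{\cA^P}(x)$; I would therefore concentrate on a single pair $(e^i_1, e^i_2) \in P$. Using the compatibility assumption $\cA_{u^i \to e^i_1} = \cA_{u^i \to e^i_2}$, I set
\[
Y := \cA_{u^i \to e^i_1}(x_{u^i}), \qquad X := \cA_{e^i_1 \leftarrow v^i}(x_{v^i}), \qquad Z := \cA_{e^i_2 \leftarrow w^i}(x_{w^i}).
\]
The contribution of the two edges $e^i_1, e^i_2$ to $\mca{E}_\cA(x)$ is then $\|Y-X\|^2 + \|Y-Z\|^2$, i.e.\ a ``path'' energy with middle point $Y$; by the defining formulas for $\cA^P_\bullet$, the corresponding contribution of $e^i_1, e^i_3$ to $\mca{E}_{\cA^P}(x)$ is $\|Y-X\|^2 + \|X-Z\|^2$, which is a path energy on the same three vectors but with middle point $X$ instead.

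The crux of the proof is then Lemma~\ref{lem:golden-ratio}: the substitution $(X, Y, Z) \mapsto (Z, Y, X)$ in the lemma (and the reverse substitution) shows that \emph{any} two such path energies on a fixed triple of vectors, differing only in which of the three plays the role of the middle point, are within a factor of $\varphi^2$ of each other in either direction. Applying this to each pair and summing with the unchanged unpaired contributions yields the pointwise inequalities
\[
\varphi^{-2}\, \mca{E}_\cA(x) \;\leq\; \mca{E}_{\cA^P}(x) \;\leq\; \varphi^2\, \mca{E}_\cA(x).
\]
Substituting these into the min-max formula $\lambda_j = \min_{V} \max_{0 \neq x \in V} \langle x, Lx\rangle/\langle x,x\rangle$ (with $V$ ranging over $j$-dimensional subspaces of the common total space) applied to both $L_\cA$ and $L_{\cA^P}$ immediately produces the desired bound $\varphi^{-2}\lambda_j(\cA_\bullet) \leq \lambda_j(\cA^P_\bullet) \leq \varphi^2 \lambda_j(\cA_\bullet)$ for every $j$. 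The only mildly subtle step is the bookkeeping that identifies the $P$-modification with precisely the swap of middle-point between $Y$ and $X$ in the pair's path energy; once this identification is in place, Lemma~\ref{lem:golden-ratio} delivers the result with no further work.
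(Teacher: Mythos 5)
Your proposal is correct and follows essentially the same route as the paper: identify the per-pair contributions to the two Dirichlet energies as path energies on a common triple $(X,Y,Z)$ differing only in the choice of middle point, bound them against each other via Lemma~\ref{lem:golden-ratio} (applied with relabelled arguments for each direction), and transfer the pointwise energy comparison to the eigenvalues by the min–max characterisation (Theorem~\ref{thm:courant-fischer}, not Theorem~\ref{thm:rayleigh} as you cite). If anything you are slightly more explicit than the paper, which writes out only the inequality $\mca{E}_{\cA}(x)\leq \varphi^2\mca{E}_{\cA^P}(x)$ and leaves the reverse direction to symmetry.
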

\begin{proof}
  Using Lemma~\ref{lem:golden-ratio} we have
  \begin{align*}
    \mca{E}_{\cA}(x) = 
    & \sum_{(e^i_1, e^i_2) \in P}
      {\left[\left\|\cA_{e^i_1 \leftarrow v^i}x_{v^i} - \cA_{u^i \to e^i_1}x_{u^i}\right\|^2_{\cA_{e^i_1}}
      + \left\|\cA_{u^i \to e^i_2}x_{u^i} - \cA_{e^i_2 \leftarrow w^i}x_{w^i}\right\|^2_{\cA_{e^i_2}}\right]}
    \\
    & \hspace{5em} + \sum_{\textrm{unpaired } e \in Q_1}\left\|\cA_{s(e) \to e}x_{s(e)} - \cA_{e \leftarrow t(e)}x_{t(e)}\right\|^2_{\cA_{s(e)}} \\
    &\leq \varphi^2 \sum_{{(e^i_1, e^i_2) \in P}}
       {\left[\left\|\cA_{u^i \to e^i_1}x_{u^i} - \cA_{e^i_1 \leftarrow v^i}x_{v^i}\right\|^2_{\cA_{e^i_1}}
      + \left\|\cA^P_{e^i_3 \leftarrow v^i}x_{v^i} - \cA^P_{w^i \to e^i_3}x_{w^i}\right\|^2_{\cA_{e^i_3}}\right]} \\
    & \hspace{5em} + \sum_{\textrm{unpaired } e \in Q_1}\left\|\cA_{s(e) \to e}x_{s(e)} - \cA_{e \leftarrow t(e)}x_{t(e)}\right\|^2_{\cA_{s(e)}} \\
    &\leq \varphi^2 \mca{E}_{\cA^P}(x).
  \end{align*}
  Then by Theorem~\ref{thm:courant-fischer}
  \[
    \lambda_i(\cA_\bullet) = \min_{\dim{V} = i} \left[\max_{x \in V \setminus 0}\frac{\mca{E}_{\cA}(x)}{\langle x, x \rangle} \right]
     \leq \min_{\dim{V} = i} \left[\max_{x \in V \setminus 0}\frac{\varphi^2\mca{E}_{\cA^P}(x)}{\langle x, x \rangle} \right]
     = \varphi^2\lambda_i(\cA^P_\bullet).
   \]
   proving the proposition.
 \end{proof}
 
 We note that Proposition~\ref{prop:edge-swap} directly applies to graphs. Although we make no use of the following corollary, we hope that it will be of independent interest to spectral graph theorists.
 \begin{corollary}
   Suppose \(G = (V, E)\) is a multigraph with \(n\) vertices.
  Suppose we have an edge pairing \(P = (e^1_1, e^1_2), \dots (e^m_1, e^m_2)\) on $G$ as in Definition \ref{dfn:edge-pairing}. Let \(\widehat{G}\) with the same nodes and edges as \(G\)
    except with each edge \(e^i_2\) replaced by an edge \(e^i_3\)
    between \(v^i\) and \(w^i\). Then the eigenvalues of \(L_{\widehat{G}}\) and \(L_G\)
    are related by
    \[
      {\varphi^{-2}} \lambda_j(L_G) \leq \lambda_j(L_{\widehat{G}}) \leq \varphi^2 \lambda_j(L_G)
    \] and vice-versa
    for \(j = 1, \dots, n\).
 \end{corollary}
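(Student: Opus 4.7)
The plan is to realise the graph Laplacian of $G$ as a special case of a quiver sheaf Laplacian, and then directly invoke Proposition~\ref{prop:edge-swap}. The key observation is that for any multigraph, its Laplacian coincides with that of the constant \(\F\)-valued sheaf on an associated quiver, so the corollary should reduce to a one-line application of the already-proved sheaf inequality.

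First, I would fix an orientation of the edges of \(G\) to obtain a quiver \(Q\) on the same vertex set; for the pairing \(P\) to fit Definition~\ref{dfn:edge-pairing} verbatim, I would orient each paired edge \(e^i_1\) and \(e^i_2\) away from the common vertex \(u^i\). Unpaired edges can be oriented arbitrarily. Next, I would define the constant sheaf \(\cA_\bullet\) on \(Q\) by \(\cA_v = \F\) for every vertex \(v\) and \(\cA_e = \F\) for every edge \(e\), with every structure map \(\cA_{v \to e}\) and \(\cA_{e \gets v}\) equal to \(\id_\F\). A direct computation of the Dirichlet energy shows
\[
\mathcal{E}_\cA(x) \;=\; \sum_{e \in Q_1} \bigl(x_{s(e)} - x_{t(e)}\bigr)^2 \;=\; \langle x, L_G x\rangle,
\]
so that \(L_\cA = L_G\) as operators on \(\F^{|V|}\); in particular their spectra agree.

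Since \(\cA_{e^i_1} = \F = \cA_{e^i_2}\) and \(\cA_{u^i \to e^i_1} = \id_\F = \cA_{u^i \to e^i_2}\) for every pair in \(P\), the sheaf \(\cA_\bullet\) is compatible with \(P\) in the sense required for Proposition~\ref{prop:edge-swap}. The induced sheaf \(\cA^P_\bullet\) on \(Q^P\) is again the constant \(\F\)-valued sheaf, this time over the quiver obtained by replacing each paired edge \(e^i_2\) by the edge \(e^i_3\) from \(v^i\) to \(w^i\); its underlying multigraph is precisely \(\widehat{G}\). By the same Dirichlet-energy calculation as before, \(L_{\cA^P} = L_{\widehat{G}}\), hence \(\lambda_j(\cA^P_\bullet) = \lambda_j(L_{\widehat{G}})\) for all \(j\).

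Applying Proposition~\ref{prop:edge-swap} to the compatible pair \((\cA_\bullet, P)\) yields
\[
\varphi^{-2}\lambda_j(L_G) = \varphi^{-2}\lambda_j(\cA_\bullet) \leq \lambda_j(\cA^P_\bullet) = \lambda_j(L_{\widehat{G}}) \leq \varphi^2\lambda_j(\cA_\bullet) = \varphi^2\lambda_j(L_G),
\]
which is exactly the claimed inequality. There is essentially no analytical obstacle here: the only minor subtlety is matching the orientation conventions of Definition~\ref{dfn:edge-pairing} to the unoriented setting of multigraphs, which is handled by choosing a suitable orientation before passing to the sheaf.
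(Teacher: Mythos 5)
Your proof is correct and matches the paper's intent exactly: the corollary is stated there without proof as a direct specialisation of Proposition~\ref{prop:edge-swap}, and your reduction via the constant \(\F\)-valued sheaf on an orientation of \(G\) (with paired edges directed away from their common vertex) is precisely the intended argument. The identification \(\mathcal{E}_\cA(x) = \langle x, L_G x\rangle\) and the compatibility check are both handled correctly.
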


\begin{proposition}
  \label{prop:enum-eigenvalues}
  Suppose \(\cA_\bullet\)  is a sheaf over a quiver \(Q\) with total dimension \(n\).
  Given a subquiver \(Q^\prime\), let \(\cA^\prime_\bullet\) denote the restriction of \(\cA_\bullet\) to $Q'$.  
  \begin{enumerate}[label={\theproposition(\alph*)}]
  \item   \label{prop:remove-identities}
    If \(e \in Q_1\) is an edge such that \(s(e) = t(e)\) and \(\cA_{s(e)} = \cA_{t(e)} = \id\) 
    then \(\lambda_i(\cA_\bullet^\prime) = \lambda_i(\cA_\bullet)\) for \(i = 1, \dots, n\)
    where \(\cA^\prime_\bullet\) is \(\cA_\bullet\) restricted to the subquiver \(Q^\prime\) of \(Q\)
    with the edge \(e\) removed.
    \item
      \label{prop:remove-double-edges}
      Suppose we have an edge pairing \(P = (e^1_1, e^1_2), \dots (e^m_1, e^m_2)\)
      of \(Q\) such that for each \(i\), the edges \(e^i_1, e^i_2\) have the same source and target vertices.
     Assume that for each $i$, there exists a linear map \(M_{i}: \cA_{e^i_2} \to \cA_{e^i_1}\) satisfying both
      \[\cA_{s(e^i_1) \to e^i_1} = M_{i}\cA_{s(e^i_2) \to e^i_2} \qquad \text{and} \qquad
      \cA_{e^i_1 \leftarrow t(e^i_1)} = M_{i}\cA_{e^i_2 \leftarrow t(e^i_2)}\]
      Let \(Q^\prime\) be the subquiver of \(Q\) with all the edges \(e^1_2, \dots, e^n_2\) removed,
      and let \(\cA^\prime_\bullet\) be the restriction of \(\cA_\bullet\) to \(Q^\prime\).
      Then, we have
      \[
        (1+\Gamma_P)^{-1} \lambda_i(\cA_\bullet) \leq \lambda_i(\cA_\bullet^\prime) \leq \lambda_i(\cA_\bullet)
      \]
      for all \(i = 1, \dots, n\), where $\Gamma_P := \max_i \{\|M_{i}\|^2\}$.
  \end{enumerate}
\end{proposition}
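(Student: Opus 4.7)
The plan is to reduce both claims to the variational (Courant--Fischer) characterisation of Laplacian eigenvalues via the Dirichlet energy identity $\langle x, L_\cA x\rangle = \mca{E}_\cA(x)$, and then establish pointwise comparisons between $\mca{E}_\cA$ and $\mca{E}_{\cA'}$ on $\Tot(\cA_\bullet)$.

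For part (a), the decisive observation is that when $\cA_{s(e) \to e}$ and $\cA_{e \gets t(e)}$ are both equal to $\id_v$ on $\cA_v$ for a self-loop $e$, the component of $B_\cA$ indexed by $e$ is exactly $\id_v - \id_v = 0$. Hence removing $e$ amounts to deleting a zero row from $B_\cA$, which leaves the Gram matrix $L_\cA = B_\cA^* B_\cA$ pointwise unchanged. Equivalently, $e$ contributes $\|x_v - x_v\|^2 = 0$ to $\mca{E}_\cA(x)$ for every $x$, so $\mca{E}_\cA \equiv \mca{E}_{\cA'}$ as functions on $\Tot(\cA_\bullet)$, and Theorem~\ref{thm:courant-fischer} gives $\lambda_i(\cA_\bullet) = \lambda_i(\cA'_\bullet)$ for every $i$.

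For part (b), the upper bound $\lambda_i(\cA'_\bullet) \leq \lambda_i(\cA_\bullet)$ is immediate because dropping edges only removes non-negative summands, giving $\mca{E}_{\cA'}(x) \leq \mca{E}_\cA(x)$ pointwise, and the min-max formula transfers this to eigenvalues. For the lower bound, the plan is to exploit the factorisation hypothesis: since the pair $(e^i_1, e^i_2)$ shares source and target, and the structure maps of one edge factor through those of the other via $M_i$, the Dirichlet contribution vector of the removed edge is the image under $M_i$ of the contribution vector of the retained edge. The operator-norm inequality then yields
\[
\bigl\|\text{removed contribution at } e^i_2\bigr\|^2 \leq \|M_i\|^2 \bigl\|\text{retained contribution at } e^i_1\bigr\|^2 \leq \Gamma_P \bigl\|\text{retained contribution}\bigr\|^2.
\]
Summing over $i$ gives $\mca{E}_\cA(x) - \mca{E}_{\cA'}(x) \leq \Gamma_P\,\mca{E}_{\cA'}(x)$, i.e., $\mca{E}_\cA(x) \leq (1 + \Gamma_P)\,\mca{E}_{\cA'}(x)$. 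Applying Theorem~\ref{thm:courant-fischer} once more produces $\lambda_i(\cA_\bullet) \leq (1 + \Gamma_P)\,\lambda_i(\cA'_\bullet)$, as required. The main delicacy is identifying the direction of the factorisation so that $M_i$ indeed sends the retained contribution vector onto the removed one, making the operator-norm bound control the removed contribution by the retained one rather than the reverse; once this is secured the eigenvalue inequality is a routine Rayleigh quotient consequence.
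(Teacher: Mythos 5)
Your proof is correct and takes essentially the same route as the paper: both parts compare the Dirichlet energies $\mca{E}_{\cA}$ and $\mca{E}_{\cA'}$ pointwise on $\Tot(\cA_\bullet)$ and then apply Courant--Fischer, with (a) reducing to the observation that an identity self-loop contributes $\|x_v - x_v\|^2 = 0$ to the energy. The ``delicacy'' you flag in (b) is genuine and you resolve it the right way: as printed, the hypothesis $\cA_{s(e^i_1)\to e^i_1} = M_i\,\cA_{s(e^i_2)\to e^i_2}$ bounds the \emph{retained} contribution by $\|M_i\|^2$ times the \emph{removed} one (the useless direction, under which the claimed lower bound fails, e.g.\ for $M_i = 0$), so the factorisation must be read with the roles reversed --- the removed-edge maps factoring through the retained-edge maps --- which is exactly the reading your argument and the paper's own proof both rely on.
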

\begin{proof}
  For \ref{prop:remove-identities}, observe that \(\mca{E}_{\cA}(x) = \mca{E}_{\cA^\prime}(x)\)
  and thus the result follows from Theorem~\ref{thm:courant-fischer}.
  For \ref{prop:remove-double-edges}, we have
  \begin{align*}
    \mca{E}_{\cA}(x) &
    \leq \sum_{(e^i_1, e^i_2) \in P}
      {(1 + \|M_{i}\|^2)\left\|\cA_{s(e^i_1) \to e^i_1}x_{s(e^i_1)} - \cA_{e^i_1 \leftarrow t(e^i_1)}x_{t(e^i_1)}\right\|^2}  \\
    & \hspace{5em} + \sum_{\textrm{unpaired } e \in Q_1}\left\|\cA_{s(e) \to e}x_{s(e)} - \cA_{e \leftarrow t(e)}x_{t(e)}\right\|^2_{\cA_{s(e)}} \\
    & \leq (1 + \Gamma_P) \mca{E}_{\cA^\prime}(x).
  \end{align*}
  and the result follows from Theorem~\ref{thm:courant-fischer}.
\end{proof}

In the following, at the risk of causing ambiguity with self-loops, we will ignore the direction of edges in and use the relation \(v \sim e\) if either \(s(e) = v\) or \(t(e) = v\).

\begin{definition}
\label{dfn:reduced-quiver}
    Let \(Q\) be a quiver
    and let \(V\) be a subset of vertices such that there are no edges in \(Q\) between vertices of \(V\). The {\bf V-reduced quiver} \(Q^V\) is defined as follows. For each vertex \(v \in V\), 
    \begin{enumerate} 
    \item first remove \(v\) and any associated edges from \(Q\);
\item then, for each pair of edges \((p, q)\) in $Q$ such that \(u \sim p \sim v \sim q \sim w\)
for vertices \(u, w \in Q_0 \setminus V\),
add a new edge \(pq\) between \(u\) and \(w\).
\end{enumerate}
(We note in the second step above that if \(u = w\), then $pq$ forms a self-loop.) The resulting quiver is the desired \(Q^V\).
\end{definition}

\begin{figure}[H]
  \caption{An example of the reduction of a quiver \(Q\) with \(V = \set{v}\).}
\[\begin{tikzcd}[ampersand replacement=\&]
	u \&\& v \&\& w \&\& u \&\&\&\& w \\
	\&\& \boxed{Q} \&\& \&\& \&\& \boxed{Q^{\set{v}}} \&\&
	\arrow[bend left=40, from=1-1, to=1-3, "p"]
	\arrow[bend right=40, from=1-1, to=1-3, "q"]
	\arrow[from=1-3, to=1-5, "r"]
	\arrow[bend left=20, from=1-7, to=1-11, "pr"]
	\arrow[bend right=20, from=1-7, to=1-11, "qr"]
        \arrow[loop, distance=2em, in=55, out=125, from=1-7, to=1-7, "pp"]
        \arrow[loop, distance=2em, in=145, out=215, from=1-7, to=1-7, "pq"]
        \arrow[loop, distance=2em, in=235, out=305, from=1-7, to=1-7, "qq"]
        \arrow[loop, distance=2em, in=325, out=35, from=1-11, to=1-11, "rr"]
\end{tikzcd}\]
\end{figure}

We now
describe an analogue of the Kron reduction of a graph~\cite{dorfler2010synchronization}
for sheaves over a quiver.
As discussed in~\cite{hansenSpectralTheoryCellular2019},
Kron reduction
does not work in general 
for sheaves over a (loopless) graph.
However, as the next proposition shows,
if one restricts the type of vertex
removed, dependent on the 
sheaf, and 
moreover allows for loops
in the underlying quiver, then
Kron reduction is possible.

\begin{proposition}
  \label{prop:schur-complement}
  Given a sheaf \(\cA_\bullet\) over a quiver \(Q\),   assume that \(V \subset Q_0\) is a subset of vertices such that
  there are no edges in \(Q\) between vertices of \(V\). Further suppose that for each \(v \in V\)
  there exists some real number \(\alpha_v > 0\)
  such that for each edge \(e \sim v\),
  we have 
  \[
  \text{either } \quad \cA_{v \to e}^* \cA_{v \to e} = \alpha_v \id_v \quad \text{ or } \quad \cA_{e \leftarrow v}^* \cA_{e \leftarrow v} = \alpha_v \id_v\]
  as appropriate. Then there exists a sheaf \(\cA^V_\bullet\) over \(Q^V\) such that \(\Gamma(\cA^V_\bullet) \cong \Gamma(\cA_\bullet)\)
  and
  \[
    \lambda_i(\cA_\bullet) \leq \lambda_i(\cA^V_\bullet) \leq \lambda_{i + r}(\cA_\bullet)
  \]
  for \(i = 1, \dots, n - r\)
  where \(n\) is the total dimension of \(\cA_\bullet\) and
  \(r = \sum_{v \in V}{\dim \cA_v}\).
\end{proposition}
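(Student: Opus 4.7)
The plan is to realise $\cA^V_\bullet$ through a Schur-complement reduction of $L_\cA$ along the block of vertex-spaces indexed by $V$. Decompose $\Tot(\cA_\bullet) = U \oplus W$ with $W = \bigoplus_{v \in V} \cA_v$, and write
\[
L_\cA \;=\; \begin{pmatrix} A & B \\ B^* & D \end{pmatrix}.
\]
The hypothesis that $V$ contains no internal edges makes $D$ block diagonal with diagonal blocks $(L_\cA)_{v,v}$, and the scaling condition $\cA_{v\to e}^*\cA_{v\to e} = \alpha_v\id_v$ (or its orientation-dual for $\cA_{e\leftarrow v}$) forces each such block to equal $\alpha_v \deg(v)\id_v$. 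Thus $D$ is a positive scalar operator on each $\cA_v$ and is invertible, with $D_v^{-1} = (\alpha_v \deg(v))^{-1}\id_v$ (this implicitly uses that every $v \in V$ carries at least one incident edge, since otherwise the section-preservation conclusion would already fail).

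Next I would define $\cA^V_\bullet$ so that $L_{\cA^V}$ coincides with the Schur complement $L_\cA/D = A - BD^{-1}B^*$. Edges of $Q$ disjoint from $V$ are retained unchanged; for each new edge $pq$ of $Q^V$ arising from a chain $u \sim p \sim v \sim q \sim w$ through some $v \in V$, I would set $\cA^V_{pq} := \cA_v$ and distribute the scalar $1/(\alpha_v \deg(v))$ from $D_v^{-1}$ symmetrically across the two new restriction maps, each defined by contracting the appropriate $v$-side map of the original edge with its non-$v$ restriction and rescaling by $1/\sqrt{\alpha_v\deg(v)}$. The factorisation property of the $v$-side maps is exactly what splits the Schur-complement product $(L_\cA)_{u,v}D_v^{-1}(L_\cA)_{v,w}$ as $\cA^{V,*}_{u\to pq}\cA^V_{pq\leftarrow w}$; self-pairs $(p,p)$ produce self-loops whose two restriction maps coincide and thus contribute nothing to the Laplacian, consistent with the Schur complement.

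With the construction in hand, the isomorphism $\Gamma(\cA^V_\bullet) \cong \Gamma(\cA_\bullet)$ follows easily: restriction of a section of $\cA_\bullet$ to vertices of $Q^V$ is a section of $\cA^V_\bullet$, and conversely every section of $\cA^V_\bullet$ extends uniquely by solving the local edge equations at each $v \in V$, which admit a unique solution in $\cA_v$ because $D_v$ is invertible. The eigenvalue interlacing is then the classical Schur-complement bound for Hermitian positive semidefinite matrices: the upper bound $\lambda_i(L/D) \leq \lambda_{i+r}(L)$ follows from $L/D \preceq A$ in Loewner order combined with Cauchy interlacing applied to the principal submatrix $A$; the lower bound $\lambda_i(L) \leq \lambda_i(L/D)$ follows from the variational identity $\langle y,(L/D)y\rangle = \min_z\langle (y,z), L(y,z)\rangle$, which for any $k$-dimensional subspace $V \subset U$ yields a $k$-dimensional subspace $\{(y, -D^{-1}B^*y) : y \in V\}$ of $U \oplus W$ whose Rayleigh quotient for $L$ is bounded by $\max_{y \in V}\langle y,(L/D)y\rangle/\|y\|^2$.

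The main obstacle is the construction step: verifying that the proposed choice of $\cA^V_\bullet$ produces a well-defined sheaf whose Laplacian equals $L_\cA/D$ on the nose. The bookkeeping is delicate, involving sums over ordered edge pairs $(p,q)$ through each $v \in V$, the possibility $u = w$ which produces genuine self-loops contributing to the diagonal of $L/D$, and the orientation-dependent choice between $\cA_{v\to e}$ and $\cA_{e\leftarrow v}$ when defining the new restriction maps. Once this identification is established, the section isomorphism and the spectral interlacing are both formal consequences.
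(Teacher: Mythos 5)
Your overall strategy is exactly the paper's: block-decompose $L_\cA$ with the $V$-block $D$ equal to $\alpha_v d_v\,\id_v$ on each $\cA_v$, take the Schur complement $L_\cA/D$, realise it as the Laplacian of a sheaf on $Q^V$, get the section isomorphism from invertibility of $D$, and get the interlacing from the standard Schur-complement eigenvalue theorem. The off-diagonal pairs $p\neq q$ are also handled as in the paper (self-loops or new edges with maps $(d_v\alpha_v)^{-1/2}\cA_{v\sim p}^*\cA_{u\sim p}$).

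However, there is a genuine gap in your treatment of the self-pairs $(p,p)$. You assert that the self-loop $pp$ at $u$ has two coinciding restriction maps and "thus contributes nothing to the Laplacian, consistent with the Schur complement." This is false. After removing an edge $p$ with $u\sim p\sim v$, the diagonal block of $L_\cA/D$ at $u$ must still carry the term
\[
\cA_{u\sim p}^*\Bigl(\id-\tfrac{1}{\alpha_v}\cA_{v\sim p}\cA_{v\sim p}^*\Bigr)\cA_{u\sim p},
\]
which is positive semidefinite but generically nonzero: the hypothesis $\cA_{v\sim p}^*\cA_{v\sim p}=\alpha_v\id_v$ constrains the composite in one order only, and $\cA_{v\sim p}\cA_{v\sim p}^*$ has rank at most $\dim\cA_v$, which can be strictly less than $\dim\cA_p$. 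A concrete failure: take a single edge $p$ between $u$ and $v$ with $\cA_u=\cA_p=\F^2$, $\cA_v=\F$, $\cA_{u\sim p}=\id$, $\cA_{v\sim p}=(1,0)^T$, $\alpha_v=1$. The Schur complement at $u$ is $\mathrm{diag}(0,1)\neq 0$, and $\Gamma(\cA_\bullet)$ is one-dimensional, whereas your reduced sheaf has zero Laplacian at $u$ and a two-dimensional space of sections -- so both conclusions of the proposition fail for your construction. The paper's fix is precisely what your construction omits: choose $Z_p$ with $Z_p^*Z_p=\id-\alpha_v^{-1}\cA_{v\sim p}\cA_{v\sim p}^*$ (possible since the right side is positive semidefinite, again by the hypothesis on $\cA_{v\sim p}$) and assign the self-loop $pp$ the \emph{asymmetric} pair of maps $Z_p\cA_{u\sim p}$ and $\mathbf{0}$, so that it contributes $\cA_{u\sim p}^*Z_p^*Z_p\cA_{u\sim p}$ to the diagonal. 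With that correction, the bookkeeping you describe does close up and the rest of your argument goes through.
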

\begin{proof}
    Consider the Laplacian \(L_{\cA}\).
  It can be written in a block form as
  \[
    L_{\cA} = 
  \begin{blockarray}{ccc}
   Q_0 \setminus V & V \\
  \begin{block}{(cc)c}
    X & Y^* & Q_0 \setminus V \\
    Y & D & V \\
  \end{block}
  \end{blockarray}.
\]
As there are no edges between vertices of \(V\),
\(D\) is a diagonal matrix where the \((v, v)\)
block is \(d_v \alpha_v \id\)
where \(d_v\) is the degree of \(v\).
Thus \(D\) is invertible, and we can form the Schur complement
\[
 L_{\cA} / D := X - Y^* D^{-1} Y.
\]

A vector \(\begin{bmatrix} x & z \end{bmatrix}^T\)
satisfies
\[
      \begin{bmatrix}
        X & Y^\dagger \\
        Y & D \\
      \end{bmatrix}
      \begin{bmatrix}
        x \\
        z \\
      \end{bmatrix}
      =
      \begin{bmatrix}
        0 \\
        0 \\
      \end{bmatrix}
 \]
    
    if and only if \(X x + Y^\dagger z = 0\) and \(Y x + D z = 0\).
    As \(D\) is invertible this can be rearranged to
    \(X x - Y D^{-1} Y^\dagger x = 0\)
    hence \(\ker{(L_{A} / D)} \cong \ker{L_{A}}\).
    For the interlacing, we can directly apply Theorem~\ref{thm:schur-interlacing}.

We will now define the sheaf \(\cA^V_\bullet\).
As in Definition~\ref{dfn:reduced-quiver},
we will ignore orientation of the edges
and write \(u \sim e\) if either \(s(e) = v\) or \(t(e) = v\)
and \(\cA_{v \sim e}\) for the corresponding assigned linear map.
If \(p \neq q\), define \(\cA^V_{u \sim pq} = (d_v\alpha_v)^{-1/2}\cA_{v \sim p}^*\cA_{u \sim p}\)
and likewise for \(w\).
Otherwise, if \(p = q\), define \(\cA^V_{u \to pp} = Z_p\cA_{u \sim p}\)
and \(\cA^V_{pp \leftarrow u} = \mbf{0}\)
for a linear map \(Z_p\) which we will now define.
First observe that for every non-zero vector \(x\), we have
\[
  \frac{1}{\alpha_v}x^*\cA_{v \sim p}\cA_{v \sim p}^*x \leq \frac{1}{\alpha_v}\|\cA_{v \sim p}^*\|^2 \|x\|^2
  = \frac{1}{\alpha_v}\|\cA_{v \sim p}^*\cA_{v \sim p}\| \|x\|^2
  = \|x\|^2
\]
thus there exists some \(Z_p\) such that
\[
  \id - \frac{1}{\alpha_v}\cA_{v \sim p}\cA_{v \sim p}^* = Z_p^*Z_p.
\]
since the left-hand side is positive semi-definite.

Now we claim that \(L_{\cA^V} = L_{\cA} / D\).
To do so, consider vertices
\(u \in Q_0 \setminus V\)
and \(v \in V\).
For each edge \(u \sim p \sim v\),
the contribution to the diagonal block indexed by \((u, u)\)
of \(L_{\cA}/D\) is
\[
    \cA_{u \sim p}^* \cA_{u \sim p} - \frac{1}{d_v\alpha_v} \sum_{u \sim q \sim v}\cA^*_{u \sim p}\cA_{v \sim p}\cA^*_{v \sim q}\cA_{u \sim q}.
\]
We need to show this is equal to the contribution
from the new edges of \(\cA^Q_\bullet\).
For each vertex \(u \in Q_0 \setminus V\),
the contribution to the diagonal block is
\begin{align*}
  & \sum_{v \in V}\sum_{u \sim p \sim v}\left[
    \cA_{u \sim p}^* \cA_{u \sim p} - \frac{1}{d_v\alpha_v} \sum_{u \sim q \sim v}\cA^*_{u \sim p}\cA_{v \sim p}\cA^*_{v \sim q}\cA_{u \sim q}
    \right] \\
  &= \sum_{v \in V} \bigg[
    \sum_{u \sim p \sim v}\left[\cA_{u \sim p}^* \cA_{u \sim p} - \frac{d^u_v}{d_v\alpha_v}\cA^*_{u \sim p}\cA_{v \sim p}\cA^*_{v \sim p}\cA_{u \sim p}
    \right] \\
  &\phantom{\sum_{u \sim p \sim v}}+
    \sum_{\substack{u \sim p \sim v \\ u \sim q \sim v \\ p \neq q}} \frac{1}{d_v\alpha_v}
    (\cA^*_{v \sim p}\cA_{u \sim p} - \cA^*_{v \sim q}\cA_{u \sim q})^*(\cA^*_{v \sim p}\cA_{u \sim p} - \cA^*_{v \sim q}\cA_{u \sim q})
    \bigg]
\end{align*}
where \(d_v^u\) is the number of edges between \(u\) and \(v\).
The term on the last line 
corresponds to the new self-loops on \(u\) given by \(v\).
By the defining property of \(Z_p\), the term
\[
   \sum_{u \sim p \sim v}\left[\cA_{u \sim p}^* \cA_{u \sim p} - \frac{d^u_v}{d_v\alpha_v}\cA^*_{u \sim p}\cA_{v \sim p}\cA^*_{v \sim p}\cA_{u \sim p}\right]
\]
equals
\[
  \sum_{u \sim p \sim v} \left[\frac{d_v - d^u_v}{d_v\alpha_v}\cA^*_{u \sim p}\cA_{v \sim p}\cA^*_{v \sim p}\cA_{u \sim p}
    + \cA^*_{u \sim p}Z_p^*Z_p\cA_{u \sim p}\right]
\]
and thus corresponds with the contribution of the self-loop \(pp\) and the new edges to different vertices,
as required.
\end{proof}

\section{Reduction of the Feature Selector Problem}
Using the quiver operations in \S\ref{sec:eigenvalues},
we can reduce the number of vertices in the quiver
whilst preserving the space of sections; this
allows us to reduce the difficulty of computing approximate sections. 

\label{sec:reduction}
\begin{theorem}
  \label{thm:computing-lim-e}
  Let \(\bS\) be a feature selector on \(X\), \(\cU\) a cover of \(X\) such that \(Q^\cU\) is (weakly) connected, and \(\cF \subset X^*\).
  Fix a vertex \(\sigma_0 \in Q^\cF_0\).
  Then there exists a quiver \(Q^\prime\) 
  and a representation \(\bA^\prime_\bullet\) of \(Q'\) satisfying the following properties: 
  \begin{enumerate}
  \item The quiver $Q'$ has only one vertex and \(|Q^\cF_0|\) edges.
   \item The Laplacian of $\bA'_\bullet$ is:
  \[
    \sum_{\tau \in Q^\cU_0}{\left[\id_{\sigma_0} - \pi^\cF_{\sigma_0} \iota^\cF_\tau \pi^\cF_\tau \iota^\cF_{\sigma_0}\right]}
  \]  
      \item There exist constants \(C_1, C_2 > 0\) that depend on \(\bS, \cU\) and \(\cF\)
  such that
  \[
    C_1 \lambda_i(\Comb^{\bS, \cF}_\bullet)
    \leq \lambda_i(\bA^\prime_\bullet)
    \leq C_2 \lambda_{i + k}(\Comb^{\bS, \cF}_\bullet)
  \]
  for \(i = 1, \dots, n - k\).
  \end{enumerate}
  Here \(n\) is the total dimension of \(\Comb^{\bS, \cF}_\bullet\)
  and \(k := n - \dim{\bA^{\bS, \cF}_{\sigma_0}}\).
   
\end{theorem}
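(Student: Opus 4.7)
The plan is to first construct \(\bA'_\bullet\) on a purposely-built \(Q'\) (obtaining part (2) by direct computation), and then to derive the interlacing bounds of part (3) by reducing \(\Comb^{\bS,\cF}_\bullet\) toward \(\bA'_\bullet\) through a controlled sequence of the quiver operations of Section~\ref{sec:eigenvalues}.

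Let \(Q'\) have a single vertex \(v'\) and one self-loop \(e_\tau\) for each \(\tau \in Q^\cU_0\), and set \(\bA'_{v'} := \bS^\cF_{|\sigma_0|}\). Write \(P_\tau := \pi^\cF_{\sigma_0}\iota^\cF_\tau\pi^\cF_\tau\iota^\cF_{\sigma_0}\); since \(P_\tau = (\pi^\cF_\tau\iota^\cF_{\sigma_0})^*(\pi^\cF_\tau\iota^\cF_{\sigma_0})\) is positive semi-definite with \(\|P_\tau\| \leq 1\), the operator \(\id_{\sigma_0} - P_\tau\) admits a unique positive semi-definite square root \(R_\tau\). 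Defining the self-loop edge maps by \(\bA'_{e_\tau} := \id_{\sigma_0} - R_\tau\), the self-loop component formula from Definition~\ref{def:quivlap} yields \((\id_{\sigma_0} - \bA'_{e_\tau})^*(\id_{\sigma_0} - \bA'_{e_\tau}) = R_\tau^2 = \id_{\sigma_0} - P_\tau\), and summing over \(\tau\) recovers precisely the Laplacian claimed in part (2).

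For part (3), I would proceed in two stages. In the first stage, I apply Kron reduction (Proposition~\ref{prop:schur-complement}) to eliminate every petal vertex of the florets inside \(\widehat{Q}^\cU\). Each petal is incident to precisely two floret edges whose target-side maps are \(\id_{\cV(\cF)}\), so the scalar condition of Proposition~\ref{prop:schur-complement} holds with \(\alpha = 1\). A Schur-complement computation---using diagonal block \(2\,\id_{\cV(\cF)}\) and off-diagonal block \(-(\id + \iota^\cF_\tau\pi^\cF_\tau)\iota^\cF_\sigma\)---shows that eliminating the petal associated with an edge \(e : \sigma \to \tau\) of \(Q^\cU\) produces a self-loop at \(\sigma\) contributing \(\tfrac{1}{2}(\id_\sigma - \pi^\cF_\sigma\iota^\cF_\tau\pi^\cF_\tau\iota^\cF_\sigma)\) to the Laplacian at \(\sigma\); the same proposition simultaneously yields an interlacing \(\lambda_i(\Comb^{\bS,\cF}_\bullet) \leq \lambda_i(\cdot) \leq \lambda_{i+r_1}(\Comb^{\bS,\cF}_\bullet)\) for the intermediate representation, with \(r_1\) equal to the total dimension of the eliminated petal spaces. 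In the second stage, I collapse the remaining quiver (whose vertex set is \(Q^\cU_0\), and whose edges consist of the original edges of \(Q^\cU\) together with the newly created self-loops) down to the single vertex \(\sigma_0\). Because the edges of \(Q^\cU\) carry maps \(\pi^\cF_\tau\iota^\cF_\sigma\) that generally fail the scalar condition of Proposition~\ref{prop:schur-complement}, naive further Kron reduction is unavailable; instead I invoke Theorem~\ref{thm:trans-quiver-eigen} via a transformation \(\mu = \{\mu_\nu : \Comb^{\bS,\cF}_\nu \to \bA'_{v'} \mid \nu \in Q^\cU_0\}\) built by composing alternating \(\pi^\cF\) and \(\iota^\cF\) maps along a fixed zigzag path from \(\nu\) back to \(\sigma_0\)---such paths exist by the weak connectedness of \(Q^\cU\)---and, separately, via a reverse transformation built by trivially extending the inclusion \(\bA'_{v'} \hookrightarrow \Comb^{\bS,\cF}_{\sigma_0}\) by zero to the other vertex spaces. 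Two applications of Theorem~\ref{thm:trans-quiver-eigen} then yield matching upper and lower eigenvalue bounds, with the dimension shift \(k = n - \dim \bS^\cF_{|\sigma_0|}\) appearing as the \(\Nullity\) of the reverse transformation; the constants \(C_1, C_2\) assemble from the squared generalised condition numbers \(\kappa(\mu)^2\), the factor \(\tfrac{1}{2}\) inherited from stage one, and the defect terms provided by Theorem~\ref{thm:trans-quiver-eigen}.

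The main obstacle is this second stage: controlling the defect \(\partial(\mu)\) and the condition number \(\kappa(\mu)\) uniformly over \(\nu \in Q^\cU_0\), independently of the graph-distance from \(\sigma_0\). A single-path construction should already give workable (if potentially large) constants \(C_1, C_2\); sharpening these would likely require either averaging over all paths from \(\sigma_0\) to \(\nu\) or exploiting orthogonality structure specific to the feature-selection setting to tame the cumulative operator norms along long zigzags.
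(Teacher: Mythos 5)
Your construction for part (2) is fine: defining a single-vertex quiver with one self-loop per $\tau$ and edge map $\id_{\sigma_0}-R_\tau$, where $R_\tau$ is the positive square root of $\id_{\sigma_0}-\pi^\cF_{\sigma_0}\iota^\cF_\tau\pi^\cF_\tau\iota^\cF_{\sigma_0}$, does produce the stated Laplacian directly, and is arguably more explicit than the paper. Your stage 1 is also sound: the petal vertices carry identity target maps on both incident floret edges, so Proposition~\ref{prop:schur-complement} applies with $\alpha_v=1$, $d_v=2$, and your Schur-complement computation yielding a self-loop contribution of $\tfrac{1}{2}\bigl(\id_\sigma-\pi^\cF_\sigma\iota^\cF_\tau\pi^\cF_\tau\iota^\cF_\sigma\bigr)$ checks out (using $\pi^\cF_\tau\iota^\cF_\tau=\id$). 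This is a legitimate variant of the paper's opening moves.

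The genuine gap is stage 2. Theorem~\ref{thm:trans-quiver-eigen} compares two representations \emph{of the same quiver} via a vertex-indexed family $\tau_v:\bA_v\to\wbA_v$; it cannot be invoked for a ``transformation'' from a representation over the reduced multi-vertex quiver to a representation over a single-vertex quiver, so the zigzag-path maps $\mu_\nu$ you propose are not objects to which that theorem applies. Even if one patched this by artificially re-indexing, the theorem with nonzero defect gives a bound of the form $\lambda_i(\bA')\leq \kappa^2\lambda_{i+q}+\partial(\mu)\|\mu^+\|[\cdots]$, i.e.\ with an \emph{additive} error term; this is strictly weaker than the multiplicative interlacing claimed in part (3) and, in particular, would not force $\lambda_i(\bA')=0$ whenever $\lambda_{i+k}(\Comb^{\bS,\cF}_\bullet)=0$, so the identification of sections would be lost. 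Your intermediate representation also still carries the original edges $e_{\sigma\tau}$ of $Q^\cU$ (with maps $\pi^\cF_\tau\iota^\cF_\sigma$) and the factor $\tfrac12$ on the new self-loops, neither of which your proposed transformations absorb. The paper's route around all of this is combinatorial rather than analytic: it uses the edge-pairing homotopy of Proposition~\ref{prop:edge-swap} (with the golden-ratio constant) to rewrite every edge so that its map at each vertex $\tau\neq\sigma_0$ is $\iota^\cF_\tau$, uses Proposition~\ref{prop:remove-double-edges} together with $\pi^\cF_\tau\iota^\cF_\tau=\id$ to discard the redundant $Q^\cU$-edges, and only then Kron-reduces every vertex other than $\sigma_0$ --- each step giving a clean multiplicative or index-shift interlacing whose composition yields $C_1$ and $C_2$. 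You would need to replace your stage 2 with some section-preserving sequence of this kind; as written, the argument does not close.
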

\begin{proof}
  We view \(\Comb^{\bS, \cF}_\bullet\) as a sheaf \(\cA_\bullet\) over a quiver \(Q\), defined as follows:
  \begin{align*}
& Q_0 = \set{v_\sigma : \sigma \in Q^\cU_0} \cup \set{v_{\sigma \tau} : (\sigma \to \tau) \in Q^\cU_1}, \\
& Q_1 = \set{e_{\sigma \tau}, r^{DL}_{\sigma\tau}, r^{LD}_{\sigma\tau} : (\sigma \to \tau) \in Q^\cU_1}, \\
&s(e_{\sigma \tau}) = v_\sigma, \quad t(e_{\sigma \tau}) = v_\tau, \\
&s(r^{DL}_{\sigma \tau}) = s(r^{LD}_{\sigma \tau}) = v_{\sigma}, \quad t(r^{DL}_{\sigma \tau}) = t(r^{LD}_{\sigma \tau})= v_{\sigma \tau}.
  \end{align*}
This is precisely the merged quiver on which $\Comb^{\bS,\cF}_\bullet$ is defined, except that we have dispensed with identity self-loops in light of Proposition~\ref{prop:remove-identities}. The desired sheaf \(\cA_\bullet\) assigns the following data to vertices and edges of \(Q\) for each $\sigma \in Q^\cU_0$:
  \begin{align*}
\cA_{v_\sigma \to e_{\sigma \tau}} &= \pi^\cF_\tau \circ \iota^\cF_\sigma, &
\cA_{e_{\sigma \tau} \leftarrow v_\tau} &= \id,  &
\cA_{v_\sigma \to r^{DL}_{\sigma\tau}} &= \iota^\cF_\tau \circ \pi^\cF_\tau \circ \iota^\cF_\sigma, \\
\cA_{r^{DL}_{\sigma\tau} \leftarrow v_\sigma} &= \id, &
\cA_{v_\sigma \to r^{LD}_{\sigma\tau}} &= \iota^\cF_\sigma, &
\cA_{r^{DL}_{\sigma\tau} \leftarrow v_\sigma} &= \id.
\end{align*}
  
  Use Proposition~\ref{prop:remove-double-edges}
  to get a new copy \(e^2_{\sigma\tau}\) of the edge \(e_{\sigma\tau}\) with the maps multiplied by \(\iota^\cF_\tau\).
  Use this edge with Proposition~\ref{prop:edge-swap} to move the edge \(r^{DL}_{\sigma\tau}\) to an edge between \(v_\tau\) and \(v_{\sigma\tau}\),
  then using the edge \(r^{LD}_{\sigma\tau}\) with Proposition~\ref{prop:edge-swap} move this edge between \(v_\tau\) and \(v_{\sigma\tau}\) to
  an edge between \(v_\sigma\) and \(v_\tau\) which has maps \(\iota_\sigma\) and \(\iota_\tau\) respectively.
  Call this edge \(p_{\sigma\tau}\).
  Using Proposition~\ref{prop:remove-double-edges} can remove both \(e_{\sigma\tau}\) and \(e^2_{\sigma\tau}\) since \(\pi^\cF_\tau\)
  is the left inverse of \(\iota^\cF_\tau\).
  Finally, we can remove the edge \(r^{LD}_{\sigma\tau}\) with Proposition~\ref{prop:schur-complement}
  as \((\iota^\cF_\sigma)^* \iota^\cF_\sigma = \id\).
  Now choose some vertex \(\sigma_0\).
  As the nerve is connected, and each edge from each vertex has the same restriction maps to that vertex
  we can keep applying Proposition~\ref{prop:edge-swap} until all edges have \(\sigma_0\) as a boundary.
  There might be double edges, but these can be removed with Proposition~\ref{prop:remove-double-edges}.
  Then we apply Proposition~\ref{prop:schur-complement} to reduce to just the vertex \(\sigma_0\).
  The Laplacian is now
  \[
    \sum_{\tau \in Q^\cU_0}{\left[\id_{\sigma_0} - \pi^\cF_{\sigma_0} \iota^\cF_\tau \pi^\cF_\tau \iota^\cF_{\sigma_0}\right]}
  \]
  and the eigenvalue relationship follows by tracking all the uses of Propositions~\ref{prop:remove-identities}, \ref{prop:remove-double-edges}, \ref{prop:edge-swap},  \ref{prop:schur-complement}.
\end{proof}

\begin{theorem}
  \label{thm:computing-dual-lim-e}
  Let \(\bS\) be a feature selector on a set \(X\) equipped with a cover \(\cU\). For each subset 
  \(\cF \subset X^*\), here exists a sheaf \(\cA^\prime\) over a quiver \(Q^\prime\) with Laplacian
  \[
    (L_{\cA^\prime})_{\sigma, \tau} =
    \begin{cases}
      \sum_{\tau \subset \sigma}{(2\id_{\sigma} - \pi^\cF_\sigma\iota^\cF_\tau \pi^\cF_\tau\iota^\cF_\sigma)}
      + \sum_{\sigma \subset \gamma}{\pi^\cF_\sigma\iota^\cF_{\gamma}\pi^\cF_{\gamma}\iota^\cF_\sigma} & \text{if } \sigma = \tau, \\
      - \pi^\cF_{\sigma}\iota^\cF_{\tau} & \text{if } \tau \subset \sigma \text{ or } \sigma \subset \tau, \\
      \mbf{0} & \text{otherwise.}
    \end{cases}
  \]
  such that there is an isomorphism \(\Gamma(\cA^\prime_\bullet) \cong \Gamma(\MixedComb^{\bS, \cF}_\bullet)\); and moreover, we have
  \[
    \lambda_i(\MixedComb^{\bS, \cF}_\bullet)
    \leq \lambda_i(\cA^\prime_\bullet)
    \leq \lambda_{i + k}(\Comb^{\bS, \cF}_\bullet)
  \]
  for all \(i \in \set{1, \dots, n - k}\). Here $\MixedComb^{\bS,\cF}_\bullet$ is the mixed representation of Definition \ref{def:dual-combinedrep}, while \(n\) is its total dimension 
  and \(k := n - \sum_{\sigma \in Q_0}{\dim{\bA^{\bS, \cF}_\sigma}}\).
\end{theorem}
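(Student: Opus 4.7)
The plan is to imitate the reduction strategy of Theorem~\ref{thm:computing-lim-e}, but adapted to the mixed representation \(\MixedComb^{\bS,\cF}_\bullet = \left((\bA^{\bS,\cF})^* \cup_R \wbA^{\bS,\cF}\right)_\bullet\). The key difference from the earlier theorem is twofold: (i) the cover-quiver factor is now \((Q^\cU)^*\), whose edges are reversed and carry adjoint maps \((\bA^{\bS,\cF})^*_{\tau \to \sigma} = \pi^\cF_\sigma \iota^\cF_\tau\) as observed in \S\ref{sec:dual-reps}; and (ii) we do \emph{not} reduce all the way down to a single vertex, since we want to retain one vertex for each \(\sigma \in Q^\cU_0\) so as to recover the off-diagonal block \(-\pi^\cF_\sigma \iota^\cF_\tau\) in the asserted Laplacian.

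First I would view \(\MixedComb^{\bS,\cF}_\bullet\) as a sheaf \(\cA_\bullet\) over the merged quiver \(Q = (Q^\cU)^* \cup_R \widehat{Q}^\cU\), with a main vertex \(v_\sigma\) for each \(\sigma \in Q^\cU_0\), an auxiliary vertex \(v_{\sigma\tau}\) for each comparable pair, and the three types of edges \(e_{\tau\sigma}\) (from the dual cover quiver), \(r^{LD}_{\sigma\tau}\), and \(r^{DL}_{\sigma\tau}\) (from the floret at \(\sigma\)), exactly as in the proof of Theorem~\ref{thm:computing-lim-e} but with \(e_{\tau\sigma}\) carrying the adjoint map \(\pi^\cF_\sigma \iota^\cF_\tau\) rather than \(\pi^\cF_\tau \iota^\cF_\sigma\). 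Identity self-loops produced by the merging along \(R\) are removed up front via Proposition~\ref{prop:remove-identities}.

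Next I would run the same three-stage cleanup on each comparable pair \(\sigma \subset \tau\): duplicate \(e_{\tau\sigma}\) with Proposition~\ref{prop:remove-double-edges} and multiply the copy by \(\iota^\cF_\sigma\); then apply the homotopy of Proposition~\ref{prop:edge-swap} along \(r^{DL}_{\sigma\tau}\) and \(r^{LD}_{\sigma\tau}\) to move the auxiliary edges so that they join \(v_\sigma\) and \(v_\tau\) directly, carrying maps \(\iota^\cF_\sigma\) and \(\iota^\cF_\tau\) respectively; then remove the redundant \(e\)-edges using Proposition~\ref{prop:remove-double-edges} (since \(\pi^\cF_\sigma\) is a left inverse of \(\iota^\cF_\sigma\)) and remove the \(r^{LD}\)-edges via Proposition~\ref{prop:schur-complement}, which is valid because \((\iota^\cF_\sigma)^* \iota^\cF_\sigma = \id\). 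Unlike in Theorem~\ref{thm:computing-lim-e}, I would \emph{stop here}, leaving one vertex \(v_\sigma\) for each \(\sigma \in Q^\cU_0\) and the remaining family of edges connecting comparable pairs. Reading off the Laplacian of the resulting sheaf should yield exactly the formula claimed: each edge \(\sigma \subset \tau\) contributes the block \(-\pi^\cF_\sigma \iota^\cF_\tau\) off the diagonal, and contributes to the \((\sigma,\sigma)\) diagonal block a term that splits (depending on whether \(\sigma\) is the smaller or larger index of the pair) into \(2\id_\sigma - \pi^\cF_\sigma\iota^\cF_\tau\pi^\cF_\tau\iota^\cF_\sigma\) when \(\tau \subset \sigma\), and into \(\pi^\cF_\sigma\iota^\cF_\gamma\pi^\cF_\gamma\iota^\cF_\sigma\) when \(\sigma \subset \gamma\); summing gives the stated diagonal. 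The isomorphism \(\Gamma(\cA'_\bullet) \cong \Gamma(\MixedComb^{\bS,\cF}_\bullet)\) and the interlacing bounds on eigenvalues will follow by concatenating the bounds supplied by each of Propositions~\ref{prop:remove-identities}, \ref{prop:remove-double-edges}, \ref{prop:edge-swap}, and \ref{prop:schur-complement}.

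The main obstacle I anticipate is the bookkeeping for the diagonal block: one must carefully account for the asymmetry between the \(\tau \subset \sigma\) and \(\sigma \subset \gamma\) contributions, because after Kron-reducing the \(v_{\sigma\tau}\) vertices out, the surviving diagonal contribution at \(v_\sigma\) differs on the two sides of the relation \(\subset\) (one side retains the \(2\id - \pi\iota\pi\iota\) piece that descends from the \(r^{DL}\)-edge that had \(v_\sigma\) as its endpoint at the \emph{larger} index, while the other side produces the pure \(\pi\iota\pi\iota\) term). Verifying that these asymmetric pieces add up exactly to the formula in the statement is the delicate computational step. A secondary concern is tracking the constants; since each application of Propositions~\ref{prop:remove-double-edges}, \ref{prop:edge-swap}, and \ref{prop:schur-complement} multiplies the interlacing factor by a quantity bounded in terms of \(\|\iota^\cF_\bullet\|\), the final \(C_1, C_2\) are products of finitely many such bounded quantities, but one must check that the upper bound takes the clean form \(\lambda_{i+k}\) claimed (i.e., the index shift \(k\) coincides with the total dimension of the floret auxiliary vertices that were eliminated).
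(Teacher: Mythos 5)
Your proposal diverges from the paper's proof in a way that matters. The paper does \emph{not} rerun the edge-duplication/homotopy pipeline of Theorem~\ref{thm:computing-lim-e} here; the whole point of passing to the mixed representation is that the dual cover-quiver edges already carry the maps $\pi^\cF_\sigma\iota^\cF_\tau$ (with identity on the other end), which is exactly the off-diagonal block wanted in the target Laplacian. So the paper leaves every $e_{\sigma\tau}$ untouched and performs only two operations: removal of identity self-loops (Proposition~\ref{prop:remove-identities}) and Kron reduction of the auxiliary floret vertices $v_{\sigma\tau}$ (Proposition~\ref{prop:schur-complement}, which applies because both edges incident to $v_{\sigma\tau}$ carry the identity on the $v_{\sigma\tau}$ side, so $\alpha=1$). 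The asymmetric diagonal $\sum_{\tau\subset\sigma}(2\id_\sigma-\pi^\cF_\sigma\iota^\cF_\tau\pi^\cF_\tau\iota^\cF_\sigma)+\sum_{\sigma\subset\gamma}\pi^\cF_\sigma\iota^\cF_\gamma\pi^\cF_\gamma\iota^\cF_\sigma$ then falls out automatically: the untouched edge $e_{\sigma\tau}$ contributes $\id$ at its target and $\pi^\cF_\tau\iota^\cF_\sigma\pi^\cF_\sigma\iota^\cF_\tau$ at its source, and the self-loop created by the Kron reduction (with maps $\iota^\cF_\sigma$ and $\iota^\cF_\tau\pi^\cF_\tau\iota^\cF_\sigma$) supplies the remaining $\id_\sigma-\pi^\cF_\sigma\iota^\cF_\tau\pi^\cF_\tau\iota^\cF_\sigma$. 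The ``delicate bookkeeping'' you defer is precisely resolved by \emph{not} symmetrising the $e$-edges: once you convert them to edges carrying $(\iota^\cF_\sigma,\iota^\cF_\tau)$, each contributes $\id$ to \emph{both} endpoint diagonals, and you cannot recover the asymmetric formula in the statement.

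The second, and fatal, gap is quantitative. The theorem asserts the constant-free interlacing $\lambda_i(\MixedComb^{\bS,\cF}_\bullet)\leq\lambda_i(\cA'_\bullet)\leq\lambda_{i+k}(\cdot)$ — there are no multiplicative constants $C_1,C_2$ in this statement, unlike Theorem~\ref{thm:computing-lim-e}. The paper can assert this because Proposition~\ref{prop:remove-identities} preserves the spectrum exactly and Proposition~\ref{prop:schur-complement} gives exact interlacing with index shift $r=\sum_v\dim\cA_v$, which accounts for $k=n-\sum_\sigma\dim\bA^{\bS,\cF}_\sigma$. Your route invokes Proposition~\ref{prop:edge-swap} and Proposition~\ref{prop:remove-double-edges}, each of which degrades the bound by a multiplicative factor ($\varphi^{\pm 2}$ and $(1+\Gamma_P)^{\pm 1}$ respectively), so at best you would obtain $C_1\lambda_i\leq\lambda_i(\cA'_\bullet)\leq C_2\lambda_{i+k}$ with $C_1<1<C_2$ — your closing remark about ``tracking the constants $C_1,C_2$'' concedes exactly this. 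That is a strictly weaker conclusion than the one being proven, so the approach cannot close the argument as stated; you need to drop the homotopy/duplication steps entirely and Kron-reduce the $v_{\sigma\tau}$ directly.
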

\begin{proof}
    Consider the representation \((\bA^{\bS, \cF}_\bullet)^* \sqcup_R \widehat{\bA}^{\bS, \cF}_\bullet\)
  as defined in Definition~\ref{def:dual-combinedrep},
  and view it as a sheaf \(\cA_\bullet\) over a quiver \(Q\).
  We can immediately remove identity self-loops using Proposition~\ref{prop:remove-identities}.
  The quiver \(Q\) is as follows:
  \begin{align*}
& Q_0 = \set{v_\sigma : \sigma \in Q^\cU_0} \cup \set{v_{\sigma \tau} : (\sigma \to \tau) \in Q^\cU_1},\\
& Q_1 = \set{e_{\sigma \tau}, r^{DL}_{\sigma\tau}, r^{LD}_{\sigma\tau} : (\sigma \to \tau) \in Q^\cU_1},\\
&s(e_{\sigma \tau}) = v_\tau, \quad t(e_{\sigma \tau}) = v_\sigma,\\
&s(r^{DL}_{\sigma \tau}) = s(r^{LD}_{\sigma \tau}) = v_{\sigma}, \quad t(r^{DL}_{\sigma \tau}) = t(r^{LD}_{\sigma \tau})= v_{\sigma \tau}.
  \end{align*}
  
  The sheaf \(\cA_\bullet\) consists of the following for each \(\sigma \in Q^\cU\):
  \begin{align*}
\cA_{v_\tau \to e_{\sigma \tau}} &= \pi^\cF_\sigma \circ \iota^\cF_\tau, &
\cA_{e_{\sigma \tau} \leftarrow v_\sigma} &= \id, \\
\cA_{v_\sigma \to r^{DL}_{\sigma\tau}} &= \iota^\cF_\tau \circ \pi^\cF_\tau \circ \iota^\cF_\sigma, &
\cA_{r^{DL}_{\sigma\tau} \leftarrow v_\sigma} &= \id, \\
\cA_{v_\sigma \to r^{LD}_{\sigma\tau}} &= \iota^\cF_\sigma, 
&\cA_{r^{DL}_{\sigma\tau} \leftarrow v_\sigma} &= \id.
  \end{align*}
  We can use Proposition~\ref{prop:schur-complement} to remove each \(v_{\sigma\tau}\).
  In doing so, we add three new self-loops to \(v_\sigma\); two identities and a self loop with maps \(\iota^\cF_\sigma\) and \( \iota^\cF_\tau \circ \pi^\cF_\tau \circ \iota^\cF_\sigma\).
  The two identity self loops can be removed with Proposition~\ref{prop:remove-identities}.
\end{proof}

\subsection{Remarks on computation}
In Theorems \ref{thm:computing-lim-e} and \ref{thm:computing-dual-lim-e},
we have assumed that \(L\) is Hermitian with respect to some orthonormal basis.
Often, however, the bases we would prefer to use are not orthogonal with respect to our chosen inner product.
We will see such an example in \S\ref{sec:sc-atac},
where computing inner products is straightforward whilst finding an orthonormal basis is computationally infeasible.
It is then advantageous then to compute eigenvectors of a Laplacian without such a change of basis.
Suppose, for each \(\sigma \in \cU\), we have some basis \(\cB_\sigma\) of \(\bS^\cF_{|\sigma|}\)
and a Hermitian positive definite matrix \(M^{\bS, \cF}_\sigma\) representing the inner product in this basis.
As \(M^{\bS, \cF}_\sigma\) is positive definite there exists an invertible \(W_\sigma\) such that \(M^{\bS, \cF}_\sigma = W^*_\sigma W_\sigma\). 
Then, in this basis, we now have for \(\sigma, \tau \in \cU\)
\begin{align*}
  &\pi^\cF_\sigma\iota^\cF_\tau = W^{-*}_\sigma M^{\bS, \cF}_{\sigma \tau} W^{-1}_\tau, \\
  &\pi^\cF_\sigma \iota^\cF_\tau \pi^\cF_\tau \iota^\cF_\sigma = W^{-*}_\sigma M^{\bS, \cF}_{\sigma \tau} (M^{\bS, \cF}_{\tau})^{-1} M^{\bS, \cF}_{\tau \sigma} W^{-1}_\sigma
\end{align*}
where \(M^{\bS, \cF}_{\sigma \tau}\) is the matrix of inner products between \(\cB_\sigma\) and \(\cB_\tau\).
Now computing eigenvectors of a Laplacian \(L\) in this basis
can be realised as the generalised eigenvector problem
\[
 L x = \lambda M^{\bS, \cF}_{\cU} x
\]
where \(M^{\bS, \cF}_{\cU}\) is the block diagonal matrix with blocks \(M^{\bS, \cF}_\sigma\) for \(\sigma \in \cU\).

As the matrices \(M^{\bS, \cF}_{\tau}\) are Hermitian and positive-definite,
the products
\[M^{\bS, \cF}_{\sigma \tau} (M^{\bS, \cF}_{\tau})^{-1} M^{\bS, \cF}_{\tau \sigma}\]
can be quickly computed using the Cholesky decomposition of \(M^{\bS, \cF}_{\tau}\).
Furthermore, as \(L\) is Hermitian and usually sparse, there exist computationally efficient methods for calculating
its smallest eigenvalues and their eigenvectors,
e.g. variations on the Lanczos method~\cite{lanczos1950iteration}, combined with the Cholesky decomposition (see e.g.~\cite{hornMatrixAnalysis2017})
of \(L + \alpha \mbf{I}\) for some small \(\alpha > 0\).

\section{Single-Cell Chromatin Accessibility}
\label{sec:sc-atac}

{\em Chromatin} is a macromolecule consisting of DNA and certain proteins, primarily histones \cite{cooper2022cell}. Histones facilitate the dense compaction of DNA, which allows it to fit inside limited space within the cell nucleus. The fundamental unit of chromatin is a {\em nucleosome}, which consists of approximately 147 {\em base-pairs} (namely, the usual A, C, G and T nucleotides) wound around histone molecules. The density of nucleosomes varies considerably across the genome; low density regions are more accessible to transcription factors and other binding proteins. Thus, understanding the relative accessibility of various parts of the genome is important in the study of gene regulation and its variation across different cell types.

Among the most promising experimental frameworks for measuring chromatin accessibility is {\bf ATAC-seq} \cite{buenrostro2013transposition},
which has been extended to single cell sequencing {\bf scATAC-seq} (see e.g.~\cite{satpathyMassivelyParallelSinglecell2019}).
The output of scATAC-seq may be viewed from a mathematical perspective as follows. Consider a set $X$ of cells \(\{x_1, \dots, x_N\}\), with $N$ being on the order of thousands. Each cell $x_i$ has a genome consisting of $M$ base pairs, where $M$ is approximately three billion. The output of ATAC-seq on $X$ is an $N \times M$ integer matrix. For each $1 \leq i \leq N$ and $1 \leq j \leq M$, the entry \(\alpha_{i,j}\) is a number in \(\{0, 1, 2\}\) representing the accessibility (for cell $x_i$ at the $j$-th genomic position). Such data is high-dimensional, noisy and sparse, and hence warrants feature selection. This is accomplished via a class of techniques called {\bf peak-calling} algorithms \cite{yan2020reads}. The goal of these methods is to identify specific locations along the genome, called {\em summits}, which are significantly accessible across the cells in $X$ --- see for instance the MACS2 algorithm \cite{zhang2008model}.

\begin{remark} We are often interested in quantifying accessibility when one restricts to various sub-populations $Y \subset X$ of cell types. However, the peaks of various $Y$ may be lost when calling peaks on $X$, particularly when the cardinalities of the $X$ under consideration are unbalanced.
The standard remedy is to first cluster the data and then call peaks on each cluster \cite{satpathyMassivelyParallelSinglecell2019}. Subsequently, peaks from all the clusters are combined, with various rules beings applied for grouping and extending peaks which overlap. We show below how the quiver Laplacian provides a different mechanism for achieving similar results.
\end{remark}

\subsection{Pre-processing the data}\label{ssec:preproc}

We use the single-cell ATAC-seq data of \cite{satpathyMassivelyParallelSinglecell2019}, which consist of 28,274 tumour-infiltrating T cells. The pre-processing steps are the same as those of ibid:
\begin{enumerate}
\item We align fragments to the \textsc{GRCh37/hg19} assembly.
\item We filter sample profiles to keep those with at least 1,000 fragments and a transcription start site (TSS) enrichment score of 8.
\item We create a count matrix by tiling the genome with 2,500 base-pair windows and counting overlaps of endpoints of fragments with the windows.
\item Next, we binarise this matrix and retain only the top 20,000 most accessible sites; call the resulting $28,724 \times 20,000$ boolean matrix $B$. 
\item We then create a frequency-inverse document frequency matrix, apply 
truncated singular value decomposition to obtain the 2nd to 25th singular vectors.
This results in a reduced matrix $E$ of size $28724 \times 24$.
\end{enumerate}

\subsection{Generating the quiver}\label{ssec:makequiv}

We build an undirected graph $G$ from the reduced matrix $E$ as follows. Its vertices are the cells $C_i$, which correspond to rows of $E$. Treating each such row as a vector in $\R^{24}$,  let $\delta_{i,j} \geq 0$ be the Euclidean distance between rows $i$ and $j$ for each $i < j$. We construct the 15 nearest neighbour graph generated by the $\delta_{i,j}$ distances. Our analysis now deviates from \cite{satpathyMassivelyParallelSinglecell2019}. Using the Leiden algorithm~\cite{traag2019louvain} the vertex set $V$ of $G$ is partitioned into finitely many communities, $V = \coprod_i V_i$. As in ibid, communities of size $<$ 200 are excluded. We then generate a cover $\cU$ of $X$ from the surviving communities as follows: each cover element $U_i \in \cU$ has the form $V_i \cup W_i$, where $W_i \subset (V - V_i)$ consists of those vertices which share an edge with some vertex of $V_i$. The resulting quiver $Q^\cU$ has
 87 vertices, and is depicted in Figure~\ref{fig:results-overview}A.
Similarly, Figure~\ref{fig:results-overview}B displays the contents of each vertex relative to the overall distribution,
using the cell types identified in~\cite{satpathyMassivelyParallelSinglecell2019}.

\subsection{Building the representation}\label{ssec:buildrep}

Having constructed $Q^\cU$, we apply our feature selector $\bS$ -- which in this case is the MACS2 peak-calling algorithm \cite{zhang2008model} -- to the fragment profiles of cells corresponding to each individual vertex. In the notation of Section \ref{sec:feature-selection}, the input $\cF$ is given by the columns of the binarised matrix $B$; explicitly, the map $f_j:X \to \R$ associated to the $j$-th column of $E$ sends each $x_i$ to the entry $B_{ij} \in \set{0,1}$. Calling peaks on this set of features produces a vector space $\bS_{|\sigma|} \subset \cV(\cF)$ for every vertex $\sigma$ of $Q^\cU$. We used the same parameters as in~\cite{satpathyMassivelyParallelSinglecell2019}, and obtained several hundred thousand summits for each vertex as a result. For the purposes of this demonstration, we retained only the most significant 1,000 summits per vertex. Thus the total dimension of the selected vector spaces is 87,000. Since peak-calling on different sub-populations may change the true summit location on the genome slightly, we follow \cite{yuSingleCellATACseqAnalysis2022} and use a Gaussian kernel between summits (with bandwidth equal to 1,000 base-pairs) as the inner product on $\cV(\cF)$. This allows us to build the mixed representation $\MixedComb^{\bS,\cF}_\bullet$ (see Definition \ref{def:dual-combinedrep}).

\subsection{Results}\label{ssec:peakres}

We computed the first 3,000 eigenvectors of the Laplacian of $\MixedComb^{\bS,\cF}_\bullet$. Since the total space admits a distinguished basis whose vectors correspond to summits (i.e., certain genomic positions $\set{p_k}$), we may associate to each eigenvector $v = \sum_{k}\alpha_kp_k$ the subset of supported genomic positions  
$P_v := \set{k \mid \alpha_k \neq 0}$\footnote{Or more precisely, $|\alpha_k| > 10^{-5}$, the tolerance parameter for \textsc{ARPACK-NG}~\cite{lehoucq2023arpack} in our experiment.}.
Figure~\ref{fig:results-overview}D shows that for an overwhelming majority of eigenvectors $v$, 
the diameter of $P_v$ for most $v$ approximately equals 147 base-pairs, which is the average length of DNA around a nucleosome. A small fraction of the eigenspaces had dimension $> 1$; this resulted in a mixing of genomic positions, as shown by the blue outliers in Figure~\ref{fig:results-overview}D. Nevertheless, in almost all cases, we observed that {\bf the genomic supports of eigenvectors are tightly aligned across the vertices} of $Q^\cU$. 

We then performed a more detailed analysis of the sparsity pattern of the 3,000 eigenvectors. The outcome of this analysis has been summarised in 
Figure~\ref{fig:results-overview}E, which plots the 3,000 eigenvectors against all 87 vertices of $Q^\cU$ and highlights those vertices along which a given eigenvector has a non-zero component. We observe that the eigenvectors fall broadly into two classes. The first class, which is considerably larger, consists of those  
{\bf eigenvectors whose genomic support contains summits which are shared by most, if not all vertices}: see, for instance, the components of eigenvector 11 in Figure~\ref{fig:eigenvector-comparison}. The second class contains {\bf eigenvectors whose genomic support contains summits which are shared by a small fraction of the vertices}. A prime example of this type is eigenvector 2702, which has nonzero components along only 3 of the 87 vertices, and has also been depicted in Figure~\ref{fig:eigenvector-comparison}. 

Finally, we note that (at least for these two examples) {\bf the size of the support of the eigenvector relates to biological function}. Eigenvector 11, for instance,  is concentrated in a regulatory region a few hundred base pairs downstream of the gene \textit{FAM72D}. This gene is involved in
the cell cycle \cite{fischer2016integration}, and is therefore relevant to most subpopulations of cells.
On the other hand, eigenvector 2702 lies just downstream of the gene \textit{STAT3}, which is a transcription factor
known to be an important regulator
for differentiation and proliferation in
Th1, Th17, and Treg cells \cite{rebe2019stat3}. We also note that this eigenvector 
was only supported on the vertices 
8, (8, 11), and 11 (Figure~\ref{fig:eigenvector-comparison}). These vertices
are particularly enriched in Th1, Th17 and Treg cells (see Figure~\ref{fig:results-overview}B).
Indeed, Figure~\ref{fig:results-overview}E
reveals that
 many other eigenvectors 
are also uniquely supported on the vertices
8, (8, 11), 11 and (1, 11).
Since vertex 11 is relatively
disconnected from the other vertices,
our analysis highlights it as an unusual sub-population of $X$.

\begin{figure}[ht]
  \caption{
  Overview of Laplacian from~\ref{thm:computing-dual-lim-e} applied
  to tumour-infiltrating T cells.
  (A) Quiver constructed from cover,
  with vertex size corresponding to the cardinality of the covering set.
  (B) Log likelihood of each cell type label
  within each vertex.
  (C) First 3,000 eigenvalues.
  (D) Range of genomic positions on each chromosome
  corresponding to non-zero components of the first 3,000 eigenvectors.
  Blue points correspond to positions across multiple chromosomes.
  (E) Binary matrix corresponding to the existence
  of a non-zero component in each vertex
  for the first 3,000 eigenvectors.}
  \label{fig:results-overview}
  \includegraphics[height=0.8\textheight]{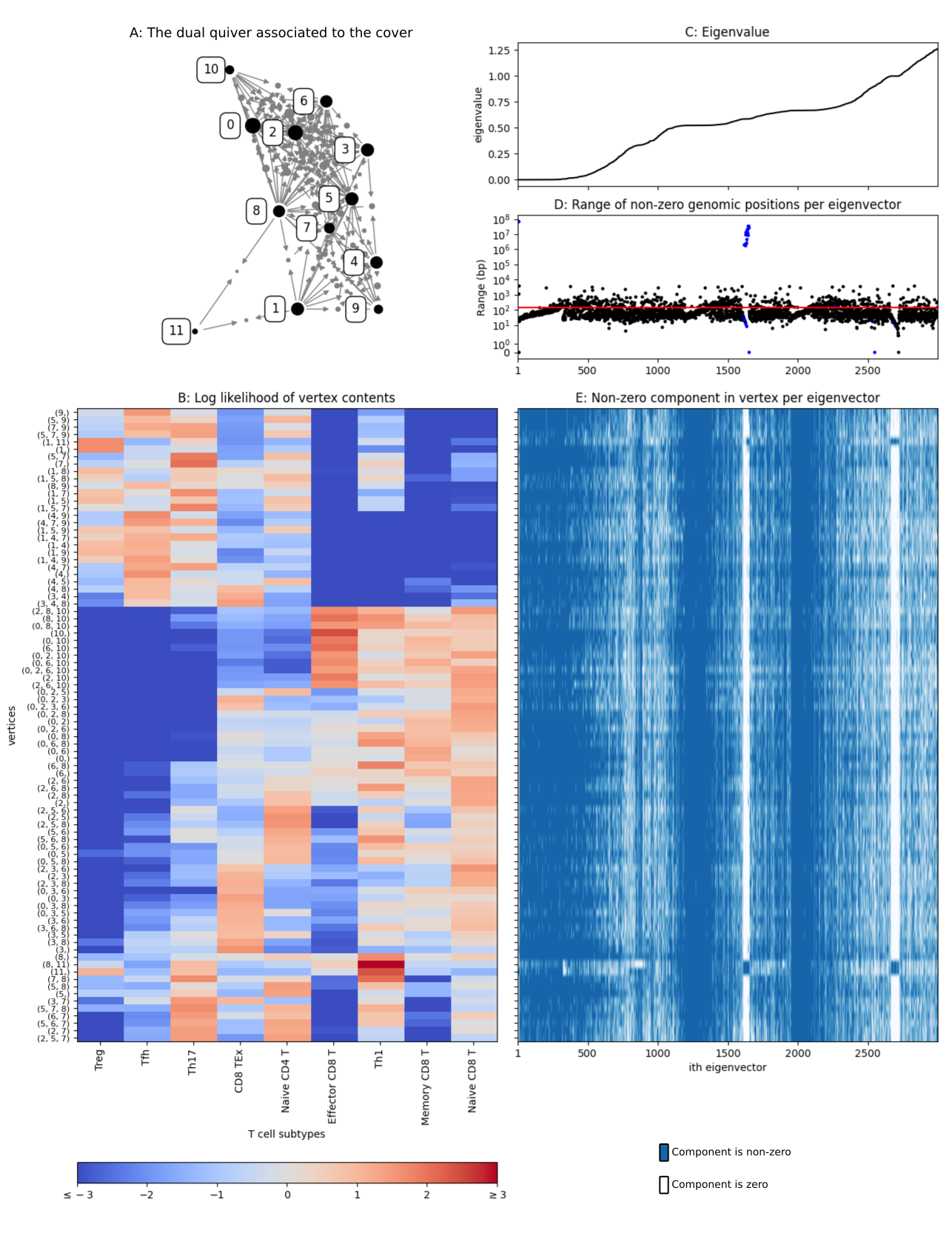}
  \centering
\end{figure}

\begin{figure}[ht]
  \caption{Examples of eigenvectors.
  Eigenvector \#11 has a non-zero component at each vertex,
  whilst eigenvector \#2702 has only non-zero components in vertices (8,), (11,) and (8, 11).}
  \label{fig:eigenvector-comparison}
  \includegraphics[height=.8\textheight]{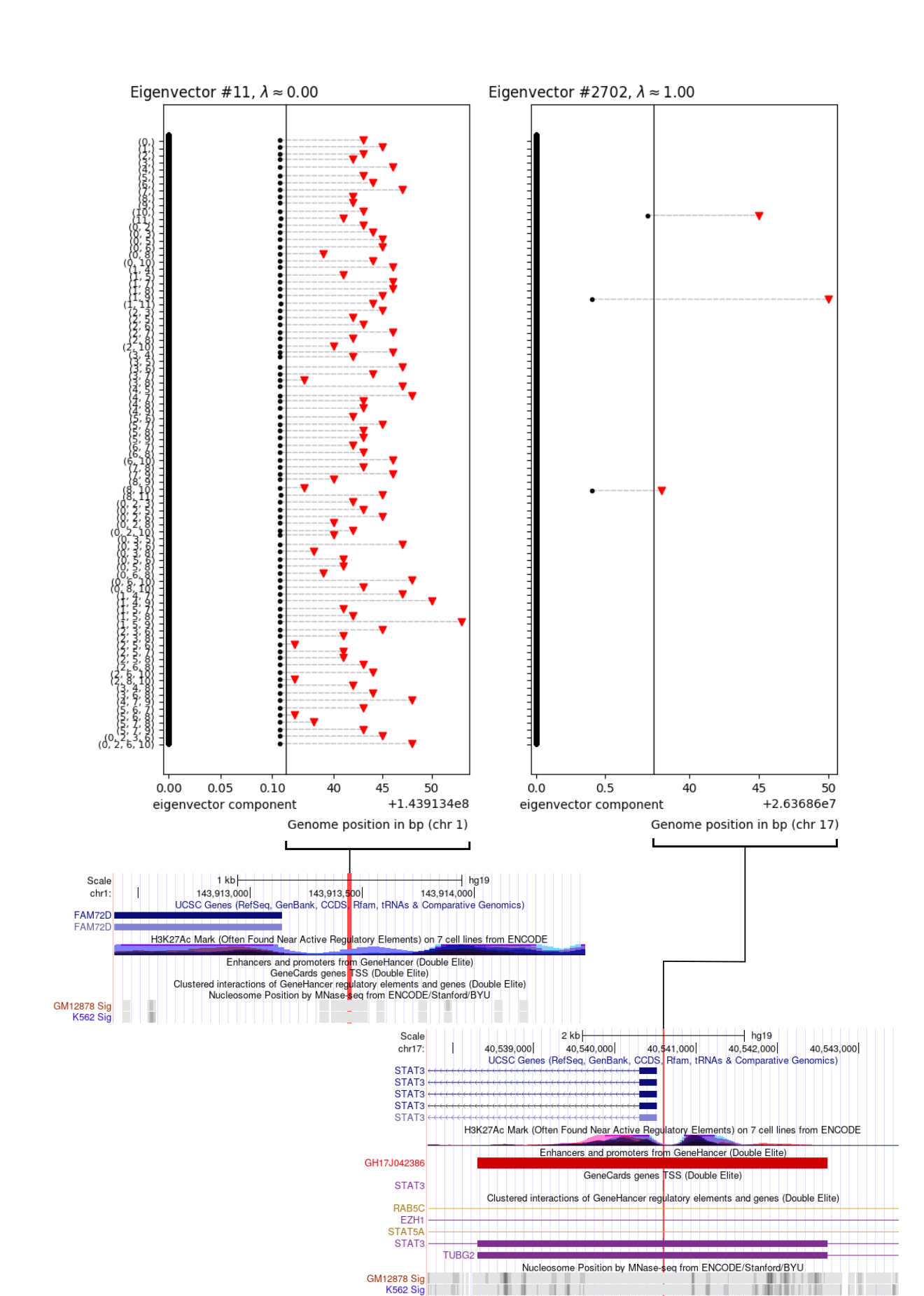}
  \centering
\end{figure}

\section*{Code and data availability}
Code produced for the analysis will be 
available at 
\newline
\centerline{
\href{https://github.com/osumray/harmonic_feature_selection}{\url{https://github.com/osumray/harmonic_feature_selection}}.}
The data from~\cite{satpathyMassivelyParallelSinglecell2019} is
available from the Gene Expression Omnibus with accession number \href{https://www.ncbi.nlm.nih.gov/geo/query/acc.cgi?acc=GSE129785}{GSE129785}.

\appendix
\section{Removing Edges and Vertices}
\label{sec:additional-proofs}

The following extends known results from graphs, e.g. see~\cite{van1995hamilton} and~\cite{wuDeletingVerticesInterlacing2010} respectively.
\begin{proposition}
  Let \(\cA_\bullet\) be a sheaf over a quiver \(Q\),
  and let \(n\) be its total dimension.
  \label{prop:non-section-preserving}
  \begin{enumerate}[label={\theproposition(\alph*)}]
  \item \label{prop:removing-edges}
    Let \(R \subset Q_1\) be a subset of edges of \(Q\).
    Let \(Q^\prime\) be the quiver with edges \(Q_1 \setminus R\)
    and \(\cA^\prime\) the restriction of \(\cA\) to \(Q^\prime\).
    Then
    \[
      \lambda_{k + i}(\cA_\bullet) \leq \lambda_{k + i}(\cA_\bullet^\prime)
      \leq \lambda_{k + r + i}(\cA_\bullet)
    \]
    for \(i = 1, \dots, n - k - r\)
    where \(k = \dim \Gamma(\cA_\bullet)\) and
    \(r = \sum_{e \in R}{\dim \cA_{e}}\).
    \item \label{prop:removing-vertices}
  Suppose \(V \subset Q_0\) is a subset of vertices of \(Q\).
  Let \(Q^\prime\) be the quiver with vertices \(Q_0 \setminus V\)
  and all edges \(e \in Q_1\) such that either \(s(e) \in V\) or \(t(e) \in V\),
  and \(\cA_\bullet^\prime\) the restriction of \(\cA_\bullet\) to \(Q^\prime\).
  Then
  \[
    \lambda_i(\cA_\bullet) - w_1 \leq \lambda_i(\cA^\prime_\bullet) \leq \lambda_{k + i}(\cA_\bullet) - w_2
  \]
  for \(i = 1, \dots, n - k\)
  where \(k = \sum_{u \in Q_0 \setminus V}{\dim \cA_u}\)
  and
  \begin{align*}
    w_1 &= \max_{v \in V}\Big\{
      \sum_{\substack{s(e) = v \\ t(e) \notin V}}\sigma_{\max}(\cA_{v \to e})^2
  + \sum_{\substack{t(e) = v \\ s(e) \notin V}}{\sigma_{\max}(\cA_{e \leftarrow v})^2}\Big\}, \\
 w_2 &= \min_{v \in V}\Big\{ \sum_{\substack{s(e) = v \\ t(e) \notin V}}\sigma_{\min}(\cA_{v \to e})^2
  + \sum_{\substack{t(e) = v \\ s(e) \notin V}}{\sigma_{\min}(\cA_{e \leftarrow v})^2}\Big\}.
  \end{align*}
  \end{enumerate}
\end{proposition}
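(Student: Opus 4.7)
The plan is to reduce each claim to a classical spectral interlacing theorem: part (a) to a Weyl-type inequality for a low-rank positive semidefinite perturbation of Laplacians, and part (b) to Cauchy's principal-submatrix interlacing combined with a further Weyl bound.

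For part (a), I would split the target space as $\Tar(\cA_\bullet) = \Tar(\cA'_\bullet) \oplus \bigoplus_{e \in R} \cA_e$. Under this decomposition the boundary operator splits into blocks
\[
B_\cA \;=\; \begin{pmatrix} B_{\cA'} \\ C \end{pmatrix},
\]
where $C$ is the boundary contribution of the removed edges; squaring yields $L_\cA = L_{\cA'} + C^*C$. Thus $L_\cA$ differs from $L_{\cA'}$ by a positive semidefinite operator of rank at most $r = \sum_{e \in R} \dim \cA_e$. Applying Courant--Fischer to $L_{\cA'}$ and intersecting a test subspace of dimension $i+r$ with $\ker C$ (which has codimension at most $r$) produces a two-sided interlacing between the spectra of $L_\cA$ and $L_{\cA'}$. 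Because removing edges only weakens the section constraints, one has the inclusion $\Gamma(\cA_\bullet) \subseteq \Gamma(\cA'_\bullet)$, so both Laplacians vanish on at least $k = \dim \Gamma(\cA_\bullet)$ dimensions; re-indexing by $i \mapsto k + i$ then extracts the nontrivial content in the form stated.

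For part (b), I would embed $\Tot(\cA'_\bullet)$ as the coordinate subspace $\prod_{u \notin V} \cA_u$ inside $\Tot(\cA_\bullet)$. For $x$ supported on $Q_0 \setminus V$, splitting the Dirichlet-energy sum according to how many endpoints of each edge lie in $V$ gives
\[
\mca{E}_\cA(x) \;=\; \mca{E}_{\cA'}(x) + \langle x, D x \rangle,
\]
where $D$ is the block-diagonal positive semidefinite operator whose $(u,u)$-block
\[
D_{uu} \;=\; \sum_{\substack{s(e) = u \\ t(e) \in V}} \cA_{u \to e}^{\,*}\, \cA_{u \to e} \;+\; \sum_{\substack{t(e) = u \\ s(e) \in V}} \cA_{e \leftarrow u}^{\,*}\, \cA_{e \leftarrow u}
\]
records the contribution of the semi-boundary edges lost when passing to $Q'$. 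Hence $L_\cA \rvert_{\Tot(\cA'_\bullet)} = L_{\cA'} + D$. Chaining Cauchy's principal-submatrix interlacing (in codimension equal to the removed total dimension) with the Weyl bound $\lambda_i(L_{\cA'}) + \lambda_{\min}(D) \leq \lambda_i(L_\cA\rvert_{\Tot(\cA'_\bullet)}) \leq \lambda_i(L_{\cA'}) + \|D\|$ for the PSD perturbation $D$ produces the desired two-sided estimate on $\lambda_i(\cA'_\bullet)$.

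The main bookkeeping obstacle is bounding $D$ uniformly by $w_1 \cdot \id$ from above and by $w_2 \cdot \id$ from below. Since $D$ is block-diagonal, $\|D\| = \max_u \|D_{uu}\|$ and $\lambda_{\min}(D) = \min_u \lambda_{\min}(D_{uu})$, and each block is a sum of operators of the form $A^* A$ whose extreme eigenvalues are $\sigma_{\max}(A)^2$ and $\sigma_{\min}(A)^2$. Grouping the contributions in $D_{uu}$ by the endpoint of each semi-boundary edge that lies in $V$, and applying the elementary operator-norm inequality $\bigl\|\sum_i A_i^* A_i\bigr\| \leq \sum_i \sigma_{\max}(A_i)^2$ together with its matching lower bound for $\lambda_{\min}$, produces precisely the sums appearing in the definitions of $w_1$ and $w_2$, completing the argument after reconciling the valid range of indices.
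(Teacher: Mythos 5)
Your part (a) has a genuine gap. The decomposition $L_{\cA} = L_{\cA'} + C^*C$ is correct, but the interlacing it produces points the wrong way: since $C^*C$ is positive semidefinite of rank at most $r$, the Courant--Fischer/Weyl argument you sketch yields
\[
\lambda_j(\cA'_\bullet) \;\le\; \lambda_j(\cA_\bullet) \;\le\; \lambda_{j+r}(\cA'_\bullet),
\]
i.e.\ an upper bound $\lambda_j(\cA'_\bullet)\le\lambda_j(\cA_\bullet)$ and only the weaker lower bound $\lambda_j(\cA'_\bullet)\ge\lambda_{j-r}(\cA_\bullet)$. The proposition asserts $\lambda_{k+i}(\cA_\bullet)\le\lambda_{k+i}(\cA'_\bullet)$ at the \emph{same} index, and re-indexing by the common kernel does not reverse an inequality; indeed your monotonicity bound together with the stated lower bound would force $\lambda_{k+i}(\cA_\bullet)=\lambda_{k+i}(\cA'_\bullet)$, which fails for the path quiver $u\to v\to w$ with identity maps after deleting one edge (there $\lambda_2(\cA_\bullet)=1$ while $\lambda_2(\cA'_\bullet)=0$; the middle index should really be $\dim\Gamma(\cA'_\bullet)+i$). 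The paper's proof uses a different mechanism that you are missing: it passes to the opposite Laplacian $L_{\cA,\op}=B_\cA B_\cA^*$ on $\Tar(\cA_\bullet)$, where deleting the edges of $R$ is literally a principal-submatrix operation of codimension $r$, applies Cauchy interlacing there, and transports back through the equality of the nonzero spectra of $B^*B$ and $BB^*$ (this is where the shift by $k=\dim\ker L_\cA$ enters). That passage to the edge-space Laplacian is what produces the lower bound in the stated direction; your vertex-space perturbation argument cannot.

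Your part (b) is essentially the paper's argument: the compression of $L_\cA$ to $\prod_{u\notin V}\cA_u$ equals $L_{\cA'}+D$ with $D=B_{21}^*B_{21}$ block diagonal, and one chains Cauchy interlacing for the compression with Weyl's inequality for the perturbation $D$. The remaining soft spot is your final identification of $\|D\|$ and $\lambda_{\min}(D)$ with $w_1$ and $w_2$. The blocks $D_{uu}$ are indexed by the \emph{retained} vertices $u$ and are built from the maps $\cA_{u\to e}$, $\cA_{e\leftarrow u}$ out of $u$, whereas $w_1,w_2$ are extrema over $v\in V$ of sums built from the maps out of the \emph{deleted} endpoint $v$; moreover $\|D_{uu}\|$ is controlled by a sum over all $v\in V$ adjacent to $u$, not by a maximum over $v$, and $\lambda_{\min}(D)$ vanishes as soon as one retained vertex has no semi-boundary edge, regardless of $w_2$. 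So "grouping by the $V$-endpoint" does not literally produce the stated constants. (The paper's own proof contains the same mismatch, so this reflects an inconsistency in the statement rather than a defect special to your route, but as written your last step asserts an identification that does not hold.)
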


\begin{proof}[Proof of Proposition~\ref{prop:removing-edges}]
  Define \(L_{\cA, \op} = B_{\cA} B_{\cA}^*\).
  Let \(k = \dim \ker L_{\cA}\) and \(c = \dim \ker L_{\cA, \op}\).
  Then for \(i = 1, \dots, n - k\),
  we have that
  \[
    \lambda_{k + i}(L_{\cA}) = \lambda_{c + i}(L_{\cA, \op}).
  \]
  Now, removing the set of edges \(R \subset Q_1\)
  corresponds to the $r \times r$ principal submatrix of \(L_{\cA, \op}\), where \(r := \sum_{e \in R}{\dim \cA_{e}}\). By Theorem~\ref{thm:cauchy-interlacing} we have
  \[
    \lambda_i(L_{\cA, \op}) \leq \lambda_i(L_{\cA^\prime, \op}) \leq \lambda_{i + r}(L_{\cA, \op})
  \]
  where \(i \in \set{1, \dots, m - r}\) thus
  \[
    \lambda_{k + i}(\cA_\bullet) \leq \lambda_{k + i}(\cA^\prime_\bullet) \leq \lambda_{k + r + i}(\cA_\bullet)
  \]
  for \(i \in \set{1, \dots, n - k - r}\).
\end{proof}

\begin{proof}[Proof of Proposition~\ref{prop:removing-vertices}]
  The boundary matrix \(B_{\cA}\) has block form
  \[
  \begin{bmatrix}
    B_{11} & \mbf{0} \\
    B_{21} & B_{22}
  \end{bmatrix}
  \]
  where \(B_{11} = B_{\cA^\prime}\).
  Here, \(B_{\cA}\) has row blocks indexed by edges between vertices in \(V\) and column blocks indexed by vertices in \(V\).
  Then  Laplacian \(L_{\cA} = B_{\cA}^* B_{\cA}\) has block form
  \[
    \begin{bmatrix}
      B_{11}^* B_{11} + B_{21}^*B_{21} & B_{21}^* B_{22} \\
      B_{22}^* B_{21} & B_{22}^* B_{22}
    \end{bmatrix}.
  \]
  Define \(L_{Q^0 \setminus V}=  B_{11}^* B_{11} + B_{21}^*B_{21}\) which is a principal submatrix of \(L_{\cA}\).
  Note that \(B_{21}^*B_{21}\) is a diagonal matrix:
  The \(v, v\) block is
  \[
    \sum_{\substack{s(e) = v \\ t(e) \neq V}}{\cA_{v \to e}^*\cA_{v \to e}}
    + \sum_{\substack{t(e) = v \\ s(e) \neq V}}{\cA_{e \leftarrow v}^*\cA_{e \leftarrow v}}.
  \]
  
We now have that
\begin{align*}
  \lambda_i(\cA^\prime_\bullet) &= \lambda_i(B_{11}^* B_{11}) = \lambda_i(L_{Q_0 \setminus V} - B_{21}^*B_{21}) 
                                              \leq \lambda_i(L_{Q_0 \setminus V}) + \lambda_{n - k}(-B_{21}^* B_{21}) \\
                                              &= \lambda_i(L_{Q_0 \setminus V}) - \lambda_{1}(B_{21}^* B_{21}) \\
                                              &= \lambda_i(L_{Q_0 \setminus V}) - \min_{v \in V}{\lambda_{1}\Big(
      \sum_{\substack{s(e) = v \\ t(e) \neq V}}{\cA_{v \to e}^*\cA_{v \to e}}
  + \sum_{\substack{t(e) = v \\ s(e) \neq V}}{\cA_{e \leftarrow v}^*\cA_{e \leftarrow v}}\Big)}\\
                                              &\leq \lambda_i(L_{Q_0 \setminus V}) - \min_{v \in V}\Big\{{
      \sum_{\substack{s(e) = v \\ t(e) \neq V}}\lambda_{1}({\cA_{v \to e}^*\cA_{v \to e}})
  + \sum_{\substack{t(e) = v \\ s(e) \neq V}}{\cA_{e \leftarrow v}^*\cA_{e \leftarrow v}}}\Big\}\\
                                                &= \lambda_i(L_{Q_0 \setminus V}) - \min_{v \in V}\Big\{{
      \sum_{\substack{ s(e) = v \\ t(e) \neq V}}\sigma_{\min}(\cA_{v \to e})^2
  + \sum_{\substack{t(e) = v \\ s(e) \neq V}}{\sigma_{\min}(\cA_{e \leftarrow v})^2}\Big\}}\\
                                              &\leq \lambda_{i + k}(L_{\cA}) - \min_{v \in V}\Big\{{
      \sum_{\substack{s(e) = v \\ t(e) \neq V}}\sigma_{\min}(\cA_{v \to e})^2
  + \sum_{\substack{t(e) = v \\ s(e) \neq V}}{\sigma_{\min}(\cA_{e \leftarrow v})^2}\Big\}} .
\end{align*}
The first two inequalities are from Theorem~\ref{thm:weyl-inequalities} and the final inequality is due to Theorem~\ref{thm:cauchy-interlacing}.
The right inequality is proven similarly.
\end{proof}
\section{Classical Results on Eigenvalues of Hermitian Matrices}

In the following, for an \(n \times n\) Hermitian matrix \(A\),
let \(\lambda_1(A), \dots, \lambda_n(A)\) be the eigenvalues
ordered in increasing fashion.

\begin{theorem}[Rayleigh, see~\cite{hornMatrixAnalysis2017} Theorem 4.2.2]
  \label{thm:rayleigh}
  Let \(A\) be a Hermitian matrix.
  Then for integers \(1 \leq i_1, \dots, i_k \leq n\)
  let \(x_{i_1}, \dots, x_{i_k}\) be orthonormal,
  where \(x_{i_j}\) is an eigenvector corresponding to \(\lambda_{i_j}(A)\).
  Define \(S = \Span{\set{x_{i_1}, \dots, x_{i_k}}}\).
  Then for any \(x \in S\)
  \[
    \lambda_{i_1}(A) \leq \frac{x^* A x}{\|x\|^2} \leq \lambda_{i_k}(A).
  \]
\end{theorem}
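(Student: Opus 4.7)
The plan is to expand an arbitrary vector $x \in S$ in the given orthonormal eigenbasis of $A$ restricted to $S$, then observe that both the numerator $x^*Ax$ and denominator $\|x\|^2$ of the Rayleigh quotient take particularly clean forms, exhibiting the quotient as a convex combination of the eigenvalues $\lambda_{i_1}(A), \ldots, \lambda_{i_k}(A)$.

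More concretely, any $x \in S$ may be written uniquely as $x = \sum_{j=1}^{k} c_j \, x_{i_j}$ for scalars $c_j \in \mathbb{F}$. Using the orthonormality of the $x_{i_j}$, I would compute
\[
\|x\|^2 = \sum_{j=1}^{k} |c_j|^2,
\]
and using the eigenvector relation $A x_{i_j} = \lambda_{i_j}(A) \, x_{i_j}$ together with orthonormality,
\[
x^* A x = \sum_{j=1}^{k} |c_j|^2 \, \lambda_{i_j}(A).
\]
The Rayleigh quotient is therefore the weighted average
\[
\frac{x^* A x}{\|x\|^2} = \sum_{j=1}^{k} \frac{|c_j|^2}{\sum_{l} |c_l|^2} \, \lambda_{i_j}(A),
\]
with nonnegative weights summing to $1$. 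Since the eigenvalues are ordered $\lambda_{i_1}(A) \leq \cdots \leq \lambda_{i_k}(A)$, any such convex combination lies between $\lambda_{i_1}(A)$ and $\lambda_{i_k}(A)$, which is the desired inequality.

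There is essentially no obstacle here beyond being careful that orthonormality is what kills all cross terms in both the numerator and denominator, and that the indices $i_1, \ldots, i_k$ need not be consecutive (but this is harmless since we only invoke the ordering of the listed eigenvalues, not of the full spectrum). The one implicit fact used is that a Hermitian $A$ admits a complete orthonormal eigenbasis with real eigenvalues, which justifies speaking of $\lambda_{i_j}(A)$ and the eigenvectors $x_{i_j}$ unambiguously; this is the spectral theorem and may be cited directly from the reference \cite{hornMatrixAnalysis2017}.
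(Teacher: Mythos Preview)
Your proof is correct and is the standard argument for this classical result. The paper itself does not supply a proof of this theorem; it is stated in the appendix as a classical fact with a reference to \cite{hornMatrixAnalysis2017}, so there is nothing to compare against beyond noting that your argument is essentially the one found in that reference.
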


\begin{theorem}[Courant-Fischer, see~\cite{hornMatrixAnalysis2017} Theorem 4.2.6]
  \label{thm:courant-fischer}
  Let A be Hermitian matrix.
  Then
  \[
    \lambda_k(A) = \min_{\dim{V} = k}\left[\max_{x \in V \setminus 0} \frac{\langle x, Ax \rangle}{\langle x, x \rangle} \right].
  \]
\end{theorem}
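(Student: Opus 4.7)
The plan is to prove both inequalities in the variational characterisation separately, using the spectral theorem for Hermitian matrices (which supplies a full orthonormal eigenbasis) and Theorem~\ref{thm:rayleigh} as black boxes. Specifically, I would first invoke the spectral theorem to fix an orthonormal eigenbasis $v_1, \ldots, v_n$ of $A$ with $A v_i = \lambda_i(A) v_i$ and $\lambda_1(A) \leq \cdots \leq \lambda_n(A)$; with this setup in place, the rest of the argument reduces to choosing two suitable test subspaces and performing a dimension count.

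For the upper bound $\lambda_k(A) \geq \min_{\dim V = k} \max_{x \in V \setminus 0} \frac{\langle x, Ax\rangle}{\langle x, x\rangle}$, I would exhibit a single witnessing subspace. Take $V_\star := \Span\{v_1, \ldots, v_k\}$, which has dimension exactly $k$. Applying Theorem~\ref{thm:rayleigh} with the index set $\set{1, \ldots, k}$ gives $\frac{\langle x, Ax\rangle}{\langle x, x\rangle} \leq \lambda_k(A)$ for every nonzero $x \in V_\star$, with equality attained at $x = v_k$. Thus the maximum over $V_\star$ equals $\lambda_k(A)$, forcing the infimum over all $k$-dimensional subspaces to be at most $\lambda_k(A)$.

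For the reverse inequality $\lambda_k(A) \leq \min_{\dim V = k} \max_{x \in V \setminus 0} \frac{\langle x, Ax\rangle}{\langle x, x\rangle}$, I would argue by contradiction-free dimension counting. Fix an arbitrary $k$-dimensional subspace $V$, and introduce the comparison subspace $W := \Span\{v_k, v_{k+1}, \ldots, v_n\}$, which has dimension $n - k + 1$. Since $\dim V + \dim W = n + 1 > n$, the intersection $V \cap W$ contains a nonzero vector $x$. Because $x \in W$, Theorem~\ref{thm:rayleigh} applied with the index set $\set{k, \ldots, n}$ yields $\frac{\langle x, Ax\rangle}{\langle x, x\rangle} \geq \lambda_k(A)$. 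Since $x \in V$ as well, the maximum of the Rayleigh quotient over $V$ is at least $\lambda_k(A)$, and this holds for every $k$-dimensional $V$.

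Combining the two bounds gives the equality asserted in the theorem, and it follows from the argument that both the outer infimum (over $V$) and the inner supremum (over $x$) are attained, so we may write $\min$ and $\max$ as stated. There is no serious obstacle here: the only slightly delicate point is the choice of the complementary subspace $W$ of dimension $n - k + 1$ (rather than $n - k$) so that the pigeonhole dimension count forces a nontrivial intersection, and correctly aligning the Rayleigh bound with the eigenvalue index $k$ in both directions.
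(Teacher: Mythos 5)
Your proof is correct: it is the standard min--max argument, choosing $\Span\{v_1,\dots,v_k\}$ as the witnessing subspace for one inequality and intersecting an arbitrary $k$-dimensional $V$ with $\Span\{v_k,\dots,v_n\}$ (dimension $n-k+1$, so the intersection is nontrivial) for the other, with Theorem~\ref{thm:rayleigh} supplying the two Rayleigh-quotient bounds. The paper itself gives no proof of this statement --- it is quoted as a classical result from \cite{hornMatrixAnalysis2017} --- and your argument is essentially the textbook proof found there, so there is nothing to reconcile.
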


\begin{theorem}[Cauchy's interlacing theorem, see~\cite{hornMatrixAnalysis2017} Theorem 4.3.17]
  \label{thm:cauchy-interlacing}
  Let \(A\) be a Hermitian matrix of size \(n \times n\)
  and \(B\) an \(m \times m\) principal submatrix of \(A\).
  Then
  \[
    \lambda_i(A) \leq \lambda_i(B) \leq \lambda_{i + n - m}(A)
  \]
  for \(i = 1, \dots, m\).
\end{theorem}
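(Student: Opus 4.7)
The plan is to derive both inequalities from the Courant--Fischer characterization (Theorem~\ref{thm:courant-fischer}), which has already been recorded. The key geometric input is that an \(m \times m\) principal submatrix \(B\) of \(A\) arises by picking a subset \(I \subset \{1,\dots,n\}\) of size \(m\) and taking the compression of \(A\) to the coordinate subspace \(W := \operatorname{span}\{e_j : j \in I\} \subset \mathbb{C}^n\). More precisely, if \(\iota : \mathbb{C}^m \hookrightarrow \mathbb{C}^n\) denotes the inclusion that sends the standard basis of \(\mathbb{C}^m\) to the indexed basis of \(W\), then \(B = \iota^* A \iota\), \(\iota\) is an isometry, and hence for every \(y \in \mathbb{C}^m\) we have \(\|\iota y\| = \|y\|\) and \(\langle y, By\rangle = \langle \iota y, A\iota y\rangle\). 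So the Rayleigh quotients of \(B\) are exactly the Rayleigh quotients of \(A\) restricted to vectors in \(W\).

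For the lower bound \(\lambda_i(A) \leq \lambda_i(B)\), I would restrict the outer minimum in Courant--Fischer. Every \(i\)-dimensional subspace \(V \subset \mathbb{C}^m\) yields via \(\iota\) an \(i\)-dimensional subspace \(\iota(V) \subset \mathbb{C}^n\) with identical Rayleigh quotients, so
\[
\lambda_i(A) \;=\; \min_{\substack{V' \subset \mathbb{C}^n \\ \dim V' = i}} \max_{x \in V' \setminus 0} \frac{\langle x, Ax\rangle}{\|x\|^2} \;\leq\; \min_{\substack{V \subset \mathbb{C}^m \\ \dim V = i}} \max_{y \in V \setminus 0} \frac{\langle y, By\rangle}{\|y\|^2} \;=\; \lambda_i(B).
\]

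For the upper bound \(\lambda_i(B) \leq \lambda_{i + n - m}(A)\), I would pair Courant--Fischer with a dimension-count. Let \(x_1,\dots,x_n\) be an orthonormal eigenbasis of \(A\) ordered by increasing eigenvalue, and set \(U := \operatorname{span}\{x_1,\dots,x_{i+n-m}\}\). Since \(\dim U + \dim W = (i+n-m) + m = n + i\), the subspace \(U \cap W\) has dimension at least \(i\). Pulling back along \(\iota\), we obtain an \(i\)-dimensional subspace \(\iota^{-1}(U \cap W) \subset \mathbb{C}^m\). Courant--Fischer applied to \(B\) with this subspace gives
\[
\lambda_i(B) \;\leq\; \max_{y \in \iota^{-1}(U \cap W) \setminus 0} \frac{\langle y, By\rangle}{\|y\|^2} \;=\; \max_{x \in (U \cap W) \setminus 0} \frac{\langle x, Ax\rangle}{\|x\|^2} \;\leq\; \lambda_{i+n-m}(A),
\]
where the last inequality uses Theorem~\ref{thm:rayleigh}: any \(x \in U\) is a linear combination of \(x_1,\dots,x_{i+n-m}\), so its Rayleigh quotient for \(A\) is bounded above by \(\lambda_{i+n-m}(A)\).

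The argument is essentially mechanical once the identification \(B = \iota^* A \iota\) is made; the only non-routine step is the dimension count \(\dim(U \cap W) \geq i\) in the upper bound, which is where the shift by \(n-m\) in the index enters. I would expect no serious obstacle beyond keeping the index arithmetic straight; in particular the Hermiticity of \(A\) and the isometry property of \(\iota\) are used only to ensure that Rayleigh quotients transport cleanly between the two spaces.
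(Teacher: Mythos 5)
Your proof is correct: the identification \(B = \iota^* A \iota\) with \(\iota\) an isometry, the restriction of the Courant--Fischer minimum for the lower bound, and the dimension count \(\dim(U \cap W) \geq i\) for the upper bound are all sound (the only cosmetic point is that \(\iota^{-1}(U\cap W)\) may have dimension strictly greater than \(i\), in which case one passes to an \(i\)-dimensional subspace of it, which only weakens the maximum). The paper itself gives no proof of this statement --- it is quoted as a classical result with a citation to Horn and Johnson --- and your argument is precisely the standard Courant--Fischer proof found in that reference, so there is nothing to compare beyond noting agreement.
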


\begin{theorem}[Weyl's inequalities, see~\cite{hornMatrixAnalysis2017} Theorem 4.3.1]
  \label{thm:weyl-inequalities}
  For \(A, B\) Hermitian matrices of size \(n \times n\)
  the following inequalities hold:
  \[
    \lambda_i(A + B) \leq \lambda_{i + j}(A) + \lambda_{n - j}(B)
  \]s
  for \(j = 0, \dots, n - 1\) and
  \[
    \lambda_i(A + B) \geq \lambda_{i - j + 1}(A) + \lambda_{j}(B)
  \]
  for \(j = 1, \dots, i\).
\end{theorem}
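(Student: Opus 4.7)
The plan is to derive both inequalities from the Courant--Fischer min-max characterization (Theorem~\ref{thm:courant-fischer}), with the lower bound being obtained from the upper bound by a sign-flip trick. The only non-trivial ingredient is a dimension-counting argument that produces an $i$-dimensional subspace on which the Rayleigh quotients of $A$ and $B$ are simultaneously controlled.

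For the upper bound, fix admissible $i, j$ (so that $1 \leq i \leq n$ and $0 \leq j \leq n-i$, so every index appearing below is in range). Let $\set{a_k}_{k=1}^n$ and $\set{b_k}_{k=1}^n$ be orthonormal eigenbases of $A$ and $B$ respectively, indexed so that eigenvalues are non-decreasing. Define
\[
V_A := \Span\set{a_1, \dots, a_{i+j}} \quad \text{and} \quad V_B := \Span\set{b_1, \dots, b_{n-j}}.
\]
Because $\dim V_A + \dim V_B = (i+j) + (n-j) = n + i$, the inclusion-exclusion bound forces $\dim(V_A \cap V_B) \geq i$, so we may choose an $i$-dimensional subspace $V \subset V_A \cap V_B$. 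Any unit vector $x \in V$ lies in $V_A$, so Theorem~\ref{thm:rayleigh} gives $\ip{x, Ax} \leq \lambda_{i+j}(A)$; the same reasoning applied to $V_B$ yields $\ip{x, Bx} \leq \lambda_{n-j}(B)$. Summing these bounds and then invoking Theorem~\ref{thm:courant-fischer} with the $i$-dimensional subspace $V$ produces
\[
\lambda_i(A+B) \;\leq\; \max_{x \in V \setminus 0} \frac{\ip{x, (A+B)x}}{\ip{x,x}} \;\leq\; \lambda_{i+j}(A) + \lambda_{n-j}(B),
\]
which is the first of Weyl's inequalities.

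For the lower bound, I would apply the upper inequality to the pair of Hermitian matrices $(-A, -B)$ and use the elementary identity $\lambda_k(-M) = -\lambda_{n-k+1}(M)$ (which is immediate, since the eigenvalues of $-M$ are the negatives of those of $M$, then reordered increasingly). Substituting into the upper bound for $-(A+B)$ at index $n-i+1$ with shift parameter $j-1$, and rearranging signs, yields exactly
\[
\lambda_i(A+B) \;\geq\; \lambda_{i-j+1}(A) + \lambda_j(B)
\]
for $1 \leq j \leq i$. There is no real obstacle here: the substantive content is the one-line dimension count that produces $V$, and everything else is bookkeeping. If one wished to avoid the duality argument altogether, the lower bound could instead be obtained directly from the dual max-min form of Courant--Fischer by the symmetric construction, taking $V_A$ to be spanned by the last $n-i+j$ eigenvectors of $A$ and $V_B$ by the last $j$ eigenvectors of $B$.
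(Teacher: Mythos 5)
Your proof is correct: the dimension count $\dim(V_A\cap V_B)\ge (i+j)+(n-j)-n=i$, the use of Rayleigh's theorem on $V_A$ and $V_B$, the application of Courant--Fischer to the resulting $i$-dimensional subspace, and the index bookkeeping in the sign-flip $\lambda_k(-M)=-\lambda_{n-k+1}(M)$ all check out. The paper does not prove this statement at all --- it is quoted from Horn and Johnson (Theorem 4.3.1) as a classical fact --- and your argument is precisely the standard textbook proof found there, so there is nothing further to compare.
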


\begin{theorem}[See \cite{smithInterlacingPropertiesSchur1992} Theorem 5]
  \label{thm:schur-interlacing}
  For \(H\) a Hermitian matrix of dimension \(n\) given in block form as
  \[
    H = 
    \begin{bmatrix}
      A & B^* \\
      B & D \\
    \end{bmatrix}
  \]
  where \(D\) is non-singular
  let \(H / D = A - B D^{-1} B^*\) be the Schur complement of D.
  Then the eigenvalues of \(H\) and \(H / D\) are interlaced:
  \[
    \lambda_{i}(H) \leq \lambda_{i}(H / D) \leq \lambda_{i + r}(H)
  \]
  for \(i = 1, \dots, n - r\)
  where \(r\) is the rank of \(D\).
\end{theorem}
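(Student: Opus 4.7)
The plan is to invoke the Courant-Fischer min-max characterization (Theorem~\ref{thm:courant-fischer}) together with the block-congruence $P^{*}HP = \mathrm{diag}(H/D, D)$, where $P := \begin{bmatrix} I & 0 \\ -D^{-1}B & I \end{bmatrix}$. The engine of the argument is the identity that, for any $x \in \F^{n-r}$, the lifted vector $v := (x, -D^{-1}Bx) \in \F^n$ satisfies
\[
\langle v, H v\rangle = \langle x, (H/D)x\rangle \quad \text{and} \quad \|v\|^2 = \|x\|^2 + \|D^{-1}Bx\|^2 \geq \|x\|^2.
\]
Both displayed equalities follow from a direct block computation using $H[x; -D^{-1}Bx] = [(H/D)x;\, 0]$.

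For the lower bound $\lambda_i(H) \leq \lambda_i(H/D)$, I would pick an $i$-dimensional subspace $S \subset \F^{n-r}$ optimizing the min-max formula for $H/D$, and lift it via $x \mapsto (x, -D^{-1}Bx)$ to an $i$-dimensional subspace $S' \subset \F^n$. Since $H$ is positive semidefinite, the Schur complement $H/D$ inherits positive semidefiniteness via the block-congruence, so $\langle x, (H/D)x\rangle \geq 0$. Dividing this non-negative numerator by the larger denominator $\|v\|^2 \geq \|x\|^2$ can only shrink the Rayleigh quotient, giving $R_H(v) \leq R_{H/D}(x) \leq \lambda_i(H/D)$ for every $v \in S'$. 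Courant-Fischer applied to $H$ with the witness subspace $S'$ then delivers the desired inequality.

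For the upper bound $\lambda_i(H/D) \leq \lambda_{i+r}(H)$, I would pivot to the dual max-min form of Courant-Fischer and exploit the identity $A = (H/D) + B^{*}D^{-1}B$, in which the second summand is positive semidefinite; hence $\langle x, Ax\rangle \geq \langle x, (H/D)x\rangle$ for all $x$. Choosing an $(n-r-i+1)$-dimensional $U \subset \F^{n-r}$ optimizing the max-min formula for $H/D$ and embedding it as $V := U \times \set{0} \subset \F^n$ (of dimension $n-(i+r)+1$), I would observe that $R_H(v) = R_A(x) \geq R_{H/D}(x) \geq \lambda_i(H/D)$ for every $v = (x, 0) \in V$. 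The max-min formula for $H$ then yields $\lambda_{i+r}(H) \geq \lambda_i(H/D)$.

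The main obstacle is the positive semidefiniteness of $H$, which is not stated explicitly in the theorem but is essential for the lower-bound step: if $\langle x, (H/D)x\rangle$ were allowed to be negative, the inequality $R_H(v) \leq R_{H/D}(x)$ would reverse (since dividing a negative number by a larger positive denominator makes it less negative), and small indefinite $2 \times 2$ counterexamples exhibiting this failure are easy to construct. Fortunately the theorem is invoked only inside the proof of Proposition~\ref{prop:schur-complement}, where $H$ is a quiver-sheaf Laplacian (hence PSD) and $D$ is a positive-definite principal block by the hypothesis $\alpha_v > 0$, so the argument runs through cleanly in the intended application.
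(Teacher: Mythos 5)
Your proof is sound, and it is worth noting at the outset that the paper offers no argument for this statement at all: it sits in the appendix of classical facts with a citation to Smith's Theorem~5, so there is no internal proof to compare against. Your self-contained route via the congruence \(P^{*}HP=\mathrm{diag}(H/D,\,D)\) and the lift \(x\mapsto(x,-D^{-1}Bx)\) checks out in every step: the identity \(H(x,-D^{-1}Bx)=((H/D)x,\,0)\) is correct, the lifted subspace has dimension \(i\) because the lift is injective, and the upper bound via \(A=(H/D)+B^{*}D^{-1}B\succeq H/D\) combined with the max--min form of Courant--Fischer (equivalently, Weyl's inequality, Theorem~\ref{thm:weyl-inequalities}, plus Cauchy interlacing, Theorem~\ref{thm:cauchy-interlacing}, both already in the appendix) is clean. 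Your closing observation is the most valuable part of the submission: the statement as printed is genuinely false without positive semidefiniteness of \(H\). For instance \(A=-2\), \(B=1\), \(D=1\) gives \(H/D=-3\) while \(\lambda_1(H)=\tfrac{-1-\sqrt{13}}{2}\approx-2.30\), so even positive definiteness of \(D\) does not rescue the lower bound when \(H\) is indefinite; and with \(D\) merely nonsingular (e.g.\ \(D=-1\)) the upper bound fails too. Positive semidefiniteness of \(H\) is exactly the missing hypothesis, and you are right that it holds in the sole application (Proposition~\ref{prop:schur-complement}), where \(H=L_{\cA}=B_{\cA}^{*}B_{\cA}\). One point you leave implicit but should state: \(H\succeq 0\) together with nonsingularity of the principal block \(D\) forces \(D\succ 0\), which is what makes \(B^{*}D^{-1}B\) positive semidefinite in your upper-bound step. (Cosmetically, the paper's formula \(H/D=A-BD^{-1}B^{*}\) is a typo whose blocks do not compose; you have silently used the correct \(A-B^{*}D^{-1}B\).)
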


\bibliographystyle{abbrv}
\bibliography{main}

\begin{thebibliography}{10}

\bibitem{buenrostro2013transposition}
J.~D. Buenrostro, P.~G. Giresi, L.~C. Zaba, H.~Y. Chang, and W.~J. Greenleaf.
\newblock Transposition of native chromatin for fast and sensitive epigenomic
  profiling of open chromatin, {{DNA}}-binding proteins and nucleosome
  position.
\newblock {\em Nature methods}, 10(12):1213--1218, 2013.

\bibitem{chung1992laplacian}
F.~R. Chung.
\newblock The {{Laplacian}} of a hypergraph.
\newblock {\em Expanding graphs}, 10:21--36, 1992.

\bibitem{chung1997spectral}
F.~R. Chung.
\newblock {\em Spectral graph theory}, volume~92.
\newblock American Mathematical Soc., 1997.

\bibitem{cooper2022cell}
G.~Cooper and K.~Adams.
\newblock {\em The cell: a molecular approach}.
\newblock Oxford University Press, 2022.

\bibitem{cgn}
J.~Curry, R.~Ghrist, and V.~Nanda.
\newblock Discrete {M}orse theory for computing cellular sheaf cohomology.
\newblock {\em Foundations of Computational Mathematics}, 16:875--897, 2016.

\bibitem{curry2014sheaves}
J.~M. Curry.
\newblock {\em Sheaves, cosheaves and applications}.
\newblock University of Pennsylvania, 2014.

\bibitem{dorfler2010synchronization}
F.~Dorfler and F.~Bullo.
\newblock Synchronization of power networks: Network reduction and effective
  resistance.
\newblock {\em IFAC Proceedings Volumes}, 43(19):197--202, 2010.

\bibitem{eilenberg2015foundations}
S.~Eilenberg and N.~Steenrod.
\newblock {\em Foundations of algebraic topology}, volume 2193.
\newblock Princeton University Press, 2015.

\bibitem{fischer2016integration}
M.~Fischer, P.~Grossmann, M.~Padi, and J.~A. DeCaprio.
\newblock Integration of {{TP53}}, {{DREAM}}, {{MMB-FOXM1}} and {{RB-E2F}}
  target gene analyses identifies cell cycle gene regulatory networks.
\newblock {\em Nucleic acids research}, 44(13):6070--6086, 2016.

\bibitem{gao2021geometry}
T.~Gao, J.~Brodzki, and S.~Mukherjee.
\newblock The geometry of synchronization problems and learning group actions.
\newblock {\em Discrete \& Computational Geometry}, 65(1):150--211, 2021.

\bibitem{ghrist2020cellular}
R.~Ghrist and H.~Riess.
\newblock Cellular sheaves of lattices and the {{Tarski}} {{Laplacian}}.
\newblock {\em arXiv preprint arXiv:2007.04099}, 2020.

\bibitem{goldbergCombinatorialLaplaciansSimplicial2002}
T.~E. Goldberg.
\newblock {\em Combinatorial {{Laplacians}} of {{Simplicial Complexes}}}.
\newblock Senior thesis, {Bard College}, 2002.

\bibitem{golubMatrixComputations1996}
G.~H. Golub and C.~F. Van~Loan.
\newblock {\em Matrix Computations}.
\newblock Johns {{Hopkins}} Studies in the Mathematical Sciences. {Johns
  Hopkins University Press}, 3rd ed edition, 1996.

\bibitem{gu2015spectral}
J.~Gu, B.~Hua, and S.~Liu.
\newblock Spectral distances on graphs.
\newblock {\em Discrete Applied Mathematics}, 190:56--74, 2015.

\bibitem{guyon2002RFE}
I.~Guyon, J.~Weston, S.~Barnhill, and V.~Vapnik.
\newblock Gene selection for cancer classification using support vector
  machines.
\newblock {\em Machine learning}, 46:389--422, 2002.

\bibitem{hansenSpectralTheoryCellular2019}
J.~Hansen and R.~Ghrist.
\newblock Toward a {{Spectral Theory}} of {{Cellular Sheaves}}.
\newblock {\em Journal of Applied and Computational Topology}, 3(4), 2019.

\bibitem{hirani2003discrete}
A.~N. Hirani.
\newblock {\em Discrete exterior calculus}.
\newblock California Institute of Technology, 2003.

\bibitem{hornMatrixAnalysis2017}
R.~A. Horn and C.~R. Johnson.
\newblock {\em Matrix Analysis}.
\newblock {Cambridge University Press}, second edition, corrected reprint
  edition, 2017.

\bibitem{pcabook}
I.~T. Jolliffe.
\newblock {\em Principal Components Analysis, 2nd Ed.}
\newblock Springer, 2002.

\bibitem{joslyn2020sheaf}
C.~A. Joslyn, L.~Charles, C.~DePerno, N.~Gould, K.~Nowak, B.~Praggastis,
  E.~Purvine, M.~Robinson, J.~Strules, and P.~Whitney.
\newblock A sheaf theoretical approach to uncertainty quantification of
  heterogeneous geolocation information.
\newblock {\em Sensors}, 20(12):3418, 2020.

\bibitem{lanczos1950iteration}
C.~Lanczos.
\newblock An iteration method for the solution of the eigenvalue problem of
  linear differential and integral operators.
\newblock {\em United States Governm. Press Office Los Angeles, CA}, 1950.

\bibitem{lehoucq2023arpack}
R.~Lehoucq, K.~Maschhoff, D.~Sorensen, C.~Yang, F.~Houssen, S.~Ledru, et~al.
\newblock {{ARPACK-NG}}: Large scale eigenvalue problem solver.
\newblock {\em Astrophysics Source Code Library}, pages ascl--2306, 2023.

\bibitem{mikolov2013word2vec}
T.~Mikolov, K.~Chen, G.~Corrado, and J.~Dean.
\newblock Efficient estimation of word representations in vector space.
\newblock {\em arXiv preprint arXiv:1301.3781}, 2013.

\bibitem{muller}
T.~Muller, A.~Seigal, and V.~Nanda.
\newblock Multilinear hyperquiver representations.
\newblock {\em arXiv:2305.05622v2 [math.AG]}, 2023.

\bibitem{parada2020quiver}
A.~Parada-Mayorga, H.~Riess, A.~Ribeiro, and R.~Ghrist.
\newblock Quiver signal processing ({{QSP}}).
\newblock {\em arXiv preprint arXiv:2010.11525}, 2020.

\bibitem{pearsonPCA}
K.~{Pearson F.R.S.}
\newblock On lines and planes of closest fit to systems of points in space.
\newblock {\em The London, Edinburgh, and Dublin Philosophical Magazine and
  Journal of Science}, 2(11):559--572, 1901.

\bibitem{rebe2019stat3}
C.~R{\'e}b{\'e} and F.~Ghiringhelli.
\newblock {{STAT3}}, a master regulator of anti-tumor immune response.
\newblock {\em Cancers}, 11(9):1280, 2019.

\bibitem{satpathyMassivelyParallelSinglecell2019}
A.~T. Satpathy, J.~M. Granja, K.~E. Yost, Y.~Qi, F.~Meschi, G.~P. McDermott,
  B.~N. Olsen, M.~R. Mumbach, S.~E. Pierce, M.~R. Corces, P.~Shah, J.~C. Bell,
  D.~Jhutty, C.~M. Nemec, J.~Wang, L.~Wang, Y.~Yin, P.~G. Giresi, A.~L.~S.
  Chang, G.~X.~Y. Zheng, W.~J. Greenleaf, and H.~Y. Chang.
\newblock Massively parallel single-cell chromatin landscapes of human immune
  cell development and intratumoral {{T}} cell exhaustion.
\newblock {\em Nature Biotechnology}, 37(8):925--936, 2019.

\bibitem{schiffler}
R.~Schiffler.
\newblock {\em Quiver Representations}.
\newblock Number 184 in CMS Books in Mathematics. Springer, 2014.

\bibitem{seigalPrincipalComponentsQuiver2022}
A.~Seigal, H.~A. Harrington, and V.~Nanda.
\newblock Principal components along quiver representations.
\newblock {\em Foundations of Computational Mathematics}, 23(4):1129--1165,
  2023.

\bibitem{smithInterlacingPropertiesSchur1992}
R.~L. Smith.
\newblock Some interlacing properties of the {{Schur}} complement of a
  {{Hermitian}} matrix.
\newblock {\em Linear Algebra and its Applications}, 177:137--144, 1992.

\bibitem{tibshirani1996LASSO}
R.~Tibshirani.
\newblock Regression shrinkage and selection via the lasso.
\newblock {\em Journal of the Royal Statistical Society Series B: Statistical
  Methodology}, 58(1):267--288, 1996.

\bibitem{traag2019louvain}
V.~A. Traag, L.~Waltman, and N.~J. Van~Eck.
\newblock From {{Louvain}} to {{Leiden}}: guaranteeing well-connected
  communities.
\newblock {\em Scientific reports}, 9(1):5233, 2019.

\bibitem{van1995hamilton}
J.~Van Den~Heuvel.
\newblock Hamilton cycles and eigenvalues of graphs.
\newblock {\em Linear algebra and its applications}, 226:723--730, 1995.

\bibitem{wuDeletingVerticesInterlacing2010}
B.~Wu, J.~Shao, and X.~Yuan.
\newblock Deleting vertices and interlacing {{Laplacian}} eigenvalues.
\newblock {\em Chinese Annals of Mathematics, Series B}, 31(2):231--236, 2010.

\bibitem{yan2020reads}
F.~Yan, D.~R. Powell, D.~J. Curtis, and N.~C. Wong.
\newblock From reads to insight: a hitchhiker’s guide to {{ATAC}}-seq data
  analysis.
\newblock {\em Genome biology}, 21:1--16, 2020.

\bibitem{ye2016schubert}
K.~Ye and L.-H. Lim.
\newblock Schubert varieties and distances between subspaces of different
  dimensions.
\newblock {\em SIAM Journal on Matrix Analysis and Applications},
  37(3):1176--1197, 2016.

\bibitem{yuSingleCellATACseqAnalysis2022}
J.~Yu, D.~Sun, Z.~Hou, and L.-Y. Wu.
\newblock Single-cell {{ATAC}}-seq analysis via network refinement with peaks
  location information.
\newblock {\em bioRxiv}, page 2022.11.18.517159, 2022.

\bibitem{zhang2008model}
Y.~Zhang, T.~Liu, C.~A. Meyer, J.~Eeckhoute, D.~S. Johnson, B.~E. Bernstein,
  C.~Nusbaum, R.~M. Myers, M.~Brown, W.~Li, et~al.
\newblock Model-based analysis of {{ChIP-Seq}} ({{MACS}}).
\newblock {\em Genome biology}, 9(9):1--9, 2008.

\end{thebibliography}

\end{document}